\g@addto@macro\normalsize{%
  \setlength{\abovedisplayskip}{4pt plus 1pt minus 1pt}%
  \setlength{\belowdisplayskip}{4pt plus 1pt minus 1pt}%
  \setlength{\abovedisplayshortskip}{2pt plus 1pt minus 1pt}%
  \setlength{\belowdisplayshortskip}{2pt plus 1pt minus 1pt}%
}
\pgfplotsset{compat=1.18}
\newcommand{\TT}{\texttt{Tiki-Taka}}
\newcommand{\AnalogSGD}{\texttt{Analog\;SGD}}
\newcommand{\DigitalSGD}{\texttt{Digital\;SGD}}
\newcommand{\AnalogUpdate}{\texttt{Analog\;Update}}
\newcommand{\ResidualLearning}{\texttt{Residual\allowbreak\;Learning}}
\newtheorem{assumption}{\hspace{0pt}\bf Assumption}
\newtheorem{lemma}{\hspace{0pt}\bf Lemma}
\newtheorem{theorem}{\hspace{0pt}\bf Theorem}
\newtheorem{remark}{\hspace{0pt}\bf Remark}
\newtheorem{definition}{\hspace{0pt}\bf Definition}
\title{Analog In-memory Training on General Non-ideal Resistive Elements: The Impact of Response Functions}
\author{%
    Zhaoxian Wu$^{*}$, Quan Xiao$^{*}$, Tayfun Gokmen$^{\sharp}$, Omobayode Fagbohungbe$^{\sharp}$,  
    {Tianyi Chen}$^{\dagger}$\thanks{The work was done when the authors were at Rensselaer Polytechnic Institute. The work was supported by IBM through the IBM-Rensselaer Future of Computing Research Collaboration, the National Science Foundation Projects 2401297, 2532349, and 2532653, and by the Cisco Research Award.}
     \\
    $^{*}$Cornell University, New York, NY \\
    $^{\sharp}$IBM T. J. Watson Research Center, Yorktown Heights, NY \\
    $^{\dagger}$Rensselaer Polytechnic Institute, Troy, NY \\
    \texttt{\{zw868, qx232\}@cornell.edu}, \\
    \texttt{\{tgokmen, Omobayode.Fagbohungbe\}@us.ibm.com}, \\
    \texttt{tianyi.chen@cornell.edu} \\
}
\begin{document}

\doparttoc %
\faketableofcontents %

\maketitle

\begin{abstract}
    As the economic and environmental costs of training and deploying large vision or language models increase dramatically, analog in-memory computing (AIMC) emerges as a promising energy-efficient solution. 
    However, the training perspective, especially its training dynamics, is underexplored. 
    In AIMC hardware, the trainable weights are represented by the conductance of resistive elements and updated using consecutive electrical pulses.
    While the conductance changes by a constant in response to each pulse, in reality, the change is scaled by asymmetric and non-linear \textit{response functions}, leading to a non-ideal training dynamics.
    This paper provides a theoretical foundation for gradient-based training on AIMC hardware with non-ideal response functions. 
    We demonstrate that asymmetric response functions negatively impact \texttt{Analog\;SGD} by imposing an implicit penalty on the objective. 
    To address the issue, we propose \texttt{residual\;learning} algorithm, which provably converges exactly to a critical point by solving a bilevel optimization problem.
    We demonstrate that the proposed method can be extended to address other hardware imperfections, such as limited response granularity. 
    As we know, it is the first paper to investigate the impact of a class of generic non-ideal response functions.
    The conclusion is supported by simulations validating our theoretical insights.
\end{abstract}

\section{Introduction}
\label{section:introduction}
The remarkable success of large vision and language models is underpinned by advances in modern hardware accelerators, such as GPU, TPU \citep{jouppi2023tpu}, NPU \citep{esmaeilzadeh2012neural}, and NorthPole chip \citep{modha2023neural}. However, the computational demands of training these models are staggering. For instance, training LLaMA \citep{touvron2023llama} cost \$2.4 million, while training GPT-3 \citep{brown2020language} required \$4.6 million, highlighting the urgent need for more efficient computing hardware. Current mainstream hardware relies on the Von Neumann architecture, in which the physical separation of memory and processing units creates a bottleneck due to frequent, costly data movement between them.

In this context, the industry has turned its attention to \emph{analog in-memory computing (AIMC) accelerators} based on resistive crossbar arrays \citep{chen2013comprehensive, haensch2019next, sze2017efficient, Y2020sebastianNatNano, le202364}, which excel at accelerating ubiquitous, computationally intensive \acp{MVM} operations.  In AIMC hardware, the weights (matrices) are represented by the conductance states of the \emph{resistive elements} in analog crossbar arrays \citep{burr2017neuromorphic, yang2013memristive}, while the input and output of MVM are analog signals like voltage and current. Leveraging Kirchhoff's and Ohm's laws, AIMC hardware achieves 10$\times$-10,000$\times$ energy efficiency than GPU \citep{jain2019neural,cosemans2019towards,papistas202122} in model inference.

\begin{figure*}[t]
    \centering
    \includegraphics[height=4cm]{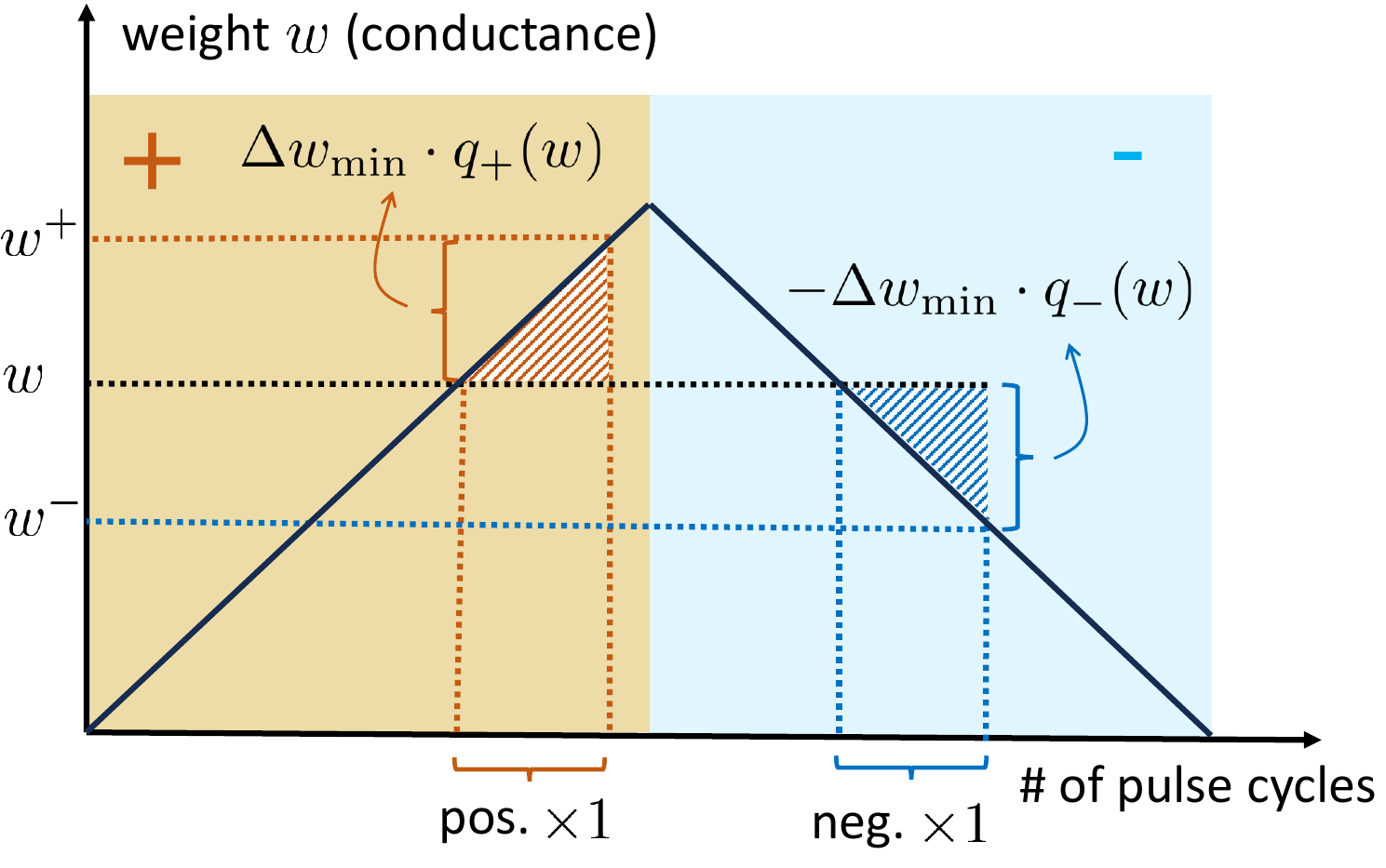}
    \hfil
    \includegraphics[height=4cm]{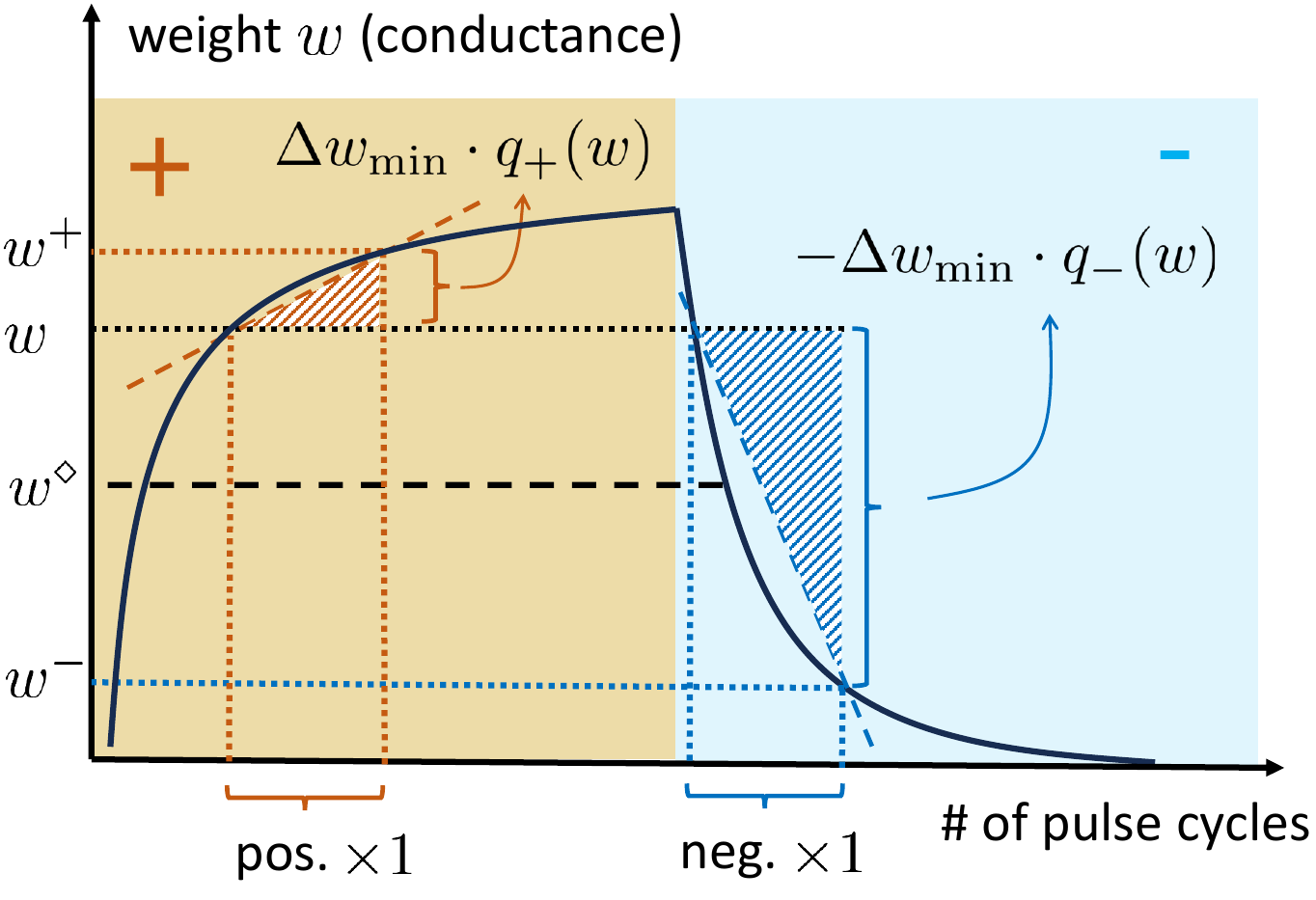}
    \vspace{-0.3em}
    \caption{
        The weight's response curve. Positive and negative pulses are fired continuously on the left and right halves, respectively. One pulse is fired per cycle. 
        Given $w$, the weight becomes $w^+$ or $w^-$ after one positive and negative pulse, respectively. 
        The response factors $q_+(w)$ and $q_-(w)$ are approximately the slope of the curve at $w$, and $\Delta w_{\min}$ is the response granularity.
        \textbf{(Left)} Ideal response functions $q_+(w)\equiv q_-(w)\equiv 1$. Every point is a symmetric point.
        \textbf{(Right)} Asymmetric response functions $q_{+}(w) \ne q_{-}(w)$ almost everywhere expect for the symmetric point $w^\diamond$.
    }
    \label{fig:pulse-update}
    \vspace{-1.5\baselineskip}
\end{figure*}
Despite its high efficiency, {\em analog training} is considerably more challenging than {\em inference} since it involves frequent weight updates.
Unlike digital hardware, where the weight increment can be applied to the original weight in the memory cell, the weights in AIMC hardware are changed by the so-called \emph{pulse update}. 
\textbf{Pulse update.}
When receiving electrical pulses from its peripheral circuits, the resistive elements change their conductance in response to the pulse polarity \citep{gokmen2016acceleration}. 
Receiving a pulse at each pulse cycle, the conductance is updated by $\Delta w_{\min}\cdot q_+(w)$ or $\Delta w_{\min}\cdot q_-(w)$, depending on the pulse polarity, where $\Delta w_{\min}$ is \emph{response granularity}, and $q_+(w)$ and $q_-(w)$ are \emph{response functions}.
Geometrically, $q_+(w)$ and $q_-(w)$ are the slopes of \emph{response curves}; see Figure \ref{fig:pulse-update}.
All $\Delta w_{\min}$, $q_+(w)$, and $q_-(w)$ are element-specific parameters or functions that are set before training and hence remain fixed during training. Typically, $\Delta w_{\min}$ is known while $q_+(w)$ and $q_-(w)$ are not.

\textbf{Gradient-based training implemented by analog update.}
Supported by pulse update, the gradient-based training algorithms are used to optimize the weights.
Consider a standard training problem with objective $f(\cdotc):\reals^D\to\reals$ and a model parameterized by $W\in\reals^D$
\begin{align}
    \label{problem}
    W^* := \argmin_{W\in\reals^{D}}~ f(W) := \mbE_{\xi}[f(W; \xi)]
\end{align}
where $\xi$ is a random data sample. 
Similar to stochastic gradient descent (SGD) in digital training (\DigitalSGD), the gradient-based training algorithm on AIMC hardware, \AnalogSGD, updates the weights by stochastic gradients $\nabla f(W_k; \xi_k)$. \DigitalSGD~updates the weight by $W_{k+1} = W_k - \alpha \nabla f(W_k; \xi_k)$ with learning rate $\alpha$. 
Given a \emph{desired update} $\Delta W = -\alpha \nabla f(W_k; \xi_k)$, AIMC hardware implements {\AnalogSGD} by sending $|[\Delta W]_d| / \Delta w_{\min}$ pulses to the $d$-th element. Ideally, $q_+(w)=q_-(w)=1$ for every conductance states. If so, with each pulse updating $[W_k]_d$ by $\Delta w_{\min}$, $[W_k]_d$ is ultimately updated by about $[\Delta W]_d$. 

\textbf{Challenges of analog training.}
Despite its ultra-efficiency, gradient-based training on AIMC hardware is challenging. First, the generic response functions are \textit{asymmetric} (i.e. $q_+(w) \not\equiv q_-(w)$), and \textit{non-linear} \cite{burr2015experimental, agarwal2016resistive, chen2015mitigating}. Due to the variation of response functions and conductance states, gradients are scaled by different magnitudes across different coordinates, leading to biased gradients. Furthermore, the response granularity $\Delta w_{\min}$ is a constant. When the gradients or the learning rate decay below $\Delta w_{\min}$, pulse update no longer provides sufficient precision to perform gradient descent \cite{joshi2020accurate}. Other imperfections include, but are not limited to, noisy input/output (IO) of MVM operations and analog-digital conversion error \cite{agarwal2016resistive}. 
This paper aims to investigate the impact of non-ideal response functions and develop a method to mitigate their negative effects. We also discuss extending the proposed method to deal with other hardware imperfections.

\vspace{-0.5\baselineskip}

\subsection{Main results}
\label{section:main-results}
Complementing existing empirical studies in analog in-memory computing, this paper aims to build a rigorous theoretical foundation of analog training.
By introducing bias to the gradient, the asymmetric response function plays a central role in differentiating digital and analog training. In contrast, the other non-idealities hinder the training process by causing precision-related issues. 
Therefore, we approach the problem progressively, beginning with a simplified case that involves only the asymmetric response functions, and extending the proposed methods to more general scenarios.

As a warm-up, building upon the pulse update mechanism, we propose the following discrete-time mathematical model to characterize the trajectory of {\AnalogSGD}
\begin{align}
    \label{recursion:analog-SGD}
    \texttt{Analog~SGD}\qquad
    W_{k+1} =&\ W_k - \alpha \nabla f(W_k; \xi_k)\odot F(W_k)
    - \alpha|\nabla f(W_k; \xi_k)|\odot G(W_k)
\end{align}
where $\alpha>0$ is the learning rate and $\xi_k$ is the data sample of iteration $k$;
$|\cdot|$ and $\odot$ represent the element-wise absolute value and multiplication, respectively; and $F(\cdotc)$ and $G(\cdotc)$ are hardware-specific matrix which are defined by $q_+(\cdotc)$ and $q_-(\cdotc)$. In \Cref{section:preliminary-response-function}, we will explain the underlying rationale of \eqref{recursion:analog-SGD}. Compared with the standard {\DigitalSGD}, the gradients in \eqref{recursion:analog-SGD} are scaled by $F(\cdotc)$ and an extra bias term is introduced. Typically, hardware imperfections lead to non-ideal response functions, i.e., $F(\cdotc)\not\equiv 1$ and $G(\cdotc) \not\equiv 0$. Thus, we ask a natural question that
\begin{center}
    \textbf{Q1)} {\em What is the impact of non-ideal response functions and how to alleviate it? }
\end{center}
Recently, \cite{wu2024towards} partially answers the question by showing that {\AnalogSGD} suffers from a convergence issue due to the asymmetric update, and a heuristic algorithm, {\TT} \citep{gokmen2020, gokmen2021, rasch2024fast}, converges exactly by reducing the weight drift. 
However, their work is limited to a special case of \emph{linear response functions}, which are in the form of $q_{+}(w) = 1 - w/{\tau}, q_{-}(w) = 1 + w/{\tau}$ with hardware-specific parameter $\tau>0$. 
Given more general $q_{+}(w)$ and $q_{-}(w)$, the convergence of {\TT} does not trivially hold, even though the response functions are still linear.

\begin{wrapfigure}[14]{r}{0.5\linewidth}
    \vspace{-0.5cm}
    \centering
    \includegraphics[width=0.95\linewidth]{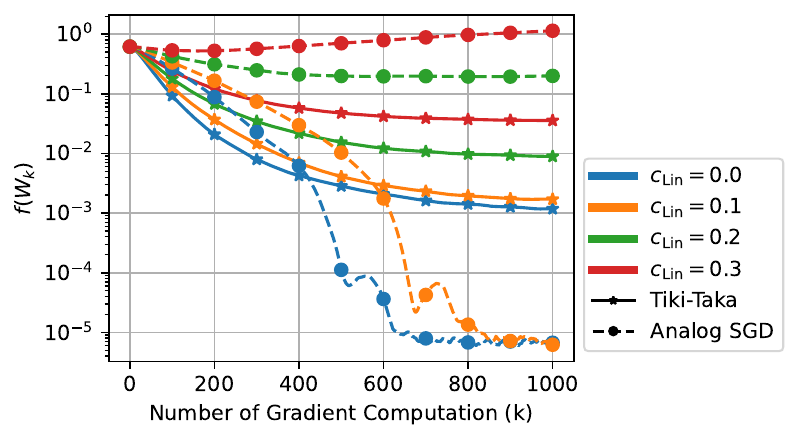}
    \vspace{-0.5em}
    \caption{Comparison of \AnalogSGD~and \TT~under different parameter $c_{\texttt{Lin}}$. 
    The error plateau in the order $10^{-5}$ comes from the limited response granularity $\Delta w_{\min} = 10^{-4}$.
    }
    \label{fig:TT-fails}
    \vspace{-1em}
\end{wrapfigure}
\textbf{Gap between theory for special linear and generic response functions.}
Consider a more generic linear response $q_{+}(w) = (1+c_{\texttt{Lin}})(1 - w/{\tau}), q_{-}(w) = (1-c_{\texttt{Lin}})(1 + w/{\tau})$ with a parameter $c_{\texttt{Lin}}$, which reduces to the setting in \cite{wu2024towards} when $c_{\texttt{Lin}}=0$.
Figure \ref{fig:TT-fails} shows the damage from a non-zero $c_{\texttt{Lin}}$ to \TT. Consistent with the conclusion in \cite{wu2024towards}, \TT~significantly outperforms \AnalogSGD~\allowbreak when $c_{\texttt{Lin}}=0$. However, when $c_{\texttt{Lin}}$ is perturbed from $0.1$ to $0.3$, \TT~degrades dramatically and even becomes worse than \AnalogSGD~does.
The modification is slight, but the convergence guarantee in \cite{wu2024towards} fails, and the convergence of {\TT} is harmed significantly.
This counter-example indicates a gap between the theory for special linear and generic response functions, and necessitates the study of the analog training with generic response functions and the exploration of exact convergence conditions.

Ignoring other imperfections temporarily, this paper first analyzes the impact of response functions. We show that {\AnalogSGD} suffers from asymptotic error due to the mismatch between the algorithmic \textit{stationary point} and physical \textit{symmetric point}. Inspired by that, we propose a novel algorithm framework that aligns two points, overcoming the asymmetric issues. 
Building on that, we endeavor to extend the proposed algorithm to more practical scenarios that involve other imperfections like limited granularity and noisy readings, prompting a second critical question:
\begin{center}
    \textbf{Q2)} {\em How to extend the framework to address the limited response granularity and noisy IO issues? }
\end{center}
To answer this question, we propose two mechanisms to further overcome these two issues.

\textbf{Our contributions.}  
This paper makes the following contributions:
\begin{enumerate}[topsep=0.em]
    \itemsep-0.1em 
    \item [\bf C1)] 
    Building on the pulse update equation, we propose an approximate discrete-time dynamics for analog update. Enabled by this, we study the impact of response functions directly, without being limited to specific element candidates.

    \item [\bf C2)] 
    Based on that, we show that instead of optimizing $f(\cdotc)$, {\AnalogSGD} optimizes another penalized objective implicitly. An implicit penalty is introduced by the asymmetric response functions, which attract the weights towards symmetric points. Consequently, {\AnalogSGD} can only converge to the optimal point inexactly. 

    \item [\bf C3)] We propose a novel {\ResidualLearning} theoretical framework to alleviate the asymmetric update and implicit penalty issues. {\ResidualLearning} explicitly introduces another \emph{residual array}, which has a stationary point $0$. 
    This framework leads to {\TT} heuristically proposed in \cite{gokmen2020} while it offers an understanding of how {\TT} deals with the challenge from generic response functions. 
    By properly zero-shifting so that the stationary and symmetric points overlap, {\ResidualLearning} provably converges to a critical point.

    \item [\bf C4)] Building on C3), we propose a variant, {\ResidualLearning} \texttt{v2}, tailored for more practical training scenarios. We propose introducing a digital buffer to filter out reading errors caused by IO noise. Furthermore, we propose a threshold-based transfer rule to alleviate instability caused by limited granularity. 
\end{enumerate}

\subsection{Prior art}
\textbf{AIMC training.}
Analog training has shown promising early successes with tremendous energy advantage \cite{yao2017face, wang2019situ}. Among them, on-chip training, which performs forward, backward, and update directly on analog chips \cite{gokmen2020, gokmen2021, rasch2024fast, wang2020ssm, huang2020overcoming} is considered to be the most efficient paradigm, but it is more sensitive to hardware imperfections. Sacrificing energy efficiency for robustness, hybrid digital-analog off-chip training is proposed \cite{wan2022compute, yao2020fully, nandakumar2018, nandakumar2020}, which offloads some computation burden to digital components. This paper focuses on the more challenging on-chip training setting.

\textbf{Energy-based model and equilibrium propagation.}
AIMC training leverages back-propagation to compute the gradient signals. Recently, a class of energy-based models has been studied, which performs equilibrium propagation to compute gradient signals \cite{scellier2017equilibrium, watfa2023energy, scellier2024energy, kendall2020training, ernoult2020equilibrium}. Focusing on the training dynamics instead of concrete gradient computing, our work is orthogonal to them and is expected to provide insight for algorithm designs of energy-based model training.

\vspace{-0.2cm}
\section{Analog Training with Generic Response Functions}
\label{section:preliminary-response-function}
\vspace{-0.2cm}
This section examines the discrete-time dynamics of analog training and introduces the challenges posed by generic response functions. After that, we introduce a family of response functions that reflect crucial physical properties that interest us. 

\textbf{Compact formulations of analog update. }
We first investigate the dynamics of one element $w$ in $W\in\reals^D$. 
This paper adopts $w$ to represent the element of the weight $W_k$ without specifying its index. 
As we discuss in \Cref{section:introduction}, the response granularity $\Delta w_{\min}$ is scaled by the response functions $q_+(w)$ or $q_-(w)$.
Since a desired update $\Delta w$ requires a series of pulses with each scaled by approximately $q_+(w)$ or $q_-(w)$, it is sensible that the $\Delta w$ is approximately scaled by $q_+(w)$ or $q_-(w)$ as well. Accordingly, we propose that an approximate dynamics of analog update is given by $w' \approx U_q(w, \Delta w)$, where $U_q(w, \Delta w)$ is defined by
\begin{align}
    \label{analog-update}
    U_q(w, \Delta w) := \begin{cases}
        w + \Delta w \cdot q_{+}(w),~~~\Delta w \ge 0, \\
        w + \Delta w \cdot q_{-}(w),~~~\Delta w < 0.
    \end{cases}
\end{align}
The update \eqref{analog-update} holds at each resistive element.
At the $k$-th iteration,
We stack all the weights $w_k$ and expected increment $\Delta w_k$ together into vectors $W_k, \Delta W_k\in\reals^D$.
Similarly, the response functions $q_+(\cdotc)$ and $q_-(\cdotc)$ are stacked into $Q_+(\cdotc)$ and $Q_-(\cdotc)$, respectively. 
Let the notation $U_Q(W_k, \Delta W)$ on matrices $W_k$ and $\Delta W$ denote the element-wise operation on
$W_k$ and $\Delta W$, i.e. $[U_Q(W_k, \Delta W)]_d := U_{[Q]_d}([W_k]_d, [\Delta w]_d), \forall d\in[D]$ with $[D]:=\{1, 2, \cdots, D\}$ denoting the index set.
The element-wise update \eqref{analog-update} can be expressed as $W_{k+1} = U_Q(W_k, \Delta W_k)$. Leveraging the symmetric decomposition \citep{wu2024towards, gokmen2020}, we decompose $Q_{-}(W)$ and $Q_{+}(W)$ into symmetric component $F(\cdotc)$ and asymmetric component $G(\cdotc)$
\begin{align}
    \label{definition:F-G}
    F(W) := (Q_{-}(W)+Q_{+}(W))/2,
    \quad\text{and}\quad
    G(W) := (Q_{-}(W)-Q_{+}(W))/2,
\end{align}
which leads to a compact form of the {\AnalogUpdate}
\begin{tcolorbox}[emphblock, width=1\linewidth]
    \vspace{-0.8\baselineskip}
    \begin{align}
        \label{biased-update}
        \texttt{Analog~Update}\qquad
        W_{k+1} = W_k + \Delta W_k \odot F(W_k) -|\Delta W_k| \odot G(W_k).
    \end{align}
    \vspace{-1.3\baselineskip}
\end{tcolorbox}

\textbf{Gradient-based training algorithms on AIMC hardware.} 
In \eqref{biased-update}, the desired update $\Delta W_k$ varies based on different algorithms. Replacing $\Delta W_k$ with the stochastic gradient $\nabla f(W_k; \xi_k)$, we obtain the dynamics of {\AnalogSGD} shown in \eqref{recursion:analog-SGD}. This update is reduced to the mathematical form for linear response functions in \citep{wu2024towards}  as a special case; see Appendix \ref{section:relation-with-wu2024} for details. 

\textbf{Response function class.}
Before proceeding to the study of response functions, we first define the response function class that interests us. Since the behavior of resistive elements is always governed by physical laws, the function class should reflect certain crucial physical properties. 

The most crucial property of the response functions is the \textit{asymmetric update}, i.e., $q_{-}(w)\ne q_{+}(w)$ for most of $w$. Specifically, if a point $w^\diamond$ satisfies $q_{-}(w^\diamond)=q_{+}(w^\diamond)$, we say $w^\diamond$ is a \emph{symmetric point}.
Stacking all $w^\diamond$ into a vector $W^\diamond\in\reals^D$. Observe that the function $G(W)$ is large if $q_{-}(w)$ and $q_{+}(w)$ are significantly different, while it is almost zero around $W^\diamond$. 
At the same time, $F(W)$ is the average of the response functions in two directions. 
As we will see in Sections \ref{section:convergence-ASGD} and \ref{section:TT}, the ratio $\frac{G(W)}{\sqrt{F(W)}}$ plays a critical role in the convergence behaviors.

\begin{figure*}[t]
    \centering
    \includegraphics[width=0.8\linewidth]{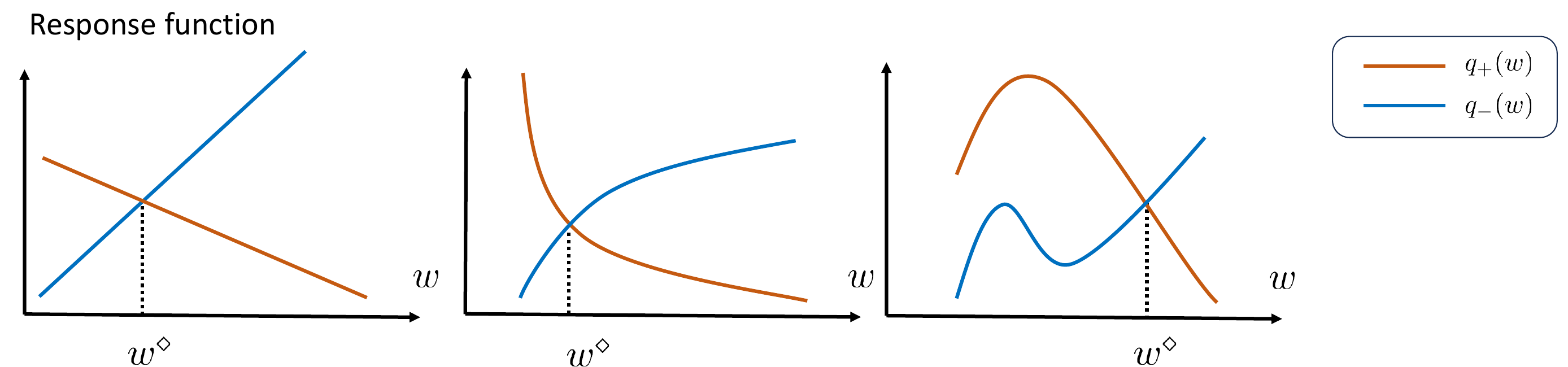}
    \vspace{-0.2cm}
    \caption{Examples of response functions from \Cref{assumption:response-factor}; $w^\diamond$ is the symmetric point.
    }
    \label{fig:response-factor}
    \vspace{-0.4cm}
\end{figure*}

In addition to the asymmetric update, the function class should possess other properties.
First, the conductance increases upon receipt of a positive pulse, and vice versa, resulting in positive response functions. 
In addition, we assume that the response functions are differentiable (and hence continuous) for mathematical tractability.
Taking all factors into account, we define the following class of response functions.
\begin{definition}[Response function class]
    \label{assumption:response-factor}
    $q_+(\cdotc)$ and $q_-(\cdotc)$ 
    satisfy
    \vspace{-0.5em}
    \begin{itemize}[leftmargin=2em, itemsep=0.2em]
        \item \textbf{(Positive-definiteness)} There exist positive constants $q_{\min}>0$ and $q_{\max}>0$ such that
        $q_{\min} \le q_+(w) \le q_{\max}$ and $q_{\min} \le q_-(w) \le q_{\max}, \forall w$; and,
        \item \textbf{(Differentiable)} The response functions $q_+(\,\cdot\,)$ and $q_-(\,\cdot\,)$ are differentiable.
    \end{itemize}
\end{definition}
\Cref{assumption:response-factor} covers a wide range of response functions, including but not limited to PCM, ReRAM, ECRAM, and others mentioned in \Cref{section:review}.
Figure \ref{fig:response-factor} showcases three examples from the response functions class, including linear, non-linear but monotonic, and even non-monotonic functions.

\section{Implicit Penalty and Inexact Convergence of Analog SGD}
\label{sec:theory_penalty}
This section introduces a critical impact of the response functions, \textit{implicit penalized objective}. Affected by this, {\AnalogSGD} can only converge inexactly with a non-diminishing asymptotic error.

\subsection{Implicit penalty}
\label{section:implicit-regularization}
We first give an intuition through a situation where $W_k$ is already a critical point, i.e., $\mbE_\xi[\nabla f(W_k; \xi)]=0$.
Recall that stochastic gradient descent on digital hardware (\DigitalSGD) is stable in expectation, i.e. $\mbE_{\xi_k}[W_{k+1}] = W_k - \mbE_{\xi_k}[\alpha \nabla f(W_k; \xi_k)] = W_k$.
However, this does not work for \AnalogSGD
\begin{align}
    \label{equality:analog-SGD-drift}
    &\ \mbE_{\xi_k}[W_{k+1}] =W_k - \mbE_{\xi_k}[\alpha \nabla f(W_k; \xi_k)\odot F(W_k) 
    - \alpha|\nabla f(W_k; \xi_k)|\odot G(W_k)] \\
    =&\ W_k - \alpha\mbE_{\xi_k}[|\nabla f(W_k; \xi_k)|\odot G(W_k)]
    \ne W_k.
    \nonumber
\end{align}
Consider a simplified version that the weight is a scalar ($D=1$) and the function $G(W)$ is strictly monotonically decreasing\footnote{It happens when both $q_+(\cdotc)$ and $q_-(\cdotc)$ are strictly monotonic.} to help us gain intuition on the impact of the drift in \eqref{equality:analog-SGD-drift}.
Recall $G(W^\diamond)=0$ at the symmetric point $W^\diamond$. $G(W) > 0$ when $W > W^\diamond$ and $G(W) < 0$ otherwise. Consequently, \eqref{equality:analog-SGD-drift} indicates that $\mbE_{\xi_k}[W_{k+1}] < W_k$ when $W_k > W^\diamond$ and $\mbE_{\xi_k}[W_{k+1}] > W_k$ otherwise.
It implies that $W_k$ suffers from a drift tendency towards $W^\diamond$.
In addition, the penalty coefficient proportional to the noise level since the drift is proportional to $\mbE_{\xi_k}[|\nabla f(W_k; \xi_k)|]$, which is the first moment of noise $\mbE_{\xi_k}[|\nabla f(W_k; \xi_k)-\mbE_{\xi}[\nabla f(W_k; \xi)]|]$ in essence.

The following theorem formalizes the implicit penalty effect. Before that, we define an accumulated asymmetric function $R_c(\cdotc) : \reals^D\to\reals^D$, whose derivative is $R(W) := \frac{G(W)}{F(W)}$, i.e. $\frac{\diff{[R_c(W)]_d}}{\diff{[W]_d}}  = [R(W)]_d = \frac{[G(W)]_d}{[F(W)]_d}$.
If $R(W)$ is strictly monotonic, $R_c(W)$ reaches its minimum at the symmetric point $W^\diamond$ where $R(W^\diamond)=0$, so that it penalizes the weight away from the symmetric point.
\begin{theorem}[Implicit penalty, short version]
    \label{theorem:implicit-regularization-short}
    Suppose $W^*$ is the unique minimizer of problem \eqref{problem}.
    Let $\Sigma := \mbE_\xi[|\nabla f(W^*; \xi)|]\in\reals^D$.
    \AnalogSGD~implicitly optimizes the following penalized objective
    \begin{align}
        \label{problem:implicit-regularization-short}
    	\min_{W} ~ f_{\Sigma}(W) := f(W) + \la \Sigma, R_c(W)\ra.
    \end{align}
\end{theorem}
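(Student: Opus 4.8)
The plan is to show that the conditional expectation of the \AnalogSGD~update is, after factoring out the (strictly positive) symmetric response $F$, a \emph{preconditioned} gradient step on $f_\Sigma$; from this the claim that \AnalogSGD~``implicitly optimizes'' $f_\Sigma$ follows, since its stationary points are then exactly the critical points of $f_\Sigma$ and the expected dynamics descends on $f_\Sigma$.

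First I would take the conditional expectation of the recursion \eqref{recursion:analog-SGD}. Writing $\nabla f(W):=\mbE_\xi[\nabla f(W;\xi)]$ for the population gradient and $\Sigma(W):=\mbE_\xi[|\nabla f(W;\xi)|]$ for the first absolute moment, and using that $F(W_k),G(W_k)$ are deterministic, linearity gives
\begin{align*}
    \mbE_{\xi_k}[W_{k+1}\mid W_k] = W_k - \alpha\big(\nabla f(W_k)\odot F(W_k) + \Sigma(W_k)\odot G(W_k)\big).
\end{align*}
Because $F(W_k)>0$ element-wise by the positive-definiteness in \Cref{assumption:response-factor}, I factor it out and use $R(W)=G(W)/F(W)$ to write
\begin{align*}
    \mbE_{\xi_k}[W_{k+1}\mid W_k] = W_k - \alpha\, F(W_k)\odot\big(\nabla f(W_k) + \Sigma(W_k)\odot R(W_k)\big).
\end{align*}
The bracketed vector is almost $\nabla f_\Sigma(W_k)$: by construction $R_c$ satisfies $\tfrac{\diff{[R_c(W)]_d}}{\diff{[W]_d}}=[R(W)]_d$, and since $\Sigma$ is a constant vector this gives $\nabla\langle\Sigma,R_c(W)\rangle=\Sigma\odot R(W)$, hence $\nabla f_\Sigma(W)=\nabla f(W)+\Sigma\odot R(W)$. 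The only discrepancy is that the drift carries the state-dependent level $\Sigma(W_k)$ where $f_\Sigma$ carries the constant $\Sigma=\Sigma(W^*)$.

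The main obstacle is exactly this replacement of $\Sigma(W_k)$ by $\Sigma$. I would justify it by decomposing $\nabla f(W;\xi)=\nabla f(W)+n(W,\xi)$ with $\mbE_\xi[n]=0$: near a critical point the deterministic part $\nabla f(W)$ is small, so $\Sigma(W)=\mbE_\xi[|\nabla f(W)+n(W,\xi)|]$ is dominated by the first absolute moment of the noise alone, which at the minimizer $W^*$ (where $\nabla f(W^*)=0$) equals precisely $\Sigma=\mbE_\xi[|\nabla f(W^*;\xi)|]$. Under this approximation the expected drift is $-\alpha\,F(W_k)\odot\nabla f_\Sigma(W_k)$, a descent direction for $f_\Sigma$: along the associated flow $\dot W=-F(W)\odot\nabla f_\Sigma(W)$ one has $\tfrac{d}{dt}f_\Sigma(W)=-\sum_d [F(W)]_d\,[\nabla f_\Sigma(W)]_d^2\le 0$, with equality only at critical points of $f_\Sigma$.

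Finally I would read off the conclusion. Setting the expected drift to zero yields $\nabla f(W)\odot F(W)+\Sigma\odot G(W)=0$, equivalently $\nabla f_\Sigma(W)=0$; thus \AnalogSGD~is stationary at the critical points of $f_\Sigma$, not of $f$, and the extra term $\Sigma\odot R(W)=\nabla\langle\Sigma,R_c(W)\rangle$ is the gradient of the advertised implicit penalty. Since (as noted just before the statement) $R_c$ is minimized at the symmetric point $W^\diamond$ when $R$ is monotonic, this penalty shifts the stationary point away from $W^*$ toward $W^\diamond$, with strength proportional to the noise first moment $\Sigma$. The delicate point throughout is quantifying the error from freezing $\Sigma(W)$ at $\Sigma(W^*)$; the short version asserts the limiting penalized objective, and I would defer the error control (and any nonasymptotic statement) to the full version.
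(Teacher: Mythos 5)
Your mechanism is the same one the paper uses: the conditional-mean drift of \AnalogSGD\ is $-\alpha\big(\nabla f(W_k)\odot F(W_k)+\Sigma(W_k)\odot G(W_k)\big)$ with $\Sigma(W):=\mbE_\xi[|\nabla f(W;\xi)|]$, factoring out $F>0$ gives the paper's ``scaled effective update'' $T(W)=\nabla f(W)+\Sigma(W)\odot R(W)$, and the identity $\nabla f_\Sigma(W)=\nabla f(W)+\Sigma\odot R(W)$ identifies the penalty gradient. You also correctly isolate the one genuine technical issue, namely freezing the state-dependent first moment $\Sigma(W)$ at $\Sigma=\Sigma(W^*)$. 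Where you diverge is in how ``implicitly optimizes'' is formalized. You argue via the expected flow $\dot W=-F(W)\odot\nabla f_\Sigma(W)$ and a descent/stationarity correspondence, and defer the error control; the paper instead never proves a descent property. It constructs an explicit candidate $\tdW^*$ by linearizing $\nabla f$ around $W^*$ and $R$ around $W^\diamond$, and proves a purely local, ratio-limit statement: as $\|W^\diamond-W^*\|\to 0$, both $\|\nabla f_\Sigma(\tdW^*)\|$ and $\|T(\tdW^*)\|$ are $O(\|W^\diamond-W^*\|^2)$ while the same quantities evaluated at $W^*$ or $W^\diamond$ are $\Theta(\|W^\diamond-W^*\|)$, so $\tdW^*$ is simultaneously an approximate critical point of $f_\Sigma$ and an approximate fixed point of the mean dynamics. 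The freezing error you defer is exactly the term $\|(\Sigma(W^*)-\Sigma(\tdW^*))\odot R(\tdW^*)\|$ in the paper's Step~2a, and it is dispatched by Lipschitz continuity of the stochastic gradients together with $R(W^\diamond)=0$, yielding a product of two $O(\|W^*-W^\diamond\|)$ factors. If you wanted to complete your descent-based route rigorously you would need to control $(\Sigma(W)-\Sigma)\odot R(W)$ uniformly along trajectories, which is strictly harder than the paper's local claim; so your sketch is a sound reading of the short version, but the paper's actual theorem is weaker and more local than what your flow argument aims for.
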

\vspace{-0.5\baselineskip}
The full version of Theorem \ref{theorem:implicit-regularization-short} and its proof are deferred to Appendix \ref{section:proof-implicit-regularization}. 
In Theorem \ref{theorem:implicit-regularization-short}, $R_c(W)$ plays the role of a penalty to force the weight towards a symmetric point. 
As shown in Appendix \ref{section:proof-implicit-regularization}, $R_c(W)$ has a simple expression on linear response functions when $c_{\texttt{Lin}}=0$, leading \eqref{problem:implicit-regularization-short} to $\min_{W} ~ f_{\Sigma}(W) := f(W) + \frac{\Sigma}{2\tau}\|W\|^2$
which is an $\ell_2$ regularized objective. In addition, the implicit penalty has a coefficient proportional to the noise level $\Sigma$ and inversely proportional to the dynamic range $\tau$. It implies that the implicit penalty becomes active only when gradients are noisy, and the noise amplifies the effect.

\begin{tcolorbox}[emphblock]
\centering
    With noisy gradients, an \textbf{implicit penalty} attracts \AnalogSGD~towards symmetric points. 
\end{tcolorbox}

\subsection{Inexact Convergence of Analog SGD under generic devices}
\label{section:convergence-ASGD}
Due to the implicit penalty, \AnalogSGD~only converges to a critical point inexactly.
Before showing that, We introduce a series of assumptions on the objective, as well as noise.
\begin{assumption}[Objective]
    \label{assumption:Lip}
    The objective $f(W)$ is $L$-smooth and is lower bounded by $f^*$. 
\end{assumption}

\begin{assumption}[Unbiasness and bounded variance]
    \label{assumption:noise}
    The stochastic gradient is unbiased and has bounded variance $\sigma^2$.
    i.e., $\mbE_{\xi}[\nabla f(W;\xi)] = \nabla f(W)$ and $\mbE_{\xi}[\|\nabla f(W;\xi)-\nabla f(W)\|^2]\le\sigma^2$. 
\end{assumption}

Assumption \ref{assumption:Lip}--\ref{assumption:noise} are standard in non-convex optimization \citep{bottou2018optimization}. 
This paper considers the average squared norm of the gradient as the convergence metric, given by
$E^{\texttt{ASGD}}_K := \frac{1}{K}\sum_{k=0}^{K-1} \lnorm \nabla f(W_k)\rnorm^2$.
Now, we establish the convergence of \AnalogSGD.
\begin{restatable}[Inexact convergence of \AnalogSGD]{theorem}{ThmASGDConvergenceNoncvx}
    \label{theorem:ASGD-convergence-noncvx}
    Under Assumption 
    \ref{assumption:Lip}--\ref{assumption:noise}, 
    if the learning rate is set as 
    $\alpha = O(1/\sqrt{K})$, 
    it holds that
    \vspace{-0.2cm}
    \begin{align}
    \label{inequality:ASGD-convergence-noncvx}
    E^{\texttt{ASGD}}_K
    \le&\ O\lp\sqrt{{\sigma^2}/{K}}
    + \sigma^2 S^{\texttt{ASGD}}_K\rp
    \end{align}
    \vspace{-0.2cm}
    where $S^{\texttt{ASGD}}_K$ denotes the amplification factor given by
    $S^{\texttt{ASGD}}_K := \frac{1}{K}\sum_{k=0}^{K-1}\lnorm\frac{G(W_k)}{\sqrt{F(W_k)}} \rnorm^2_\infty$.
\end{restatable}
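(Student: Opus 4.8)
The plan is to run a standard descent-lemma argument for non-convex SGD, but with two modifications tailored to the biased, coordinate-scaled update in \eqref{recursion:analog-SGD}: a \emph{weighted} stationarity measure induced by $F(\cdotc)$, and a careful split of the drift term so that its deterministic part is absorbed by the descent while its stochastic part produces exactly the amplification factor $S^{\texttt{ASGD}}_K$. Throughout write $g_k := \nabla f(W_k;\xi_k)$ for the stochastic gradient and abbreviate $\mbE_k[\cdot] := \mbE_{\xi_k}[\cdot\mid W_k]$, so that $\mbE_k[g_k]=\nabla f(W_k)$ by Assumption \ref{assumption:noise}.

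First I would apply $L$-smoothness (Assumption \ref{assumption:Lip}) to \eqref{recursion:analog-SGD} and take $\mbE_k$. Since $F(W_k)$ and $G(W_k)$ are deterministic given $W_k$, the linear term splits into a descent part $-\alpha\sum_d[\nabla f(W_k)]_d^2[F(W_k)]_d$ (because $\mbE_k[g_k]=\nabla f(W_k)$) and a drift part $+\alpha\la\nabla f(W_k),\,\mbE_k[|g_k|]\odot G(W_k)\ra$. Writing the weighted quantity $\Phi_k := \sum_d[\nabla f(W_k)]_d^2[F(W_k)]_d$, positive-definiteness in Definition \ref{assumption:response-factor} gives $\Phi_k\ge q_{\min}\lnorm\nabla f(W_k)\rnorm^2$. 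Each coordinate of the step equals $-\alpha[g_k]_d\,q_{\pm}$, so $\|W_{k+1}-W_k\|^2\le\alpha^2 q_{\max}^2\|g_k\|^2$, and the quadratic remainder is $O(\alpha^2)(\lnorm\nabla f(W_k)\rnorm^2+\sigma^2)$ after using $\mbE_k\|g_k\|^2\le\lnorm\nabla f(W_k)\rnorm^2+\sigma^2$.

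The hard part is controlling the drift $\la\nabla f(W_k),\,\mbE_k[|g_k|]\odot G(W_k)\ra$ so that it does not cancel the descent $\Phi_k$. The key structural fact, which I would derive from Definition \ref{assumption:response-factor}, is that positivity of $q_{\pm}$ forces $|G(\cdotc)|\le F(\cdotc)$ elementwise, and boundedness forces $\rho:=\lnorm G/F\rnorm_\infty\le (q_{\max}-q_{\min})/(q_{\max}+q_{\min})<1$. I would then bound the drift by $\sum_d|[\nabla f(W_k)]_d|\,|[G(W_k)]_d|\,\mbE_k[|[g_k]_d|]$, write $|[G]_d|=\sqrt{[F]_d}\cdot|[G]_d|/\sqrt{[F]_d}$, and apply Young's inequality to peel off a clean $\tfrac12\Phi_k$. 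For the residual $\tfrac12\sum_d\tfrac{[G]_d^2}{[F]_d}\mbE_k[|[g_k]_d|]^2$ I would use $\mbE_k[|[g_k]_d|]^2\le[\nabla f(W_k)]_d^2+\mathrm{Var}_k([g_k]_d)$ and split: the signal piece is bounded via $\tfrac{[G]_d^2}{[F]_d}\le\rho^2[F]_d$, giving $\tfrac{\rho^2}{2}\Phi_k$; the variance piece is bounded via $\tfrac{[G]_d^2}{[F]_d}\le\lnorm G/\sqrt{F}\rnorm_\infty^2$ together with $\sum_d\mathrm{Var}_k([g_k]_d)\le\sigma^2$. Altogether the drift is at most $\tfrac{1+\rho^2}{2}\Phi_k+\tfrac12\lnorm G(W_k)/\sqrt{F(W_k)}\rnorm_\infty^2\sigma^2$.

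Combining the pieces, the $\Phi_k$ coefficient becomes $-\alpha+\alpha\tfrac{1+\rho^2}{2}=-\alpha\tfrac{1-\rho^2}{2}<0$, so after absorbing the $O(\alpha^2)\lnorm\nabla f(W_k)\rnorm^2$ remainder for $\alpha$ small enough and using $\Phi_k\ge q_{\min}\lnorm\nabla f(W_k)\rnorm^2$, I obtain a genuine per-step decrease of order $\alpha\lnorm\nabla f(W_k)\rnorm^2$ plus the noise terms $\alpha\lnorm G(W_k)/\sqrt{F(W_k)}\rnorm_\infty^2\sigma^2$ and $O(\alpha^2\sigma^2)$. Finally I would telescope over $k=0,\dots,K-1$, divide by $K$ and by the step-size factor, recognize $\tfrac1K\sum_k\lnorm G(W_k)/\sqrt{F(W_k)}\rnorm_\infty^2=S^{\texttt{ASGD}}_K$, and choose $\alpha=\Theta(1/(\sigma\sqrt{K}))$ (which is $O(1/\sqrt{K})$) to balance the $\tfrac{1}{\alpha K}$ initialization term against the $O(\alpha\sigma^2)$ term; both then collapse to $O(\sqrt{\sigma^2/K})$, yielding \eqref{inequality:ASGD-convergence-noncvx}. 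The only genuinely new step relative to textbook SGD is the $|G|\le F$ observation: it is what guarantees $\rho<1$ and hence that the implicit-penalty drift is strictly dominated by the descent, leaving only the $\sigma^2 S^{\texttt{ASGD}}_K$ floor.
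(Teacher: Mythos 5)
Your proposal is correct and reaches the stated bound; it follows the same overall strategy as the paper's proof (descent lemma, split the asymmetric drift into a mean part absorbed by the descent and a fluctuation part that yields the $\sigma^2 S^{\texttt{ASGD}}_K$ floor), but executes the key step differently. You control the drift $\la\nabla f(W_k),\,\mbE_{\xi_k}[|\nabla f(W_k;\xi_k)|]\odot G(W_k)\ra$ by Young's inequality together with the uniform ratio $\rho=\lnorm G/F\rnorm_\infty\le(q_{\max}-q_{\min})/(q_{\max}+q_{\min})<1$, leaving the margin $1-\rho^2$ in front of your weighted measure $\Phi_k$. The paper instead completes the square exactly in the $\sqrt{F(W_k)}$-weighted inner product, so the descent and the signal part of the drift cancel algebraically to $-\tfrac{\alpha}{2}\sum_d [\nabla f(W_k)]_d^2\,([F(W_k)]_d^2-[G(W_k)]_d^2)/[F(W_k)]_d$, i.e.\ a weighted norm built from the pointwise saturation measure $M(W_k)=Q_+(W_k)\odot Q_-(W_k)$. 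This rests on the same underlying fact as your $|G|\le F$ observation (since $F^2-G^2=Q_+\odot Q_-$ and positivity of $Q_\pm$ is exactly $F>|G|$), but it keeps state-dependent constants rather than the worst-case $1-\rho^2$, which is what lets the paper recover the sharper linear-response bound of Remark~\ref{remark:ASGD-convergence-wo-saturation}. Your treatment of the variance piece (bounding $\sum_d [G(W_k)]_d^2\,\mbE_{\xi_k}[([\nabla f(W_k;\xi_k)]_d-[\nabla f(W_k)]_d)^2]/[F(W_k)]_d$ by $\sigma^2\lnorm G(W_k)/\sqrt{F(W_k)}\rnorm_\infty^2$), of the quadratic smoothness remainder, and of the $\alpha=\Theta(1/(\sigma\sqrt{K}))$ balancing all match the paper's, so the argument goes through with only looser constants.
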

The proof of Theorem \ref{theorem:ASGD-convergence-noncvx} is deferred to Appendix \ref{section:proof-ASGD-convergence-noncvx}. Theorem \ref{theorem:ASGD-convergence-noncvx} suggests that the convergence metric $E^{\texttt{ASGD}}_K$ is upper bounded by two terms: the first term vanishes at a rate of 
$O(\sqrt{{\sigma^2}/{K}})$,
which matches the \DigitalSGD's convergence rate \citep{bottou2018optimization} up to a constant;
the second term contributes to the \emph{asymptotic error} of \texttt{Analog\;SGD}, which does not vanish with the number of iterations $K$. 
 
\textbf{Impact of saturation/asymmetric update.} 
The exact expression of $S^{\texttt{ASGD}}_K$ depends on the specific noise distribution and thus is difficult to reach. However, $S^{\texttt{ASGD}}_K$ reflects the saturation degree near the critical point $W^*$ when $W_k$ converges to a neighborhood of $W^*$. If $W^*$ is far from the symmetric point $W^\diamond$, $S^{\texttt{ASGD}}_K$ becomes large, leading to a large $E_K^{\texttt{ASGD}}$ and a large asymptotic error. In contrast, if $W^*$ remains close to the symmetric point $W^\diamond$, the asymptotic error is small.

\section{Mitigating Implicit Penalty by Residual Learning}
\label{section:TT}

The asymptotic error in \AnalogSGD~is a fundamental issue that arises from the mismatch between the symmetric point and the critical point. 
An idealistic remedy for the inexact convergence is carefully shifting the weights to ensure the stationary point is close to a symmetric point. However, determining the appropriate shifting is challenging, as the critical point is unknown before training. 
Therefore, an ideal solution to address this issue is to jointly construct a sequence with a proper stationary point and a proper shift of the symmetric point.

\textbf{Residual learning.} Our solution overlaps the algorithmic stationary point and physical symmetric point on the special point $0$. Besides the main analog array, $W_k$, we maintain another array, $P_k$, whose stationary point should be $0$. A natural choice is the \emph{residual} of the weight, $P^*(W)$, defined by the $P$ that minimizes the objective $f(W+\gamma P)$ with a non-zero $\gamma$. Notice that $P^*(W_k)\to 0$ as $W_k\to W^*$. Additionally, the goal of the main array is to minimize the residual so that the model $W_k$ approaches optimality. This process can be formulated as the following bilevel problem, whose optimal points can be proved to be those of $f(W)$ 
\begin{tcolorbox}[emphblock]
\centering
\vspace{-0.6\baselineskip}
\begin{align}
    \label{problem:residual-learning}
    \textsf{Residual Learning}\quad
    \min_{W\in\reals^D} ~\|P^*(W)\|^2, \quad
    \st~
    P^*(W) \in \argmin_{P\in\reals^D} ~f(W + \gamma P).
\end{align}
\vspace{-1.0\baselineskip}
\end{tcolorbox}

Now we propose a gradient-based method to solve \eqref{problem:residual-learning}.
The stochastic gradient of $f(W + \gamma P)$ with respect to $P$, given by $\nabla_P f(W + \gamma P; \xi) = \gamma \nabla f(W + \gamma P; \xi)$, is accessible with fair expense, enabling us to introduce a sequence $P_k$ to track the residual of $W_k$ by optimizing $f(W_k + \gamma P)$
\begin{align}
    \label{recursion:HD-P}
    P_{k+1} = &\ P_k - \alpha\nabla f(\bar W_k; \xi_k)\odot F(P_k) 
    - \alpha|\nabla f(\bar W_k; \xi_k)|\odot G(P_k). 
\end{align}
where $\bar W_k := W_k + \gamma P_k$ is the mixed weight. We then derive the hyper-gradient of the upper-level objective. Notice $\nabla \|P^*(W)\|^2 = 2\nabla P^*(W) P^*(W)$. Assuming $W^*$ is the unique minimum of $f(\cdotc)$, we know $P^*(W)$ satisfies $\gamma P^*(W) + W = W^*$. Taking gradient with respective to $W$ on both sides, we have $\nabla P^*(W) = -\frac{1}{\gamma} I$ and hence $\nabla \|P^*(W)\|^2 = - \frac{2}{\gamma}P^*(W)$. Approximating $P^*(W)$ by $P_k$ and absorbing $\frac{2}{\gamma}$ into the learning rate $\beta$, we reach the update of the main array
\begin{align} 
    \label{recursion:HD-W}
    W_{k+1} =&\ W_k + \beta P_{k+1}\odot F(W_k) - \beta|P_{k+1}|\odot G(W_k).
\end{align}
Featuring moving the residual $P_k$ to $W_k$, \eqref{recursion:HD-W} is referred to as \textit{transfer} process.
The updates \eqref{recursion:HD-P} and \eqref{recursion:HD-W} are performed alternatively until convergence.
{\TT} mentioned in \cite{wu2024towards} is the special case with linear response functions and $\gamma=0$.

On the response functions side, it is naturally required to let zero be a symmetric point, i.e., $G(0)=0$, which can be implemented by the zero-shifting technique \citep{kim2019zero} by subtracting a reference array.

\textbf{Convergence properties of \ResidualLearning.}
We begin by analyzing the convergence of \ResidualLearning without considering the zero-shift first, which enables us to understand how zero-shifted response functions affect convergence.

If the optimal point $W^*$ exists and is unique, the solution of the lower-level objective has a closed form $P^*(W) := \frac{W^*-W}{\gamma}$.
At that time, the upper-level objective equals $\|W^*-W\|^2$.
However, the solutions of $f(\cdotc)$ are generally non-unique, especially for non-convex objectives with multiple local minima.
To ensure the existence and uniqueness of $W^*$, we assume the objective is strongly convex.
\begin{assumption}[$\mu$-strong convexity]
\label{assumption:strongly-cvx}
The objective $f(W)$ is $\mu$-strongly convex. 
\end{assumption}
Under the strongly convex assumption, the optimal point $W^*$ is unique and hence the optimal solution of the lower-level problem in \eqref{problem:residual-learning} is unique. Since the requirement of strong convexity is non-essential in the development of bilevel optimization \citep{xiao2023,arbel2022nonconvex,shen2023penalty,kwon2023penalty}, we believe the proof can be extended to more general cases and will extend it for future work. 

Involving two sequences $W_k$ and $P_k$, {\ResidualLearning} converges in different senses, including:
(a) the residual array $P_k$ converges to the optimal point $P^*(W_k)$;
(b) $W_k$ converges to the critical point of $f(\cdotc)$ or the optimal point $W^*$;
(c) the sum $\bar W_k = W_k + \gamma P_k$ converges to a critical point where $\nabla f(\bar W_k)=0$. 
Taking all these into account, we define the convergence metric as
\begin{align}
    E^{\texttt{RL}}_K := 
    \frac{1}{K}\sum_{k=0}^{K-1}&\ \mbE\bigg[
    \|\nabla f(\bar W_k)\|^2
    +O(\lnorm P_k - P^*(W_k)\rnorm^2)
    + O(\|W_k-W^*\|^2)
    \bigg].
\end{align}
For simplicity, the constants in front of some terms in $E^{\texttt{RL}}_K$ are hidden. 
Now, we provide the convergence of {\ResidualLearning} with generic responses.

\begin{figure*}[!t]
    \footnotesize
    \centering
    \includegraphics[width=.44\linewidth]{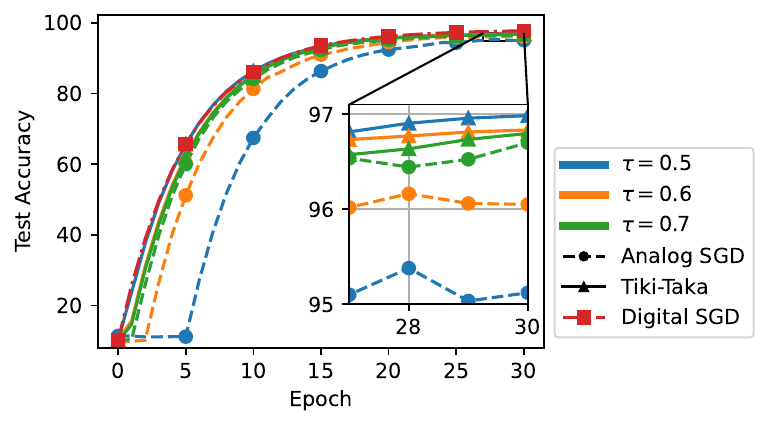}
    \includegraphics[width=0.44\linewidth]{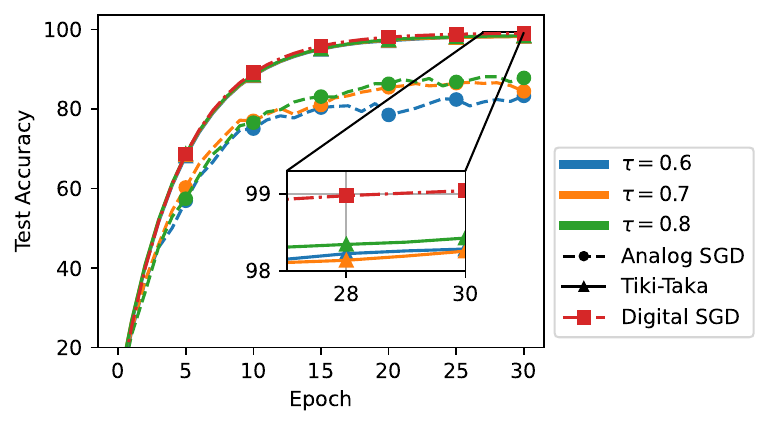}
    \vspace{-0.4\baselineskip}
    \caption{Test accuracy of training on MNIST dataset under different $\tau$; \textbf{(Left)} FCN. \textbf{(Right)} CNN. 
    }
    \label{figure:FCN-CNN-MNIST}
\end{figure*}

\begin{restatable}[Convergence of {\ResidualLearning}]{theorem}{ThmTTConvergenceScvx}
    \label{theorem:TT-convergence-scvx}
    Under Assumptions
    \ref{assumption:Lip}--\ref{assumption:strongly-cvx}, 
    with the learning rate 
    $\alpha=O\lp \sqrt{{1}/{\sigma^2K}}\rp$, $\beta=O(\alpha\gamma^{3/2})$,
    it holds for {\ResidualLearning} that
    \begin{equation}
        E^{\texttt{RL}}_K
        \le \ 
        O\lp\sqrt{{\sigma^2}/{K}}
        + \sigma^2 S^{\texttt{RL}}_K\rp
    \end{equation}
    \vskip -0.8em\noindent
    where $S^{\texttt{RL}}_K$ denotes the amplification factor of $P_k$ given by
    $S^{\texttt{RL}}_K := \frac{1}{K}\sum_{k=0}^{K}\lnorm\frac{G(P_k)}{\sqrt{F(P_k)}} \rnorm^2_\infty$.
\end{restatable}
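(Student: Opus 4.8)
The plan is to treat \eqref{recursion:HD-P}--\eqref{recursion:HD-W} as a two-timescale stochastic approximation in which the residual array $P_k$ is the fast variable and the main array $W_k$ is the slow one, and to control both through a single Lyapunov function. First I would exploit Assumption \ref{assumption:strongly-cvx}: strong convexity makes $W^*$ unique and gives the closed form $P^*(W) = (W^*-W)/\gamma$, so that the mixed weight obeys the exact identity $\bar W_k - W^* = \gamma(P_k - P^*(W_k))$. Writing $e_k := P_k - P^*(W_k)$ for the inner tracking error, $L$-smoothness and $\mu$-strong convexity sandwich $\lnorm \nabla f(\bar W_k)\rnorm^2$ between $\mu^2\gamma^2\lnorm e_k\rnorm^2$ and $L^2\gamma^2\lnorm e_k\rnorm^2$ (using $\nabla f(W^*)=0$). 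Hence all three terms of $E^{\texttt{RL}}_K$ are equivalent, up to constants and powers of $\gamma$, to $\lnorm e_k\rnorm^2 + \lnorm W_k - W^*\rnorm^2$, and it suffices to bound $\frac{1}{K}\sum_k \mbE[\lnorm e_k\rnorm^2 + \lnorm W_k-W^*\rnorm^2]$.

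Next I would establish two one-step descent inequalities. For the fast variable, \eqref{recursion:HD-P} is a biased SGD step on $P \mapsto f(W_k + \gamma P)$, which is $\gamma^2\mu$-strongly convex and $\gamma^2 L$-smooth. Expanding $\lnorm P_{k+1}-P^*(W_k)\rnorm^2$, separating the $F$-scaled descent direction (which contracts toward $P^*(W_k)$ at a rate of order $\alpha\gamma$, using $F(\cdotc)\ge q_{\min}$ and strong convexity) from the $G$-bias, and taking conditional expectation yields $\mbE_k\lnorm P_{k+1}-P^*(W_k)\rnorm^2 \le (1-c_1\alpha\gamma)\lnorm e_k\rnorm^2 + c_2\alpha^2\sigma^2 + c_3\alpha^2\sigma^2\lnorm G(P_k)/\sqrt{F(P_k)}\rnorm_\infty^2$, exactly as in the proof of Theorem \ref{theorem:ASGD-convergence-noncvx} but evaluated at $P_k$; summing the last term produces $\sigma^2 S^{\texttt{RL}}_K$. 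I then convert this into a bound on $\lnorm e_{k+1}\rnorm^2$ by accounting for the drift of the target, $P^*(W_{k+1})-P^*(W_k) = -(W_{k+1}-W_k)/\gamma$, which introduces correction terms of order $\beta/\gamma$. For the slow variable I substitute $P_{k+1} = (W^*-W_k)/\gamma + e_{k+1}$ into \eqref{recursion:HD-W}; the leading term contracts $\lnorm W_{k+1}-W^*\rnorm^2$ at a rate of order $\beta/\gamma$, while the $G(W_k)$-bias is multiplied by $\lnorm P_{k+1}\rnorm \le \lnorm e_{k+1}\rnorm + \lnorm P^*(W_k)\rnorm$ and is therefore higher order in the quantities already being driven to zero. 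This is precisely why no separate $G(W_k)$ penalty survives in the bound, in contrast to the $S^{\texttt{ASGD}}_K$ floor of Theorem \ref{theorem:ASGD-convergence-noncvx}.

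Finally I would combine the two inequalities in the Lyapunov function $V_k := \lnorm W_k-W^*\rnorm^2 + C\lnorm e_k\rnorm^2$, choosing the weight $C$ so that the inner contraction dominates the target-drift and cross terms; the prescribed relation $\beta = O(\alpha\gamma^{3/2})$ is exactly what makes the fast timescale fast enough relative to the slow one for this cancellation to close. This gives $\mbE[V_{k+1}] \le (1-c\beta/\gamma)\mbE[V_k] + O(\alpha^2\sigma^2) + O(\alpha^2\sigma^2)\mbE\lnorm G(P_k)/\sqrt{F(P_k)}\rnorm_\infty^2$; telescoping, dividing by $K$, and inserting $\alpha = O(\sqrt{1/(\sigma^2 K)})$ turns the noise floor into the $O(\sqrt{\sigma^2/K})$ rate and the accumulated bias into $\sigma^2 S^{\texttt{RL}}_K$. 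The main obstacle I anticipate is the two-timescale coupling: the inner error chases the moving target $P^*(W_k)$, so the descent of $V_k$ only closes if the Lyapunov weight $C$ and the $\alpha$--$\beta$ separation are tuned jointly, and simultaneously the biased (non-standard) Analog update must be decomposed carefully enough to route its asymmetric part entirely into the $S^{\texttt{RL}}_K$ term rather than into a non-vanishing error floor.
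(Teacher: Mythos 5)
Your proposal is sound and follows essentially the same strategy as the paper: separate one-step contraction estimates for the residual array and the main array, combined through a weighted Lyapunov function, with strong convexity used to convert $\lnorm \nabla f(\bar W_k)\rnorm^2$ into $\gamma^2\lnorm P_k-P^*(W_k)\rnorm^2$ and the timescale separation $\beta=O(\alpha\gamma^{3/2})$ closing the coupling. The one structural difference is the choice of potential. The paper uses $\mbV_k = f(\bar W_k)-f^* + C\lnorm W_k-W^*\rnorm^2$ and never places the tracking error in the Lyapunov function; instead its Lemma \ref{lemma:TT-P-descent} always measures $P_{k+1}$ against the \emph{old} target $P^*(W_k)$, and the resulting $(1-c\alpha\gamma)\lnorm P_k-P^*(W_k)\rnorm^2$ term is absorbed by the negative $-\lnorm\nabla f(\bar W_k)\rnorm^2$ term via strong convexity. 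By putting $e_k=P_k-P^*(W_k)$ directly into $V_k$ you must additionally control the target drift $P^*(W_{k+1})-P^*(W_k)=-(W_{k+1}-W_k)/\gamma$, which the paper's bookkeeping sidesteps entirely; your cross terms do close under $\beta=O(\alpha\gamma^{3/2})$ with a Lyapunov weight $C=O(\gamma^2)$, so this costs extra work but no correctness. One quantitative slip: in your inner recursion the asymmetry-induced noise term should be of order $\alpha\sigma^2\gamma^{-1}\lnorm G(P_k)/\sqrt{F(P_k)}\rnorm_\infty^2$ (it arises from a Young-inequality split of a cross term that is linear in $\alpha$, as in the paper's Lemma \ref{lemma:TT-P-descent}), not $\alpha^2\sigma^2\lnorm\cdot\rnorm_\infty^2$; after dividing by the contraction rate this still yields the $\sigma^2 S^{\texttt{RL}}_K$ floor, so the final bound is unaffected, but the constant tracking as written is internally inconsistent with your own appeal to the Theorem \ref{theorem:ASGD-convergence-noncvx} computation.
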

\vspace{-0.5\baselineskip}
The proof of Theorem \ref{theorem:TT-convergence-scvx} is deferred to Appendix \ref{section:proof-TT-convergence-scvx}. 
Theorem \ref{theorem:TT-convergence-scvx} claims that {\ResidualLearning} converges at the rate $O\lp\sqrt{{\sigma^2}/{K}}\rp$ to a neighbor of critical point with radius $O(\sigma^2 S^{\texttt{RL}}_K)$, which share almost the same expression with the convergence of \AnalogSGD. The difference lies in the amplification factor $S^{\texttt{RL}}_K$ and $S^{\texttt{ASGD}}_K$, where the former depends on $P_k$ while the latter depends on $W_k$. 
\begin{tcolorbox}[emphblock]
    \textbf{Impact of response functions.} Response function affects the \AnalogSGD~and {\ResidualLearning} similarly. However, attributed to the residual array, constructing response functions to enable exact convergence of {\ResidualLearning} is viable.
\end{tcolorbox}
As we have discussed, $P_k$ tends to $P^*(W_k)$ which tends to $0$ given $W_k$ tends to $W^*$. Therefore, response functions with $G(P)=0$ when $P=0$ are required for the exact convergence.
\begin{assumption}
    \label{assumption:zero-sp}
    \textbf{(Zero-shifted symmetric point)} $P=0$ is a symmetric point, i.e. 
    $G(0)=0$.
\end{assumption}
\vspace{-0.5\baselineskip}
Under it and the Lipschitz continuity of the response functions, it holds directly that $\lnorm\frac{G(P_k)}{\sqrt{F(P_k)}} \rnorm_\infty \le L_S \|P_k\|_\infty$ for a constant $L_S\ge 0$. Consequently, when $P_k\to P^*(W_k) \to 0$ as $W_k\to W^*$, the asymptotic error disappears. Formally, the following corollary holds true.
\begin{restatable}[Exact convergence of \ResidualLearning]{corollary}{ThmTTConvergenceScvxFinal}
    \label{theorem:TT-convergence-scvx-final}
    Under Assumption \ref{assumption:zero-sp} and the conditions in Theorem \ref{theorem:TT-convergence-scvx}, if 
    $\gamma \ge \Omega(q_{\min}^{-2/5})$,
    it holds that $E^{\texttt{RL}}_K
        \le \ O\lp \sqrt{{\sigma^2L}/{K}}\rp$.
\end{restatable}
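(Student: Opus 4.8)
The plan is to treat this as a corollary of \Cref{theorem:TT-convergence-scvx}: that theorem already delivers $E^{\texttt{RL}}_K \le O(\sqrt{\sigma^2/K} + \sigma^2 S^{\texttt{RL}}_K)$, so the only remaining task is to show that, under \Cref{assumption:zero-sp}, the amplification term $\sigma^2 S^{\texttt{RL}}_K$ is harmless — either negligible against the vanishing $O(\sqrt{\sigma^2 L/K})$ rate or absorbable into the left-hand side. The conceptual reason this should work is that $S^{\texttt{RL}}_K$ is built from $G(P_k)/\sqrt{F(P_k)}$ evaluated along the residual array, and $G(0)=0$ places the symmetric point exactly where $P_k$ is being driven; a large $\gamma$ keeps $P_k = P^*(W_k)+o(1) = (W^*-W_k)/\gamma + o(1)$ in a shrinking neighborhood of $0$, so the implicit penalty it feels is second order.

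Concretely, I would first convert $S^{\texttt{RL}}_K$ into a bound on the magnitude of $P_k$. Since $G$ is Lipschitz (as noted before the statement) with $G(0)=0$, we have $\|G(P_k)\|_\infty \le L_G\|P_k\|_\infty$, while the positive-definiteness in \Cref{assumption:response-factor} gives $F(P_k)\ge q_{\min}$, hence $\|G(P_k)/\sqrt{F(P_k)}\|_\infty \le (L_G/\sqrt{q_{\min}})\|P_k\|_\infty$. This is exactly the quoted $L_S\|P_k\|_\infty$ bound with $L_S^2 = L_G^2/q_{\min}$, and it is where the $q_{\min}$-dependence — and ultimately the $q_{\min}^{-2/5}$ threshold — enters. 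Therefore $S^{\texttt{RL}}_K \le (L_G^2/q_{\min})\frac{1}{K}\sum_k \|P_k\|^2$. I would then split $\|P_k\|^2 \le 2\|P_k - P^*(W_k)\|^2 + 2\|P^*(W_k)\|^2$ and use the closed form $P^*(W_k) = (W^*-W_k)/\gamma$ (valid because strong convexity, \Cref{assumption:strongly-cvx}, makes $W^*$ unique), so that $\|P^*(W_k)\|^2 = \|W_k-W^*\|^2/\gamma^2$. Both resulting pieces — the lower-level tracking error and the weight error — are, up to constants, already present on the left-hand side of the definition of $E^{\texttt{RL}}_K$.

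The final step is to close the loop. Writing $A := \frac{1}{K}\sum_k\|P_k-P^*(W_k)\|^2$ and $B := \frac{1}{K}\sum_k\|W_k-W^*\|^2$, the previous step yields $\sigma^2 S^{\texttt{RL}}_K \le \frac{2\sigma^2 L_G^2}{q_{\min}}A + \frac{2\sigma^2 L_G^2}{q_{\min}\gamma^2}B$, and since $E^{\texttt{RL}}_K \ge c_A A + c_B B$ for the positive coefficients hidden in its definition, I would move these two terms back to the left. The requirement is that the net coefficients $c_A - \Theta(\sigma^2 L_G^2/q_{\min})$ and $c_B - \Theta(\sigma^2 L_G^2/(q_{\min}\gamma^2))$ stay bounded away from zero; the lower bound $\gamma \ge \Omega(q_{\min}^{-2/5})$ is what guarantees this, since the $1/\gamma^2$ suppression on the $B$-contribution must beat the $1/q_{\min}$ blow-up of $L_S^2$. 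Once both terms are absorbed, the asymptotic contribution disappears and only the vanishing $O(\sqrt{\sigma^2 L/K})$ rate survives.

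The main obstacle is the coefficient bookkeeping, not the idea. The constants $c_A$ and $c_B$ are not free: they are inherited from the contraction rates in the Lyapunov analysis behind \Cref{theorem:TT-convergence-scvx}, which carry their own $\gamma$-dependence (the lower-level problem $\min_P f(W+\gamma P)$ has effective curvature $\gamma^2\mu$, and the transfer step uses $\beta = O(\alpha\gamma^{3/2})$, so both error sequences contract faster as $\gamma$ grows). Pinning down the exponent therefore requires reopening that argument rather than invoking the theorem as a pure black box, and carefully balancing the $\gamma^{-2}$ decay of the residual-magnitude contribution against the $\gamma$-growing contraction coefficients of the two error sequences; this trade-off is exactly what selects $q_{\min}^{-2/5}$ over the naive $q_{\min}^{-1/2}$ one would read off from $L_S$ alone.
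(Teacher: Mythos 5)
Your proposal follows essentially the same route as the paper's proof: bound $\lnorm G(P_k)/\sqrt{F(P_k)}\rnorm_\infty^2 \le L_S^2\|P_k\|^2$ via Lipschitz continuity and $G(0)=0$, split $\|P_k\|^2 \le 2\|P_k-P^*(W_k)\|^2 + 2\gamma^{-2}\|W_k-W^*\|^2$ using $P^*(W)=(W^*-W)/\gamma$, and absorb $\sigma^2 S^{\texttt{RL}}_K$ into the matching terms of $E^{\texttt{RL}}_K$, with the $\gamma$ threshold coming from the coefficient balance. The one place your bookkeeping diverges is the source of the exponent: the paper treats $L_S$ as a $q_{\min}$-independent constant and obtains $\gamma\ge\Omega(q_{\min}^{-2/5})$ from the requirement $\min\{M(W_k)\}\gamma^3 \ge \gamma^{-2}$ with $\min\{M(W_k)\}\ge q_{\min}^2$ (the weighted norm $\|W_k-W^*\|^2_{M(W_k)}$ in $E^{\texttt{RL}}_K$), whereas your attribution via $L_S^2 = L_G^2/q_{\min}$ would yield a different exponent; this is a minor accounting discrepancy, not a flaw in the overall argument.
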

The proof of Corollary \ref{theorem:TT-convergence-scvx-final} is deferred to Appendix \ref{section:proof-TT-convergence-scvx-final}. 
Corollary \ref{theorem:TT-convergence-scvx-final} demonstrates the failure of {\TT} in Figure \ref{fig:TT-fails}. The symmetric point is $w^\diamond = c_{\texttt{Lin}}\tau$ in this example, which violates Assumption \ref{assumption:zero-sp} when $c_{\texttt{Lin}}\ne 0$ and hence introduces asymptotic error into {\ResidualLearning}.
\section{Extension of Residual Learning: limited granularity and noisy IO}
\label{section:RLv2}
This section extends {\ResidualLearning} to practical scenarios with additional hardware imperfections.
To be specific, we consider the \textit{noisy IO} and \textit{limited granularity} as examples.
We highlight that we are not trying to diminish the importance of imperfection, but rather focus on two of the primary ones known to be crucial.

IO of resistive crossbar arrays introduces noise during the reading of $P_{k+1}$ in the transfer process \eqref{recursion:HD-W}, given by $W_{k+1} = W_k + \beta (P_{k+1}+\varepsilon_k)\odot F(W_k) - \beta|P_{k+1}+\varepsilon_k|\odot G(W_k)$ with a noise $\varepsilon_k$.
It incurs the implicit penalty issues again, leading to a penalized upper-level objective $\|P^*(W)\|^2 + \la \Sigma_\varepsilon, R_c(W)\ra$, as claimed by \Cref{theorem:implicit-regularization-short}, where $\Sigma_\varepsilon = \mbE[|\varepsilon_k|]$ is assumed to be a constant. To filter out the noise, we propose to use a digital buffer $H_k$ to take a moving average of noisy $P_{k+1}$ signals by
\begin{align} 
    \label{recursion:TT-W-v2}
    H_{k+1} =&\ (1-\beta) H_k + \beta (P_{k+1} + \varepsilon_{k+1}).
\end{align}
Intuitively, with a fixed $P_{k+1}$, $H_k$ will converge to a neighborhood of $P_{k+1}$ with radius $O(\beta)$. Therefore, a sufficiently small $\beta$ renders $H_k$ a fair approximation of noiseless $P_k$, enabling optimizing the upper-level objective with clearer signals. After that, $H_{k+1}$ is transferred to $W_k$ as follows
\begin{align}  \label{recursion:TT-v2-W}
    W_{k+1} =&\ W_k + \beta H_{k+1}\odot F(W_k) - \beta|H_{k+1}|\odot G(W_k).
\end{align}
Furthermore, the transfer process suffers from a constant error of $O(\Delta w_{\min})$ due to the discrete pulse firing, each of which changes the weight by $O(\Delta w_{\min})$. To overcome these issues, we propose introducing a threshold mechanism that does not transfer the entire $H_{k+1}$ to $W_k$ at each iteration, as in \eqref{recursion:TT-v2-W}. Instead, we compute an intermediate value by $H_{k+\frac{1}{2}} = (1-\beta) H_k + \beta (P_{k+1} + \varepsilon_{k+1})$ first.
At each coordinate $d$, if the value $|[H_{k+\frac{1}{2}}]_d| \ge \Delta w_{\min}$, one pulse will be fired to $[W_k]_d$ and update the digital buffer by $[H_{k+1}]_d = [H_{k+\frac{1}{2}}]_d - \Delta w_{\min}$ or $[H_{k+1}]_d = [H_{k+\frac{1}{2}}]_d + \Delta w_{\min}$, where the sign of increment is determined by the sign of $[H_{k+\frac{1}{2}}]_d$. Otherwise, no transfer is triggered if the intermediate value falls below the threshold, i.e., $[H_{k+1}]_d = [H_{k+\frac{1}{2}}]_d$.
The proposed algorithms are referred to as {\ResidualLearning} v2.

\begin{table*}[!t]
    \centering
    \begin{tabular}{c|c|c|c|c|c}
    \toprule
    &\multicolumn{5}{c}{\textbf{CIFAR10}} \\
    \cmidrule(lr){2-6}
        & \texttt{DSGD} & \texttt{ASGD} & \texttt{TT/RL} & \texttt{TTv2} & \texttt{RLv2} \\
    \midrule
        ResNet18 & 95.43\stdv{$\pm$0.13} & 84.47\stdv{$\pm$3.40} & 94.81\stdv{$\pm$0.09} & 95.31\stdv{$\pm$0.05} & 95.12\stdv{$\pm$0.14} \\
        ResNet34 & 96.48\stdv{$\pm$0.02} & 95.43\stdv{$\pm$0.12} & 96.29\stdv{$\pm$0.12} & 96.60\stdv{$\pm$0.05} & 96.42\stdv{$\pm$0.13} \\
        ResNet50 & 96.57\stdv{$\pm$0.10} & 94.36\stdv{$\pm$1.16} & 96.34\stdv{$\pm$0.04} & 96.63\stdv{$\pm$0.09} & 96.56\stdv{$\pm$0.08} \\
    \midrule
    &\multicolumn{5}{c}{\textbf{CIFAR100}} \\
    \cmidrule(lr){2-6}
        & \texttt{DSGD} & \texttt{ASGD} & \texttt{TT/RL} & \texttt{TTv2} & \texttt{RLv2} \\
    \midrule
        ResNet18 & 81.12\stdv{$\pm$0.25} & 68.98\stdv{$\pm$1.01} & 76.17\stdv{$\pm$0.23} & 78.56\stdv{$\pm$0.29} & 79.83\stdv{$\pm$0.13} \\
        ResNet34 & 83.86\stdv{$\pm$0.12} & 78.98\stdv{$\pm$0.55} & 80.58\stdv{$\pm$0.11} & 81.81\stdv{$\pm$0.15} & 82.85\stdv{$\pm$0.19} \\
        ResNet50 & 83.98\stdv{$\pm$0.11} & 79.88\stdv{$\pm$1.26} & 80.80\stdv{$\pm$0.22} & 82.82\stdv{$\pm$0.33} & 83.90\stdv{$\pm$0.20} \\
    \midrule
    \bottomrule
    \end{tabular}
    \caption{Fine-tuning ResNet models with the \emph{power response} on CIFAR10/100. Test accuracy is reported. \texttt{DSGD}, \texttt{ASGD}, \texttt{TT/RL}, \texttt{TTv2}, and \texttt{RLv2} represent \DigitalSGD, \AnalogSGD, {\ResidualLearning}/\TT, and {\ResidualLearning} \texttt{v2}, respectively.}
    \label{table:CIFAR-fine-tune-pow}
    \vspace{-1\baselineskip}
\end{table*}

\section{Numerical Simulations}
\vspace{-0.3\baselineskip}
\label{section:experiments}
In this section, we verify the main theoretical results by simulations on both synthetic datasets and real datasets.
We use the open source toolkit \ac{AIHWKIT}~\citep{Rasch2021AFA} to simulate the behaviors of {\AnalogSGD}, {\ResidualLearning} (which reduces to {\TT}).
Each simulation is repeated three times, and the mean and standard deviation are reported. 
We consider two types of response functions in our simulations: power and exponential response functions
with dynamic ranges $[-\tau, \tau]$ and
the symmetric point being 0, as required by Corollary \ref{theorem:TT-convergence-scvx-final}. 
More details, simulations, and ablation studies can be found in Appendix \ref{section:experiment-setting}. The code of our simulations is available at \url{github.com/Zhaoxian-Wu/analog-training}.

\textbf{FCN/CNN @ MNIST.} 
We train a fully-connected network (FCN) and a convolutional neural network (CNN) on the MNIST dataset and compare the performance of \texttt{Analog\;SGD} and \texttt{Tiki-Taka} under various dynamic range $\tau$ on power responses; see the results in Figure \ref{figure:FCN-CNN-MNIST}.
By tracking residual, {\ResidualLearning} outperforms \AnalogSGD~and reaches comparable accuracy with \DigitalSGD. 
For both architectures, the accuracy of {\ResidualLearning} drops by $<1\%$.
In contrast, \AnalogSGD~takes a few epochs to achieve a noticeable increase in accuracy in FCN training, rendering a slower convergence rate than {\ResidualLearning}. In CNN training, \AnalogSGD's accuracy increases more slowly than {\ResidualLearning}, eventually settling at about 80\%. It is consistent with the theoretical claims.

\textbf{ResNet @ CIFAR10/CIFAR100.} We fine-tune three ResNet models with different scales on CIFAR10/CIFAR100 datasets. The power response functions are used, whose results are shown in Table \ref{table:CIFAR-fine-tune-pow}.
The results show that the \TT~outperforms \AnalogSGD~by about $1.0\%$ in most of the cases in ResNet34/50, and the gap even reaches about $7.0\%$ for ResNet18 training on the CIFAR100 dataset. 
On top of that, we also compare the proposed {\ResidualLearning} \texttt{v2} and {\TT} \texttt{v2}. Both of them outperform {\ResidualLearning} since they introduce a digital buffer to filter out the reading noise. However, {\ResidualLearning} \texttt{v2} outperforms {\TT} \texttt{v2} on the CIFAR100 dataset, demonstrating the benefit from the bilevel formulation.
\textbf{Ablation study on $\gamma$}.
We conduct simulations to study the impact of mixing coefficient $\gamma$ in \eqref{recursion:HD-P} on the CIFAR10 or CIFAR100 dataset in the ResNet training tasks. The results are presented in Figure \ref{figure:gamma-ablation}, which shows that {\ResidualLearning} achieves a great accuracy gain from increasing $\gamma$ from 0 to 0.1, while the gain saturates from 0.1 to 0.4. Therefore, we conclude that {\ResidualLearning} benefits from a non-zero $\gamma$, and the performance is robust to the $\gamma$ selection.
\begin{figure*}[!t]
    \footnotesize
    \centering
    \includegraphics[width=.4\linewidth]{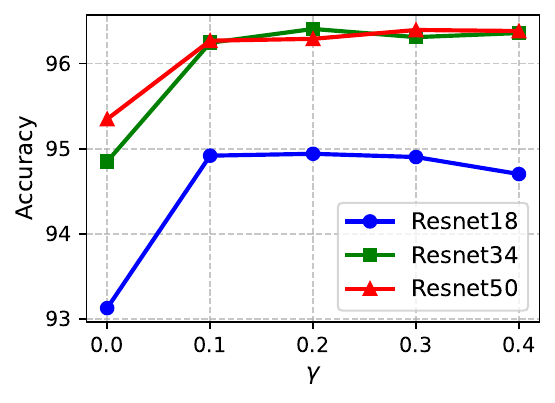}
    \includegraphics[width=0.4\linewidth]{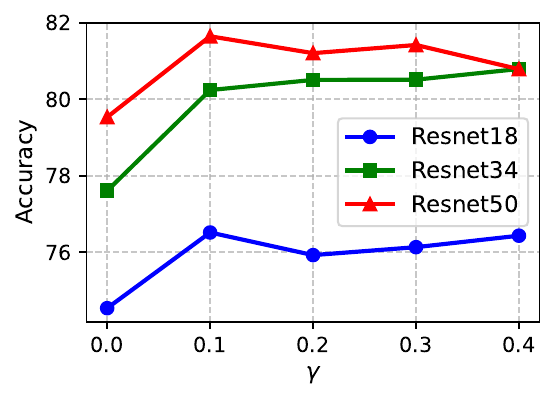}
    \caption{The test accuracy of ResNet family models after 100 epochs trained by {\ResidualLearning}~under different $\gamma$ in \eqref{recursion:HD-P}; \textbf{(Left)} CIFAR10. \textbf{(Right)} CIFAR100. 
    }
    \label{figure:gamma-ablation}
\end{figure*}

\section{Conclusions and Limitations}
This paper studies the impact of a generic class of asymmetric and non-linear response functions on gradient-based training in analog in-memory computing hardware. We first formulate the dynamics of \AnalogUpdate~based on the pulse update rule.
Based on it, we show that {\AnalogSGD} implicitly optimizes a penalized objective and hence can only converge inexactly.
To overcome this issue, we propose a {\ResidualLearning} framework which solves a bilevel optimization problem. Explicitly aligning the algorithmic stationary point and physical symmetric point, {\ResidualLearning} provably converges to the optimal point exactly. Furthermore, we demonstrate how to extend {\ResidualLearning} to overcome the noisy reading and limited update granularity issues. The efficiency of the proposed method is verified through simulations. 
One limitation of this work is that the current analysis considers only the three hardware imperfections. While they are known to be crucial for analog training, it is also important to extend our convergence analysis and methods to more practical scenarios involving more imperfections in future work. 
\bibliographystyle{unsrt}
\bibliography{
    bibs/abrv,
    bibs/bib, 
    bibs/optimization,
    bibs/analog,
    bibs/LLM,
    bibs/textbook,
    bibs/math,
    bibs/publication
}

\begin{thebibliography}{10}

\bibitem{jouppi2023tpu}
Norm Jouppi, George Kurian, Sheng Li, Peter Ma, Rahul Nagarajan, Lifeng Nai, Nishant Patil, Suvinay Subramanian, Andy Swing, Brian Towles, et~al.
\newblock {TPU} v4: An optically reconfigurable supercomputer for machine learning with hardware support for embeddings.
\newblock In {\em Annual International Symposium on Computer Architecture}, pages 1--14, 2023.

\bibitem{esmaeilzadeh2012neural}
Hadi Esmaeilzadeh, Adrian Sampson, Luis Ceze, and Doug Burger.
\newblock Neural acceleration for general-purpose approximate programs.
\newblock In {\em IEEE/ACM international symposium on microarchitecture}, pages 449--460. IEEE, 2012.

\bibitem{modha2023neural}
Dharmendra~S Modha, Filipp Akopyan, Alexander Andreopoulos, Rathinakumar Appuswamy, John~V Arthur, Andrew~S Cassidy, Pallab Datta, Michael~V DeBole, Steven~K Esser, Carlos~Ortega Otero, et~al.
\newblock Neural inference at the frontier of energy, space, and time.
\newblock {\em Science}, 382(6668):329--335, 2023.

\bibitem{touvron2023llama}
Hugo Touvron, Thibaut Lavril, Gautier Izacard, Xavier Martinet, Marie-Anne Lachaux, Timoth{\'e}e Lacroix, Baptiste Rozi{\`e}re, Naman Goyal, Eric Hambro, Faisal Azhar, et~al.
\newblock Llama: Open and efficient foundation language models.
\newblock {\em arXiv preprint arXiv:2302.13971}, 2023.

\bibitem{brown2020language}
Tom Brown, Benjamin Mann, Nick Ryder, Melanie Subbiah, Jared~D Kaplan, Prafulla Dhariwal, Arvind Neelakantan, Pranav Shyam, Girish Sastry, Amanda Askell, et~al.
\newblock Language models are few-shot learners.
\newblock {\em Advances in neural information processing systems}, 33:1877--1901, 2020.

\bibitem{chen2013comprehensive}
An~Chen.
\newblock A comprehensive crossbar array model with solutions for line resistance and nonlinear device characteristics.
\newblock {\em IEEE Transactions on Electron Devices}, 60(4):1318--1326, 2013.

\bibitem{haensch2019next}
Wilfried Haensch, Tayfun Gokmen, and Ruchir Puri.
\newblock The next generation of deep learning hardware: Analog computing.
\newblock {\em Proceedings of the IEEE}, 107(1):108--122, 2019.

\bibitem{sze2017efficient}
Vivienne Sze, Yu-Hsin Chen, Tien-Ju Yang, and Joel~S Emer.
\newblock Efficient processing of deep neural networks: A tutorial and survey.
\newblock {\em Proceedings of the IEEE}, 105(12):2295--2329, 2017.

\bibitem{Y2020sebastianNatNano}
Abu Sebastian, Manuel Le~Gallo, Riduan Khaddam-Aljameh, and Evangelos Eleftheriou.
\newblock Memory devices and applications for in-memory computing.
\newblock {\em Nature Nanotechnology}, 15:529--544, 2020.

\bibitem{le202364}
Manuel Le~Gallo, Riduan Khaddam-Aljameh, Milos Stanisavljevic, Athanasios Vasilopoulos, Benedikt Kersting, Martino Dazzi, Geethan Karunaratne, Matthias Br{\"a}ndli, Abhairaj Singh, Silvia~M Mueller, et~al.
\newblock A 64-core mixed-signal in-memory compute chip based on phase-change memory for deep neural network inference.
\newblock {\em Nature Electronics}, 6(9):680--693, 2023.

\bibitem{burr2017neuromorphic}
Geoffrey~W Burr, Robert~M Shelby, Abu Sebastian, Sangbum Kim, Seyoung Kim, Severin Sidler, Kumar Virwani, Masatoshi Ishii, Pritish Narayanan, Alessandro Fumarola, et~al.
\newblock Neuromorphic computing using non-volatile memory.
\newblock {\em Advances in Physics: X}, 2(1):89--124, 2017.

\bibitem{yang2013memristive}
J~Joshua Yang, Dmitri~B Strukov, and Duncan~R Stewart.
\newblock Memristive devices for computing.
\newblock {\em Nature nanotechnology}, 8(1):13, 2013.

\bibitem{jain2019neural}
Shubham Jain et~al.
\newblock Neural network accelerator design with resistive crossbars: Opportunities and challenges.
\newblock {\em IBM Journal of Research and Development}, 63(6):10--1, 2019.

\bibitem{cosemans2019towards}
Stefan Cosemans, Bram-Ernst Verhoef, Jonas Doevenspeck, Ioannis~A. Papistas, Francky Catthoor, Peter Debacker, Arindam Mallik, and Diederik Verkest.
\newblock Towards 10000{TOPS/W} {DNN} inference with analog in-memory computing – a circuit blueprint, device options and requirements.
\newblock In {\em IEEE International Electron Devices Meeting}, pages 22.2.1--22.2.4, 2019.

\bibitem{papistas202122}
Ioannis~A Papistas, Stefan Cosemans, Bram Rooseleer, Jonas Doevenspeck, M-H Na, Arindam Mallik, Peter Debacker, and Diederik Verkest.
\newblock A 22 nm, 1540 {TOP}/s/{W}, 12.1 {TOP}/s/mm 2 in-memory analog matrix-vector-multiplier for {DNN} acceleration.
\newblock In {\em IEEE Custom Integrated Circuits Conference}, pages 1--2. IEEE, 2021.

\bibitem{gokmen2016acceleration}
Tayfun Gokmen and Yurii Vlasov.
\newblock Acceleration of deep neural network training with resistive cross-point devices: Design considerations.
\newblock {\em Frontiers in neuroscience}, 10:333, 2016.

\bibitem{burr2015experimental}
Geoffrey~W Burr, Robert~M Shelby, Severin Sidler, Carmelo Di~Nolfo, Junwoo Jang, Irem Boybat, Rohit~S Shenoy, Pritish Narayanan, Kumar Virwani, Emanuele~U Giacometti, et~al.
\newblock Experimental demonstration and tolerancing of a large-scale neural network (165 000 synapses) using phase-change memory as the synaptic weight element.
\newblock {\em IEEE Transactions on Electron Devices}, 62(11):3498--3507, 2015.

\bibitem{agarwal2016resistive}
Sapan Agarwal, Steven~J Plimpton, David~R Hughart, Alexander~H Hsia, Isaac Richter, Jonathan~A Cox, Conrad~D James, and Matthew~J Marinella.
\newblock Resistive memory device requirements for a neural algorithm accelerator.
\newblock In {\em International Joint Conference on Neural Networks}, pages 929--938. IEEE, 2016.

\bibitem{chen2015mitigating}
Paiyu Chen, Binbin Lin, I-Ting Wang, Tuohung Hou, Jieping Ye, Sarma Vrudhula, Jae-sun Seo, Yu~Cao, and Shimeng Yu.
\newblock Mitigating effects of non-ideal synaptic device characteristics for on-chip learning.
\newblock In {\em IEEE/ACM International Conference on Computer-Aided Design}, pages 194--199. IEEE, 2015.

\bibitem{joshi2020accurate}
Vinay Joshi, Manuel Le~Gallo, Simon Haefeli, Irem Boybat, Sasidharan~Rajalekshmi Nandakumar, Christophe Piveteau, Martino Dazzi, Bipin Rajendran, Abu Sebastian, and Evangelos Eleftheriou.
\newblock Accurate deep neural network inference using computational phase-change memory.
\newblock {\em Nature communications}, 11(1):2473, 2020.

\bibitem{wu2024towards}
Zhaoxian Wu, Tayfun Gokmen, Malte~J Rasch, and Tianyi Chen.
\newblock Towards exact gradient-based training on analog in-memory computing.
\newblock In {\em Advances in Neural Information Processing Systems}, 2024.

\bibitem{gokmen2020}
Tayfun Gokmen and Wilfried Haensch.
\newblock Algorithm for training neural networks on resistive device arrays.
\newblock {\em Frontiers in Neuroscience}, 14, 2020.

\bibitem{gokmen2021}
Tayfun Gokmen.
\newblock Enabling training of neural networks on noisy hardware.
\newblock {\em Frontiers in Artificial Intelligence}, 4:1--14, 2021.

\bibitem{rasch2024fast}
Malte~J Rasch, Fabio Carta, Omobayode Fagbohungbe, and Tayfun Gokmen.
\newblock Fast and robust analog in-memory deep neural network training.
\newblock {\em Nature Communications}, 15(1):7133--7147, 2024.

\bibitem{yao2017face}
Peng Yao, Huaqiang Wu, Bin Gao, Sukru~Burc Eryilmaz, Xueyao Huang, Wenqiang Zhang, Qingtian Zhang, Ning Deng, Luping Shi, H-S~Philip Wong, et~al.
\newblock Face classification using electronic synapses.
\newblock {\em Nature communications}, 8(1):15199, 2017.

\bibitem{wang2019situ}
Zhongrui Wang, Can Li, Peng Lin, Mingyi Rao, Yongyang Nie, Wenhao Song, Qinru Qiu, Yunning Li, Peng Yan, John~Paul Strachan, et~al.
\newblock In situ training of feed-forward and recurrent convolutional memristor networks.
\newblock {\em Nature Machine Intelligence}, 1(9):434--442, 2019.

\bibitem{wang2020ssm}
Yaoyuan Wang, Shuang Wu, Lei Tian, and Luping Shi.
\newblock {SSM}: a high-performance scheme for in situ training of imprecise memristor neural networks.
\newblock {\em Neurocomputing}, 407:270--280, 2020.

\bibitem{huang2020overcoming}
Shanshi Huang, Xiaoyu Sun, Xiaochen Peng, Hongwu Jiang, and Shimeng Yu.
\newblock Overcoming challenges for achieving high in-situ training accuracy with emerging memories.
\newblock In {\em Design, Automation \& Test in Europe Conference \& Exhibition}, pages 1025--1030. IEEE, 2020.

\bibitem{wan2022compute}
Weier Wan, Rajkumar Kubendran, Clemens Schaefer, Sukru~Burc Eryilmaz, Wenqiang Zhang, Dabin Wu, Stephen Deiss, Priyanka Raina, He~Qian, Bin Gao, et~al.
\newblock A compute-in-memory chip based on resistive random-access memory.
\newblock {\em Nature}, 608(7923):504--512, 2022.

\bibitem{yao2020fully}
Peng Yao, Huaqiang Wu, Bin Gao, Jianshi Tang, Qingtian Zhang, Wenqiang Zhang, J~Joshua Yang, and He~Qian.
\newblock Fully hardware-implemented memristor convolutional neural network.
\newblock {\em Nature}, 577(7792):641--646, 2020.

\bibitem{nandakumar2018}
S.~R. Nandakumar, Manuel Le~Gallo, Irem Boybat, Bipin Rajendran, Abu Sebastian, and Evangelos Eleftheriou.
\newblock Mixed-precision architecture based on computational memory for training deep neural networks.
\newblock In {\em IEEE International Symposium on Circuits and Systems}, pages 1--5, 2018.

\bibitem{nandakumar2020}
S.~R. Nandakumar, Manuel Le~Gallo, Christophe Piveteau, Vinay Joshi, Giovanni Mariani, Irem Boybat, Geethan Karunaratne, Riduan Khaddam-Aljameh, Urs Egger, Anastasios Petropoulos, Theodore Antonakopoulos, Bipin Rajendran, Abu Sebastian, and Evangelos Eleftheriou.
\newblock Mixed-precision deep learning based on computational memory.
\newblock {\em Frontiers in Neuroscience}, 14, 2020.

\bibitem{scellier2017equilibrium}
Benjamin Scellier and Yoshua Bengio.
\newblock Equilibrium propagation: Bridging the gap between energy-based models and backpropagation.
\newblock {\em Frontiers in computational neuroscience}, 11:24, 2017.

\bibitem{watfa2023energy}
Mohamed Watfa, Alberto Garcia-Ortiz, and Gilles Sassatelli.
\newblock Energy-based analog neural network framework.
\newblock {\em Frontiers in Computational Neuroscience}, 17:1114651, 2023.

\bibitem{scellier2024energy}
Benjamin Scellier, Maxence Ernoult, Jack Kendall, and Suhas Kumar.
\newblock Energy-based learning algorithms for analog computing: a comparative study.
\newblock {\em Advances in Neural Information Processing Systems}, 36, 2024.

\bibitem{kendall2020training}
Jack Kendall, Ross Pantone, Kalpana Manickavasagam, Yoshua Bengio, and Benjamin Scellier.
\newblock Training end-to-end analog neural networks with equilibrium propagation.
\newblock {\em arXiv preprint arXiv:2006.01981}, 2020.

\bibitem{ernoult2020equilibrium}
Maxence Ernoult, Julie Grollier, Damien Querlioz, Yoshua Bengio, and Benjamin Scellier.
\newblock Equilibrium propagation with continual weight updates.
\newblock {\em arXiv preprint arXiv:2005.04168}, 2020.

\bibitem{bottou2018optimization}
L{\'e}on Bottou, Frank~E Curtis, and Jorge Nocedal.
\newblock Optimization methods for large-scale machine learning.
\newblock {\em SIAM review}, 60(2):223--311, 2018.

\bibitem{kim2019zero}
Hyungjun Kim, Malte~J Rasch, Tayfun Gokmen, Takashi Ando, Hiroyuki Miyazoe, Jae-Joon Kim, John Rozen, and Seyoung Kim.
\newblock Zero-shifting technique for deep neural network training on resistive cross-point arrays.
\newblock {\em arXiv preprint arXiv:1907.10228}, 2019.

\bibitem{xiao2023}
Quan Xiao, Songtao Lu, and Tianyi Chen.
\newblock A generalized alternating method for bilevel learning under the polyak-{\l}ojasiewicz condition.
\newblock In {\em Proc. Advances in Neural Info. Process. Syst.}, 2023.

\bibitem{arbel2022nonconvex}
Michael Arbel and Julien Mairal.
\newblock Non-convex bilevel games with critical point selection maps.
\newblock In {\em Advances in Neural Information Processing Systems}, 2022.

\bibitem{shen2023penalty}
Han Shen, Quan Xiao, and Tianyi Chen.
\newblock On penalty-based bilevel gradient descent method.
\newblock In {\em Proc. of International Conference on Machine Learning}, 2023.

\bibitem{kwon2023penalty}
Jeongyeol Kwon, Dohyun Kwon, Steve Wright, and Robert Nowak.
\newblock On penalty methods for nonconvex bilevel optimization and first-order stochastic approximation.
\newblock In {\em Proc. of International Conference on Learning Representations}, 2024.

\bibitem{Rasch2021AFA}
Malte~J Rasch, Diego Moreda, Tayfun Gokmen, Manuel Le~Gallo, Fabio Carta, Cindy Goldberg, Kaoutar El~Maghraoui, Abu Sebastian, and Vijay Narayanan.
\newblock A flexible and fast {PyTorch} toolkit for simulating training and inference on analog crossbar arrays.
\newblock {\em IEEE International Conference on Artificial Intelligence Circuits and Systems}, pages 1--4, 2021.

\bibitem{Burr2016}
Geoffrey~W Burr, Matthew~J BrightSky, Abu Sebastian, Huai-Yu Cheng, Jau-Yi Wu, Sangbum Kim, Norma~E Sosa, Nikolaos Papandreou, Hsiang-Lan Lung, Haralampos Pozidis, Evangelos Eleftheriou, and Chung~H Lam.
\newblock {Recent Progress in Phase-Change Memory Technology}.
\newblock {\em IEEE Journal on Emerging and Selected Topics in Circuits and Systems}, 6(2):146--162, 2016.

\bibitem{2020legalloJPD}
Manuel Le~Gallo and Abu Sebastian.
\newblock An overview of phase-change memory device physics.
\newblock {\em Journal of Physics D: Applied Physics}, 53(21):213002, 2020.

\bibitem{Jang2014}
Jun-Woo Jang, Sangsu Park, Yoon-Ha Jeong, and Hyunsang Hwang.
\newblock {ReRAM}-based synaptic device for neuromorphic computing.
\newblock In {\em IEEE International Symposium on Circuits and Systems}, pages 1054--1057, 2014.

\bibitem{Jang2015}
Jun-Woo Jang, Sangsu Park, Geoffrey~W Burr, Hyunsang Hwang, and Yoon-Ha Jeong.
\newblock Optimization of conductance change in {Pr$_{1-x}$Ca$_x$MnO$_3$}-based synaptic devices for neuromorphic systems.
\newblock {\em IEEE Electron Device Letters}, 36(5):457--459, 2015.

\bibitem{stecconi2024analog}
Tommaso Stecconi, Valeria Bragaglia, Malte~J Rasch, Fabio Carta, Folkert Horst, Donato~F Falcone, Sofieke~C Ten~Kate, Nanbo Gong, Takashi Ando, Antonis Olziersky, et~al.
\newblock Analog resistive switching devices for training deep neural networks with the novel {Tiki-Taka} algorithm.
\newblock {\em Nano Letters}, 24(3):866--872, 2024.

\bibitem{Lim2018}
Seokjae Lim, Myounghoon Kwak, and Hyunsang Hwang.
\newblock Improved synaptic behavior of {CBRAM} using internal voltage divider for neuromorphic systems.
\newblock {\em IEEE Transactions on Electron Devices}, 65(9):3976--3981, 2018.

\bibitem{Fuller2019}
Elliot~J Fuller, Scott~T Keene, Armantas Melianas, Zhongrui Wang, Sapan Agarwal, Yiyang Li, Yaakov Tuchman, Conrad~D James, Matthew~J Marinella, J~Joshua Yang, Alberto Salleo, and A~Alec Talin.
\newblock Parallel programming of an ionic floating-gate memory array for scalable neuromorphic computing.
\newblock {\em Science}, 364(6440):570--574, 2019.

\bibitem{tang2018ecram}
Jianshi Tang, Douglas Bishop, Seyoung Kim, Matt Copel, Tayfun Gokmen, Teodor Todorov, SangHoon Shin, Ko-Tao Lee, Paul Solomon, Kevin Chan, et~al.
\newblock {ECRAM} as scalable synaptic cell for high-speed, low-power neuromorphic computing.
\newblock In {\em IEEE International Electron Devices Meeting}, pages 13--1. IEEE, 2018.

\bibitem{onen2022nanosecond}
Murat Onen, Nicolas Emond, Baoming Wang, Difei Zhang, Frances~M Ross, Ju~Li, Bilge Yildiz, and Jes{\'u}s~A Del~Alamo.
\newblock Nanosecond protonic programmable resistors for analog deep learning.
\newblock {\em Science}, 377(6605):539--543, 2022.

\bibitem{jung2022crossbar}
Seungchul Jung, Hyungwoo Lee, Sungmeen Myung, Hyunsoo Kim, Seung~Keun Yoon, Soon-Wan Kwon, Yongmin Ju, Minje Kim, Wooseok Yi, Shinhee Han, et~al.
\newblock A crossbar array of magnetoresistive memory devices for in-memory computing.
\newblock {\em Nature}, 601(7892):211--216, 2022.

\bibitem{xiao2024adapting}
Zhihua Xiao, Vinayak~Bharat Naik, Jia~Hao Lim, Yaoru Hou, Zhongrui Wang, and Qiming Shao.
\newblock Adapting magnetoresistive memory devices for accurate and on-chip-training-free in-memory computing.
\newblock {\em Science Advances}, 10(38):eadp3710, 2024.

\bibitem{guo2020ferroic}
Rui Guo, Weinan Lin, Xiaobing Yan, T~Venkatesan, and Jingsheng Chen.
\newblock Ferroic tunnel junctions and their application in neuromorphic networks.
\newblock {\em Applied physics reviews}, 7(1), 2020.

\bibitem{wang2018three}
Panni Wang, Feng Xu, Bo~Wang, Bin Gao, Huaqiang Wu, He~Qian, and Shimeng Yu.
\newblock Three-dimensional {NAND} flash for vector-matrix multiplication.
\newblock {\em IEEE Transactions on Very Large Scale Integration Systems}, 27(4):988--991, 2018.

\bibitem{xiang2020efficient}
Yachen Xiang, Peng Huang, Runze Han, Chu Li, Kunliang Wang, Xiaoyan Liu, and Jinfeng Kang.
\newblock Efficient and robust spike-driven deep convolutional neural networks based on {NOR} flash computing array.
\newblock {\em IEEE Transactions on Electron Devices}, 67(6):2329--2335, 2020.

\bibitem{merrikh2017high}
Farnood Merrikh-Bayat, Xinjie Guo, Michael Klachko, Mirko Prezioso, Konstantin~K Likharev, and Dmitri~B Strukov.
\newblock High-performance mixed-signal neurocomputing with nanoscale floating-gate memory cell arrays.
\newblock {\em IEEE Transactions on Nneural Networks and Learning Systems}, 29(10):4782--4790, 2017.

\bibitem{zhang2022statistical}
Bonan Zhang, Peter Deaville, and Naveen Verma.
\newblock Statistical computing framework and demonstration for in-memory computing systems.
\newblock In {\em ACM/IEEE Design Automation Conference}, pages 979--984, 2022.

\bibitem{deaville2021maximally}
Peter Deaville, Bonan Zhang, Lung-Yen Chen, and Naveen Verma.
\newblock A maximally row-parallel {MRAM} in-memory-computing macro addressing readout circuit sensitivity and area.
\newblock In {\em IEEE European Solid State Circuits Conference}, pages 75--78. IEEE, 2021.

\bibitem{lee2019exploring}
Jung-Hoon Lee, Dong-Hyeok Lim, Hongsik Jeong, Huimin Ma, and Luping Shi.
\newblock Exploring cycle-to-cycle and device-to-device variation tolerance in mlc storage-based neural network training.
\newblock {\em IEEE Transactions on Electron Devices}, 66(5):2172--2178, 2019.

\bibitem{zhang2017memory}
Jintao Zhang, Zhuo Wang, and Naveen Verma.
\newblock In-memory computation of a machine-learning classifier in a standard 6t {SRAM} array.
\newblock {\em IEEE Journal of Solid-State Circuits}, 52(4):915--924, 2017.

\bibitem{gokmen2019marriage}
Tayfun Gokmen, Malte~J Rasch, and Wilfried Haensch.
\newblock The marriage of training and inference for scaled deep learning analog hardware.
\newblock In {\em IEEE International Electron Devices Meeting}, pages 22--3. IEEE, 2019.

\bibitem{lammie2024improving}
Corey Lammie, Athanasios Vasilopoulos, Julian B{\"u}chel, Giacomo Camposampiero, Manuel Le~Gallo, Malte Rasch, and Abu Sebastian.
\newblock Improving the accuracy of analog-based in-memory computing accelerators post-training.
\newblock In {\em IEEE International Symposium on Circuits and Systems}, pages 1--5. IEEE, 2024.

\bibitem{jin2022pim}
Qing Jin, Zhiyu Chen, Jian Ren, Yanyu Li, Yanzhi Wang, and Kaiyuan Yang.
\newblock {PIM-QAT}: Neural network quantization for processing-in-memory ({PIM}) systems.
\newblock {\em arXiv preprint arXiv:2209.08617}, 2022.

\bibitem{rasch2023hardware}
Malte~J Rasch, Charles Mackin, Manuel Le~Gallo, An~Chen, Andrea Fasoli, Fr{\'e}d{\'e}ric Odermatt, Ning Li, S.~R. Nandakumar, Pritish Narayanan, Hsinyu Tsai, et~al.
\newblock Hardware-aware training for large-scale and diverse deep learning inference workloads using in-memory computing-based accelerators.
\newblock {\em Nature Communications}, 14(1):5282, 2023.

\bibitem{zhang2024reshape}
Bonan Zhang, Chia-Yu Chen, and Naveen Verma.
\newblock Reshape and adapt for output quantization ({RAOQ}): Quantization-aware training for in-memory computing systems.
\newblock In {\em International Conference on Machine Learning}, 2024.

\bibitem{liu2015vortex}
Beiye Liu, Hai Li, Yiran Chen, Xin Li, Qing Wu, and Tingwen Huang.
\newblock {Vortex}: Variation-aware training for memristor x-bar.
\newblock In {\em Proceedings of the 52nd Annual Design Automation Conference}, pages 1--6, 2015.

\bibitem{bhattacharjee2020neat}
Abhiroop Bhattacharjee, Lakshya Bhatnagar, Youngeun Kim, and Priyadarshini Panda.
\newblock {NEAT}: Non-linearity aware training for accurate and energy-efficient implementation of neural networks on 1t-1r memristive crossbars.
\newblock {\em arXiv preprint arXiv:2012.00261}, 2020.

\bibitem{gokmen2017cnn}
Tayfun Gokmen, Murat Onen, and Wilfried Haensch.
\newblock Training deep convolutional neural networks with resistive cross-point devices.
\newblock {\em Frontiers in neuroscience}, 11:538, 2017.

\bibitem{wang2018fully}
Zhongrui Wang, Saumil Joshi, Sergey Savel’Ev, Wenhao Song, Rivu Midya, Yunning Li, Mingyi Rao, Peng Yan, Shiva Asapu, Ye~Zhuo, et~al.
\newblock Fully memristive neural networks for pattern classification with unsupervised learning.
\newblock {\em Nature Electronics}, 1(2):137--145, 2018.

\bibitem{gong2022deep}
Nanbo Gong, Malte Rasch, Soon-Cheon Seo, Arthur Gasasira, Paul Solomon, Valeria Bragaglia, Steven Consiglio, Hisashi Higuchi, Chanro Park, Kevin Brew, et~al.
\newblock Deep learning acceleration in 14nm {CMOS} compatible {ReRAM} array: device, material and algorithm co-optimization.
\newblock In {\em IEEE International Electron Devices Meeting}, 2022.

\bibitem{wu2024pipeline}
Zhaoxian Wu, Quan Xiao, Tayfun Gokmen, Hsinyu Tsai, Kaoutar~El Maghraoui, and Tianyi Chen.
\newblock Pipeline gradient-based model training on analog in-memory accelerators.
\newblock {\em arXiv preprint arXiv:2410.15155}, 2024.

\bibitem{wright2022deep}
Logan~G Wright, Tatsuhiro Onodera, Martin~M Stein, Tianyu Wang, Darren~T Schachter, Zoey Hu, and Peter~L McMahon.
\newblock Deep physical neural networks trained with backpropagation.
\newblock {\em Nature}, 601(7894):549--555, 2022.

\bibitem{momeni2024training}
Ali Momeni, Babak Rahmani, Benjamin Scellier, Logan~G Wright, Clara~C McMahon, Peter L~andWanjura, Yuhang Li, Anas Skalli, Natalia~G. Berloff, Tatsuhiro Onodera, Ilker Oguz, Francesco Morichetti, Philipp del Hougne, Manuel Le~Gallo, Abu Sebastian, Azalia Mirhoseini, Cheng Zhang, Danijela Marković, Daniel Brunner, Christophe Moser, Sylvain Gigan, Florian Marquardt, Aydogan Ozcan, Julie Grollier, Andrea~J Liu, Demetri Psaltis, Andrea Alù, and Romain Fleury.
\newblock Training of physical neural networks.
\newblock {\em arXiv preprint arXiv:2406.03372}, 2024.

\bibitem{psaltis1990holography}
Demetri Psaltis, David Brady, Xiang-Guang Gu, and Steven Lin.
\newblock Holography in artificial neural networks.
\newblock {\em Nature}, 343(6256):325--330, 1990.

\bibitem{hughes2019wave}
Tyler~W Hughes, Ian~AD Williamson, Momchil Minkov, and Shanhui Fan.
\newblock Wave physics as an analog recurrent neural network.
\newblock {\em Science advances}, 5(12):eaay6946, 2019.

\bibitem{tait2017neuromorphic}
Alexander~N Tait, Thomas~Ferreira De~Lima, Ellen Zhou, Allie~X Wu, Mitchell~A Nahmias, Bhavin~J Shastri, and Paul~R Prucnal.
\newblock Neuromorphic photonic networks using silicon photonic weight banks.
\newblock {\em Scientific reports}, 7(1):7430, 2017.

\bibitem{gong2018signal}
Nanbo Gong, T~Id{\'e}, S~Kim, Irem Boybat, Abu Sebastian, V~Narayanan, and Takashi Ando.
\newblock Signal and noise extraction from analog memory elements for neuromorphic computing.
\newblock {\em Nature communications}, 9(1):2102, 2018.

\bibitem{rao2023thousands}
Mingyi Rao, Hao Tang, Jiangbin Wu, Wenhao Song, Max Zhang, Wenbo Yin, Ye~Zhuo, Fatemeh Kiani, Benjamin Chen, Xiangqi Jiang, et~al.
\newblock Thousands of conductance levels in memristors integrated on {CMOS}.
\newblock {\em Nature}, 615(7954):823--829, 2023.

\bibitem{sharma2024linear}
Deepak Sharma, Santi~Prasad Rath, Bidyabhusan Kundu, Anil Korkmaz, Damien Thompson, Navakanta Bhat, Sreebrata Goswami, R~Stanley Williams, and Sreetosh Goswami.
\newblock Linear symmetric self-selecting 14-bit kinetic molecular memristors.
\newblock {\em Nature}, 633(8030):560--566, 2024.

\bibitem{song2024programming}
Wenhao Song, Mingyi Rao, Yunning Li, Can Li, Ye~Zhuo, Fuxi Cai, Mingche Wu, Wenbo Yin, Zongze Li, Qiang Wei, et~al.
\newblock Programming memristor arrays with arbitrarily high precision for analog computing.
\newblock {\em Science}, 383(6685):903--910, 2024.

\bibitem{jain2022heterogeneous}
Shubham Jain, Hsinyu Tsai, Ching-Tzu Chen, Ramachandran Muralidhar, Irem Boybat, Martin~M Frank, Stanis{\l}aw Wo{\'z}niak, Milos Stanisavljevic, Praneet Adusumilli, Pritish Narayanan, et~al.
\newblock A heterogeneous and programmable compute-in-memory accelerator architecture for analog-ai using dense 2-d mesh.
\newblock {\em IEEE Transactions on Very Large Scale Integration Systems}, 31(1):114--127, 2022.

\bibitem{Nesterov2003IntroductoryLO}
Yurii Nesterov.
\newblock {\em Introductory Lectures on Convex Optimization: A Basic Course}.
\newblock Springer, 2013.

\end{thebibliography}
\newpage
\appendix
\onecolumn

\begin{center}
\Large \textbf{Appendix for  ``Analog In-memory Training on Non-ideal Resistive Elements: Understanding the Impact of Response Functions''} \\
\end{center}

\addcontentsline{toc}{section}{} 
\part{} 
\parttoc 

\section{Literature Review}
\label{section:review}
This section briefly reviews literature that is related to this paper, as complementary to Section \ref{section:introduction}.

\textbf{Training on AIMC hardware.}
Analog training has shown promising early successes in tasks such as face classification \cite{yao2017face} and digit classification \cite{wang2019situ}, achieving $1,000\times$ lower energy consumption than digital implementations. 
Researchers are also exploring approaches to mitigate the impact of hardware non-idealities. For example, \cite{wang2020ssm, huang2020overcoming} proposes leveraging the momentum technique to stabilize training by reducing noise. To address other potential non-idealities, a hybrid training paradigm is also being explored. \cite{wan2022compute} leverages the chip-in-the-loop technique to train models layer-by-layer, while \cite{yao2020fully} proposes to train the backbone in the digital domain and train the last layer in the analog domain.  
In general, these works have provided valuable insights into analog training, shedding light on many critical technical challenges. However, their focus has largely been on experimental and simulation aspects, with limited systematic and theoretical analysis of how specific imperfections affect the training process. In our paper, we present an alternative viewpoint and novel tools to explore the effects of non-idealities. 

\textbf{Resistive element.}
A series of works seeks various resistive elements that have near-constant or at least symmetric responses.
The leading candidates currently include {PCM}~\citep{Burr2016,2020legalloJPD}, {ReRAM}~\citep{Jang2014, Jang2015, stecconi2024analog}, 
{CBRAM}~\citep{Lim2018, Fuller2019}, {ECRAM}~\citep{tang2018ecram, onen2022nanosecond}, MRAM \citep{jung2022crossbar, xiao2024adapting}, FTJ \citep{guo2020ferroic} or flash memory \citep{wang2018three, xiang2020efficient, merrikh2017high}. 

However, a resistive element with symmetric updates may not be the best option for manufacturing. 
For example, although ECRAM provides almost symmetric updates, it remains less competitive than ReRAM, which offers faster response speed and lower pulse voltage \citep{stecconi2024analog}.
The suitability of the resistive elements is evaluated using metrics across multiple dimensions, including the number of conductance states, retention, material endurance, switching energy, response speed, manufacturing cost, and cell size. 
Among them, this paper is only interested in the impact of response functions in the training.

\textbf{Imperfection of AIMC hardware.} Besides the response functions, analog training suffers from all kinds of hardware imperfection, especially when the task's scale increases, like asymmetric update \cite{burr2015experimental, chen2015mitigating}, reading/writing noise \cite{agarwal2016resistive, zhang2022statistical, deaville2021maximally}, device/cycle variations \cite{lee2019exploring}, non-linear current response due to IR drop \cite{agarwal2016resistive, chen2013comprehensive, zhang2017memory}.
This paper mainly focuses on asymmetric response functions. However, this paper is not trying to diminish the importance of other hardware imperfections but rather focuses on one of the primary ones known to be very important \cite{chen2015mitigating, gokmen2016acceleration}.

\textbf{Hardware-aware training. } 
For inference on AIMC hardware purposes, models pretrained on digital hardware will be programmed on analog hardware. Due to hardware imperfections, the pretrained models suffer performance drops.
Hardware-aware training (HWA) is a technique designed to bridge the gap between ideal pretrained models and non-ideal programmed models. 
In contrast to standard training methods, hardware-aware training explicitly incorporates device-specific imperfections, such as 
weight drift \cite{joshi2020accurate}, device fail \cite{gokmen2019marriage}, 
bounded dynamic range \cite{lammie2024improving}, 
quantization error from ADC \cite{jin2022pim, rasch2023hardware, zhang2024reshape}, 
device variation \cite{liu2015vortex},
and non-linear current output \cite{bhattacharjee2020neat},
into the training loop. By modeling these constraints during training, the learned parameters become inherently more robust to real-world deployment conditions. 
It is worth highlighting that HWA is still performed on digital hardware, and the trained model will be programmed onto AIMC hardware. On the contrary, this paper considers a different, more challenging setting in which training is performed directly on analog hardware.

\textbf{Gradient-based training on AIMC hardware.} A series of works focuses on implementing back-propagation (BP) and gradient-based training on AIMC hardware.
The seminal work \citep{gokmen2016acceleration, gokmen2017cnn} leverages the rank-one structure of the gradient and implements \AnalogSGD~by a stochastic pulse update scheme, \emph{rank-update}. Rank-update significantly accelerates the gradient descent step by avoiding the $O(N^2)$-element computation of gradients and instead using two vectors with $O(N)$ elements for the update, where $N$ is the number of matrix rows and columns.
To alleviate the {\em asymmetric update issue}, researchers also design various of \AnalogSGD~variants, \texttt{Tiki-Taka} algorithm family \citep{gokmen2020, gokmen2021, rasch2024fast}. The key components of \TT~are the introduction of a \emph{residual array} to stabilize training.
Apart from the rank-update, a hybrid scheme that performs forward and backward passes in the analog domain but computes gradients in the digital domain has been proposed in \citep{nandakumar2018, nandakumar2020}. Their solution, referred to as \emph{mixed-precision update}, provides a more accurate gradient signal but requires 5$\times$-10$\times$ higher overhead compared to the rank-update scheme \citep{rasch2024fast}.

Attributed to these efforts, analog training has empirically shown great promise, achieving accuracy comparable to that of digital training on chip prototypes while reducing energy consumption and training time \citep{wang2018fully, gong2022deep}.
Simultaneously, the parallel acceleration solution with AIMC hardware is under exploration \citep{wu2024pipeline}.
Despite its good performance, it remains mysterious when and why the analog training works.

\textbf{Theoretical foundation of gradient-based training.} The closely related result comes from the convergence study of \TT~\citep{wu2024towards}. Similar to our work, they attempt to model the dynamics and provide the convergence properties of \AnalogSGD~and \TT. However, their work is limited to a special linear response function.
Furthermore, their paper considers a simplified version of \TT, with a hyper-parameter $\gamma=0$ (see \Cref{section:TT}). As we will show empirically and theoretically, \TT~benefits from a non-zero $\gamma$. Consequently,
We compare the results briefly in Table \ref{table:convergence-digital-vs-analog} and comprehensively in Appendix \ref{section:relation-with-wu2024}.

\begin{table*}[!h]
    \centering
    \setlength{\tabcolsep}{1.0em} 
    {\renewcommand{\arraystretch}{1.3}
        \begin{tabular}{c | c | c | c }
            \toprule
                 & $\gamma$ & Generic response & Linear response \\
            \midrule
            \TT~\citep{wu2024towards}  & $=0$ & \XSolidBrush & $O\lp\sqrt{\frac{1}{K}}\frac{1}{1-33P_{\max}^2/\tau^2}\rp$  \\
            \cmidrule{1-4}
            \TT~
            [Corollary \ref{theorem:TT-convergence-scvx-final}]
            & $\ne 0$
            & $O\lp\sqrt{\frac{1}{K}}\frac{1}{M^{\texttt{RL}}_{\min}}\rp$ &$O\lp\sqrt{\frac{1}{K}}\frac{1}{1-P_{\max}^2/\tau^2}\rp$ \\
            \bottomrule
            \end{tabular}
    }
    \caption{
        Comparison between our paper and \cite{wu2024towards}.
        Mixing-coefficient $\gamma$ is a hyper-parameter of \TT.
        ``Generic response'' and ``Linear response'' columns are the convergence rates in the corresponding settings.
        $K$ represents the number of iterations.
        $M^{\texttt{RL}}_{\min}$ and $P_{\max}^2/\tau^2 < 1$ measure the saturation while the former one reduces to the latter on linear response functions.
      }
    \label{table:convergence-digital-vs-analog}
\end{table*}
  
\textbf{Energy-based models and equilibrium propagation.}
Apart from achieving explicit gradient signals by the BP, there are also attempts to train models based on \emph{equilibrium propagation} (EP, \cite{scellier2017equilibrium}), which provides a biologically plausible alternative to traditional BP. EP is applicable to a series of energy-based models, where the forward pass is performed by minimizing an energy function \citep{watfa2023energy, scellier2024energy}. The update signal in EP is computed by measuring the output difference between a free phase and an active phase. EP eliminates the need for BP non-local weight transport mechanism, making it more compatible with neuromorphic and energy-efficient hardware \citep{kendall2020training, ernoult2020equilibrium}. We highlight here that the approach to attain update signals (BP or EP) is orthogonal to the update mechanism (pulse update). Their difference lies in the objective $f(W_k)$, which is hidden in this paper. Therefore, building upon the pulse update, our work is applicable to both BP and EP.

\textbf{Physical neural network.}
The model executing on AIMC hardware, which leverages resistive crossbar array to accelerate MVM operation, is a concrete implementation of physical neural networks (PNNs, \cite{wright2022deep, momeni2024training}). 
PNN is a generic concept of implementing neural networks via a physical system in which a set of tunable parameters, such as holographic grating \citep{psaltis1990holography}, wave-based systems \citep{hughes2019wave}, and photonic networks \citep{tait2017neuromorphic}. Our work particularly focuses on training with AIMC hardware, but the methodology developed in this paper can be transferred to the study of other PNNs.

\section{Relation with the result in \cite{wu2024towards}}
\label{section:relation-with-wu2024}
Similar to this paper, \cite{wu2024towards} also attempts to model the dynamics of analog training. They show that \AnalogSGD~converges to a critical point of problem \eqref{problem} inexactly with an asymptotic error, and \TT~converges to a critical point exactly. In this section, we compare our results with our results and theirs.

As discussed in Section \ref{section:introduction}, \cite{wu2024towards} studies the analog training on special linear response functions
\begin{align}
    \label{definition:special-linear-response}
    q_{+}(w) = 1 - \frac w {\tau}, \quad q_{-}(w) = 1 + \frac w {\tau}.
\end{align}
It can be checked that the symmetric point is $0$ while the dynamic range of it is $[-\tau, \tau]$.
The symmetric and asymmetric components are defined by $F(W)=1$ and $G(W)=\frac{W}{\tau}$, respectively. It indicates $F_{\max}=1$. Furthermore, they assume the bounded weight saturation by assuming bounded weights, i.e., $\|W_k\|_\infty\le W_{\max}, \forall k\in[K]$ with a constant $W_{\max} < \tau$. Under this assumption, the lower bounds of response functions are given by
\begin{align}
    q_{\max} =&\ 1+\frac{W_{\max}}{\tau}, \quad
    q_{\min} = 1-\frac{W_{\max}}{\tau}, \\
    \min\{M(W_k)\} =&\ \min\{Q_+(W_k)\odot Q_-(W_k)\} = 1 - \lp\frac{\|W_k\|_\infty}{\tau}\rp^2 \\
    M^{\texttt{ASGD}}_{\min} =&\ \min\{M(W_k)\} = 1 - \lp\frac{W_{\max}}{\tau}\rp^2.
\end{align} 

\textbf{Challenge of analyzing the convergence of {\TT} with generic response functions.}
For linear response functions \eqref{definition:special-linear-response}, the recursion of residual array $P_k$ has a special structure, where the first and the biased term can be combined
\begin{align}
    P_{k+1} = &\ P_k - \alpha\nabla f(\bar W_k; \xi_k)
    - \frac{\alpha}{\tau}|\nabla f(\bar W_k; \xi_k)|\odot P_k \\
    = &\ \lp 1 - \frac{\alpha}{\tau}|\nabla f(\bar W_k; \xi_k)|\rp\odot P_k - \alpha\nabla f(\bar W_k; \xi_k)
    \nonumber
\end{align}
which is a weighted average of $P_k$ and $\nabla f(\bar W_k; \xi_k)$. Consequently, $P_k$ can be interpreted as an approximation of the average gradient. From this perspective, the transfer operation can be interpreted as biased gradient descent. However, given a generic $G(\cdotc)$, the combination is no longer viable, bringing difficulties to the analysis.

\textbf{Convergence of \AnalogSGD.}
As we will show in Remark \ref{remark:ASGD-convergence-wo-saturation} at the end of Appendix \ref{section:proof-ASGD-convergence-noncvx}, inequality \eqref{inequality:ASGD-convergence-noncvx} can be improved when the saturation never happens
\begin{align}
    &\ \frac{1}{K}\sum_{k=0}^{K-1}\mbE[\|\nabla f(W_k)\|^2] \\
    \le&\ \restateeq{inequality-RHS:ASGD-convergence-upperbound-tighter}
    \nonumber \\
    \le&\ O\lp\sqrt{\frac{
        (f(W_0) - f^*)
        \sigma^2L}{K}}\frac{1}{1-W_{\max}^2/\tau^2}
    \rp
    + 2\sigma^2 \times \frac{1}{K}\sum_{k=0}^{K}\frac{\|W_k\|_\infty^2/\tau^2}{1-\|W_k\|_\infty^2/\tau^2}
    \nonumber
\end{align}
which is exactly the result in \cite{wu2024towards}.

\textbf{Convergence of \TT.}
It is shown empirically that a non-zero $\gamma$ in \eqref{recursion:HD-P} improves the training accuracy \cite{gokmen2020}. However, \cite{wu2024towards} only considers $\gamma=0$ while this paper considers a non-zero $\gamma$. 

With the linear response, if we also assume the bounded saturation of $P_k$ by letting $\|P_k\|_{\infty} \le P_{\max}$, the minimal average response function is given by $M^{\texttt{RL}}_{\min} = 1 - \lp\frac{P_{\max}}{\tau}\rp^2$. 
The upper bound in Corollary \ref{theorem:TT-convergence-scvx-final} becomes
\begin{align}
    \frac{1}{K}\sum_{k=0}^{K-1}\|\nabla f(\bar W_k)\|^2 \le O\lp \frac{1}{1 - {P_{\max}^2}/{\tau}^2}\sqrt{\frac{(f(W_0) - f^*)\sigma^2L}{K}}\rp.
\end{align}
As a comparison, without a non-zero $\gamma$, \cite{wu2024towards} shows that convergence rate of \TT~is only
\begin{align}
    \frac{1}{K}\sum_{k=0}^{K-1}\|\nabla f(W_k)\|^2 \le O\lp \frac{1}{1 - 33{P_{\max}^2}/{\tau}^2}\sqrt{\frac{(f(W_0) - f^*)\sigma^2L}{K}}\rp.
\end{align}

Even though it is not a completely fair comparison, since the two papers rely on different assumptions, it is still worth comparing their analyses. 
\cite{wu2024towards} assumes the noise should be non-zero, i.e. $[\mbE_{\xi}[|\nabla f(W; \xi)|]]_d\ge c_{\texttt{noise}}\sigma, \forall d\in[D]$ holds for a non-zero constant $c_{\texttt{noise}}$.
Instead, this paper does not make this assumption but assumes that the objective is strongly convex. As mentioned in Section \ref{section:TT}, the strong convexity is introduced only to ensure the existence of $P^*(W_k)$. Therefore, we believe it can be relaxed and that the convergence rate can remain unchanged, which is left for future work. Taking that into account, we believe the comparison can provide insight into how the non-zero $\gamma$ improves the convergence rate of \TT.

\textbf{Why does non-zero $\gamma$ improve the convergence rate of \TT?}
As discussed in Section \ref{section:TT}, $P_k$ is interpreted as a residual array that optimizes $f(W_k+\gamma P)$. In the ideal setting that $F(W)=1$ and $G(W)=0$, it can be shown that $P_k$ converges to $P^*(W_k)$ if $W_k$ is fixed and $P_k$ is kept updated, even though the $W_k \ne W^*$ (hence $\nabla f(W_k)\ne 0$).

Instead, without a non-zero $\gamma$, \cite{wu2024towards} interprets $P_k$ as an approximation of clear gradient by showing
\begin{align}
    \label{inequality:TT-tracking-lemma-nips-paper}
    &\ \mbE_{\xi_k}[\|P_{k+1}-C \nabla f(W_k)\|^2] \\
    {\le}&\ \lp 1-\frac{\beta}{C}\rp\|P_k-C \nabla f(W_k)\|^2 
    + O(\beta C')\|\nabla f(W_k)\|^2
    + \text{remainder}
    \nonumber
\end{align}
where $C, C'$ are constants depending on the resistive element and model dimension, and the ``remainder'' is the non-essential terms. Consider the case that $W_k$ is fixed and \eqref{recursion:HD-P} is kept iterating, in which case the increment on $P_k$ is constant since $\gamma=0$. Telescoping \eqref{inequality:TT-tracking-lemma-nips-paper}, we find that the upper bound above only guarantees that 
\begin{align}
    \limsup_{k\to\infty}\mbE[\|P_{k+1}-C \nabla f(W_k)\|^2] \le O(C C' \|\nabla f(W_k)\|^2)
\end{align}
which means that $P_k$ tracks the gradient accurately only when $\nabla f(W_k)$ reaches zero asymptotically. The less accurate approximation results in a slower rate than the one reported in this paper.
\section{Dynamics of Non-ideal Analog Update}
\label{section:analog-dynamic}
This section presents details on how to obtain the dynamics of the analog update \eqref{analog-update} appearing in \Cref{section:preliminary-response-function}, along with its error analysis.
The primary distinction between digital and analog training is the method of updating the weight. As discussed in Section \ref{section:introduction}, the weight update in AIMC hardware is implemented by \AnalogUpdate, which sends a series of pulses to the resistive elements. 

\textbf{Pulse update.}
Consider the response of one resistive element in one cycle, which involves only one pulse.
Given the initial weight $w$, the updated weight increases or decreases by about $\Delta w_{\min}$ depending on the pulse polarity, where $\Delta w_{\min} > 0$ is the \emph{response granularity} determined by elements. The granularity is further scaled by a factor, which varies by the update direction due to the \emph{asymmetric update} property of resistive elements. The notations $q_{+}(\cdotc)$ and $q_{-}(\cdotc)$ are used to denote the \emph{response functions} on positive or negative sides, respectively, to describe the dominating part of the factor. 
In practice, the analog noise also causes a deviation of the effective factor from the response functions, referred to as \emph{cycle variation}. It is represented by the magnitude $\sigma_c$ times a random variable $\xi_c$ with expectation $0$ and variance $1$. 
Taking all of them into account, with $s\in\{+, -\}$ being the update direction, the updated weight after receiving one pulse is $\tdU_q(w, s)$ where $\tdU_q(\cdot, \cdot) : \reals\times\{+,-\}\to\reals$ is the element-dependent update that implements the resistive element, which can be expressed as
\begin{align}
    \label{analog-pulse-update}
    \tdU_q(w, s) :=&\ 
    w + \Delta w_{\min} \cdot (q_{s}(w) + \sigma_c\xi) \\
    =&\ 
    \begin{cases}
    w + \Delta w_{\min} \cdot (q_{+}(w)+\sigma_c\xi_c),~~~s=+, \\
    w - \Delta w_{\min} \cdot (q_{-}(w)+\sigma_c\xi_c),~~~s=-.
    \end{cases}
    \nonumber
\end{align}
The typical signal and noise ratio $\sigma_c / q_s(w)$ is roughly 5\%-100\% \citep{gong2018signal, stecconi2024analog}, varied by the type of resistive elements.
Furthermore, the response functions also vary by elements due to the imperfection in fabrication, called \emph{element variation} (also referred to as \emph{device variation} in literature \cite{gokmen2016acceleration}).

Equation \eqref{analog-pulse-update} is a resistive element level equation. 
Existing work exploring the candidates of resistive elements usually reports the response curves similar to \Cref{fig:pulse-update}, \citep{gong2022deep, tang2018ecram, stecconi2024analog}.
Taking the difference between weights in two consecutive pulse cycles and adopting statistical approaches \cite{gong2018signal}, all the element-dependent quantities, including $\Delta w_{\min}$, $q_{+}(\cdotc)$, $q_{-}(\cdotc)$ and $\sigma_c$, can be estimated from the response curves of the resistive elements.

\textbf{Analog update implemented by pulse updates.} Even though the update scheme has evolved over the years \citep{gokmen2016acceleration,gokmen2017cnn}, we discuss a simplified version, called \AnalogUpdate, to retain the essential properties. To update the weight $w$ by $\Delta w$, a series of pulses are sent, whose \emph{bit length (BL)} is computed by $\operatorname{BL} := \left\lceil \frac{|\Delta w|}{\Delta w_{\min}}\right\rceil$.
After received $\operatorname{BL}$ pulses, the updated weight $w'$ can be expressed as the function composition of \eqref{analog-pulse-update} by $\operatorname{BL}$ times
\begin{align}
    \label{recursion:asymmetric-update-implementation}
    w' = \underbrace{\tdU_q\circ \tdU_q \circ\cdots\circ \tdU_q}_{{\times \operatorname{BL}}}(w,s) =: \tdU^{\operatorname{BL}}_q(w,s).
\end{align}
Roughly speaking, given an ideal response $q_+(w)=q_-(w)=1$ and $\sigma_c=0$, $\operatorname{BL}$ pulses, with $\Delta w_{\min}$ increment for each individual pulse, incur the weight update $\Delta w$. Since the response granularity $\Delta w_{\min}$ is scaled by the response function $q_s(w)$, the expected increment is approximately scaled by $q_s(w)$ as well. Accordingly, we propose an approximate dynamics of \AnalogUpdate~is given by $w' \approx U_q(w, \Delta w)$, where $U_q(w, \Delta w)$ is defined in \eqref{analog-update}.
The following theorem provides an estimation of the approximation error.
It has been shown empirically that the response granularity can be made sufficiently small for updating \citep{rao2023thousands, sharma2024linear}, implying $\Delta w_{\min}\ll \Delta w$.  Therefore, we establish the error estimate for the approximation under a small-response-granularity condition.

\begin{restatable}[Error from discrete pulse update]{theorem}{ThmPulseUpdateError} \label{theorem:pulse-update-error}
    Suppose the response granularity is sufficiently small such that $\Delta w_{\min}\le o(\Delta w)$.
    With the update direction $s=\sign(\Delta w)$, the error between the true update $\tdU^{\operatorname{BL}}_q(w, s)$ and the approximated $U_q(w, \Delta w)$ is bounded by
    \begin{align}
        \lim_{\Delta w\to 0}\frac{|\tdU^{\operatorname{BL}}_q(w, s) - U_q(w, \Delta w)|}{|\tdU^{\operatorname{BL}}_q(w, s)-w|}
        = 0.
    \end{align}
\end{restatable}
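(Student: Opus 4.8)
The plan is to work with a single resistive element and, by the sign symmetry between positive and negative pulses, reduce to the case $\Delta w > 0$, so that $s=+$ and only $q:=q_+$ appears; the case $\Delta w<0$ is identical after replacing $q_+$ by $q_-$ and flipping signs. Writing the pulse recursion from \eqref{analog-pulse-update} as $w_0=w$ and $w_{j+1}=w_j+\Delta w_{\min}(q(w_j)+\sigma_c\xi_j)$, the true update is $w'=\tdU_q^{\operatorname{BL}}(w,s)=w+\Delta w_{\min}\sum_{j=0}^{\operatorname{BL}-1}(q(w_j)+\sigma_c\xi_j)$ with $\operatorname{BL}=\lceil \Delta w/\Delta w_{\min}\rceil$. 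The regime of interest is $\Delta w\to 0$ together with $\Delta w_{\min}=o(\Delta w)$, which forces $\operatorname{BL}\to\infty$ and $\Delta w_{\min}\operatorname{BL}=\Delta w+O(\Delta w_{\min})=\Delta w(1+o(1))$. I would first develop the deterministic skeleton ($\sigma_c=0$) and then add back the cycle-variation noise.

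Setting aside the noise term, I would first confine all intermediate iterates to a shrinking neighborhood of $w$. Using the upper bound $q\le q_{\max}$ from Definition \ref{assumption:response-factor}, every pulse moves the weight by at most $\Delta w_{\min}q_{\max}$, so $|w_j-w|\le \operatorname{BL}\,\Delta w_{\min}q_{\max}=O(\Delta w)$ uniformly over $j\le\operatorname{BL}$; hence every iterate converges to $w$ as $\Delta w\to 0$. This confinement is what makes the continuity of $q$ usable uniformly along the whole pulse train. The positive-definiteness $q\ge q_{\min}>0$ then lower-bounds the denominator: $|w_{\operatorname{BL}}-w|\ge \Delta w_{\min}\operatorname{BL}\,q_{\min}\ge q_{\min}\Delta w=\Omega(\Delta w)$.

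For the numerator I would telescope and compare against $U_q(w,\Delta w)=w+\Delta w\,q(w)$, splitting the error into two pieces:
\begin{align*}
|w_{\operatorname{BL}}-U_q(w,\Delta w)|\le \Delta w_{\min}\sum_{j=0}^{\operatorname{BL}-1}|q(w_j)-q(w)|+|\Delta w_{\min}\operatorname{BL}-\Delta w|\,q(w).
\end{align*}
The second piece is at most $\Delta w_{\min}q_{\max}=O(\Delta w_{\min})=o(\Delta w)$ by the ceiling bound. For the first piece, letting $\omega(\Delta w):=\max_{j\le\operatorname{BL}}|q(w_j)-q(w)|$, the confinement step and continuity of $q$ give $\omega(\Delta w)\to 0$, so this piece is at most $\Delta w_{\min}\operatorname{BL}\,\omega(\Delta w)=O(\Delta w)\cdot o(1)=o(\Delta w)$. (Differentiability of $q$ would sharpen $\omega(\Delta w)$ to $O(\Delta w)$, but mere continuity already suffices for the limit.) Combining, the numerator is $o(\Delta w)$ while the denominator is $\Omega(\Delta w)$, so the ratio is $o(1)\to 0$.

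The main obstacle is the cycle-variation noise $\sigma_c\xi_j$, which is precisely what prevents this from being a routine calculus exercise. The accumulated noise $\Delta w_{\min}\sigma_c\sum_j\xi_j$ has deterministic worst-case magnitude of order $\Delta w_{\min}\operatorname{BL}=O(\Delta w)$ — the same order as the denominator — so a crude bound does not close the argument. The resolution is cancellation: since $\mbE[\xi_j]=0$, the accumulated noise has zero mean and variance $(\Delta w_{\min})^2\sigma_c^2\operatorname{BL}=O(\Delta w_{\min}\Delta w)$, hence standard deviation $O(\sqrt{\Delta w_{\min}\Delta w})=o(\Delta w)$. Thus the noise contributes nothing in expectation and vanishes in mean square (equivalently, with high probability) at relative rate $O(\sqrt{\Delta w_{\min}/\Delta w})$, while in the idealized noiseless reading $\sigma_c=0$ the stated limit holds verbatim and deterministically. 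I would therefore prove the deterministic skeleton above and record the mean-square rate as the precise interpretation of the noise term, so that the clean $=0$ limit is recovered either with $\sigma_c=0$ or in the averaged sense.
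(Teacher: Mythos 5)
Your proposal is correct and follows essentially the same route as the paper's proof: telescope the pulse recursion, approximate $q_s(w_j)$ by $q_s(w)$ along the confined iterates (the paper uses a first-order Taylor expansion where you use a modulus of continuity, a cosmetic difference since differentiability is assumed), bound the ceiling rounding error by $\Delta w_{\min}$, and aggregate the cycle-variation noise by its variance to obtain a $\Theta(\sqrt{\Delta w_{\min}\,\Delta w}\,\sigma_c)=o(\Delta w)$ contribution against a $\Theta(\Delta w)$ denominator. Your explicit remark that the limit holds deterministically only when $\sigma_c=0$ and otherwise in a mean-square sense is a fair clarification of a point the paper leaves implicit, but it does not change the argument.
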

In Theorem \ref{theorem:pulse-update-error}, $|\tdU^{\operatorname{BL}}_q(w, s) - U_q(w, \Delta w)|$ is the error between the true update and the proposed dynamics, while $|\tdU^{\operatorname{BL}}_q(w, s)-w|$ is the difference between original weight and the updated one.
Theorem \ref{theorem:pulse-update-error} shows that the proposed dynamics dominate the update, and the approximation error is negligible when $\Delta w$ is small, which holds as $\Delta w$ always includes a small learning rate in gradient-based training.

\begin{tcolorbox}[emphblock]
    \textbf{Takeaway.}
    Theorem \ref{theorem:pulse-update-error} enables us to discuss the impact of response functions directly without dealing with element-specific details like response granularity $\Delta w_{\min}$ and cycle variation $\sigma_c$.
    Response functions are the bridge between the resistive element level equation (pulse update \eqref{analog-pulse-update}) and the algorithm level equation (dynamics of \AnalogUpdate~\eqref{analog-update}).
\end{tcolorbox}

\begin{proof}[Proof of  Theorem \ref{theorem:pulse-update-error}]
Recall the definition of the bit length is
\begin{align}
    \label{definition:BL}
    \operatorname{BL} := \left\lceil \frac{|\Delta w|}{\Delta w_{\min}}\right\rceil = \Theta\lp\frac{|\Delta w|}{\Delta w_{\min}}\rp
\end{align}
leading to
\begin{align}
    |\operatorname{BL}\Delta w_{\min} - |\Delta w|| \le \Delta w_{\min}
    ~~~~\text{or}~~~~
    |s\operatorname{BL}\Delta w_{\min} - \Delta w| \le \Delta w_{\min}. 
\end{align}
Notice that the update responding to each pulse is a $\Theta(\Delta w_{\min})$ term. Directly manipulating $U^{\operatorname{BL}}_p(w, s)$ and expanding it in Taylor series to the first-order term yields
\begin{align}
    U^{\operatorname{BL}}_p(w, s)
    =&\ w + s\cdot\Delta w_{\min}\sum_{t=0}^{\operatorname{BL}-1} q_s(w + \Theta(t\Delta w_{\min})) 
    + \Delta w_{\min}\sum_{t=0}^{\operatorname{BL}-1} \sigma_c\xi_t
    \\
    =&\ w + s\cdot\Delta w_{\min}\sum_{t=0}^{\operatorname{BL}-1} q_s(w) 
    + \sum_{t=0}^{\operatorname{BL}-1}\Theta(t(\Delta w_{\min})^2)
    + \Delta w_{\min}\sum_{t=0}^{\operatorname{BL}-1} \sigma_c\xi_t
    \nonumber\\
    =&\ w + s\cdot\Delta w_{\min}\cdot \operatorname{BL} \cdot~ q_s(w) 
    + \Theta(\operatorname{BL}^2(\Delta w_{\min})^2)
    + \Delta w_{\min}\cdot \sqrt{\operatorname{BL}}\cdot \sigma_c\xi
    \nonumber\\
    =&\ w + \Delta w \cdot q_s(w) 
    + (s\operatorname{BL}\Delta w_{\min} - \Delta w)
    + \Theta((\Delta w)^2)
    + \sqrt{\Delta w_{\min}}\cdot \sqrt{\Delta w}\cdot \sigma_c\xi
    \nonumber\\
    =&\ U_q(w, \Delta w) + \Theta(\Delta w_{\min})+ \Theta((\Delta w)^2) + \Theta(\sqrt{\Delta w_{\min}}\cdot \sqrt{\Delta w}\cdot \sigma_c)
    \nonumber
\end{align}
where $\xi := \frac{1}{\sqrt{\operatorname{BL}}}\sum_{t=0}^{\operatorname{BL}-1} \xi_t$ is the accumulated noise with variance $1$. The proof is completed.
\end{proof}

\section{Comparison of Residual Learning v2 and Tiki-Taka v2}
\label{section:comparison-RLv2-vs-TTv2}
Introducing a digital buffer, the proposed {\ResidualLearning} {\texttt{v2}} has a similar form of {\TT} {\texttt{v2}} \cite{gokmen2021}. However, there are slight differences.
{\TT} {\texttt{v2}} updates the digital buffer by
\begin{align}
    H_{k+\frac{1}{2}} = H_k + \beta (P_{k+1}+\varepsilon_k)
\end{align}
which do not include a decay coefficient in front of $H_k$. Furthermore, {\TT} {\texttt{v2}} uses the gradient $\nabla f(W_k; \xi_k)$ that are solely computed on the main array $W_k$. Instead, {\ResidualLearning} {\texttt{v2}} computes gradient on a mixed weight $\bar W_k = W_k + \gamma P_k$. 
As suggested by the ablation simulations in \ref{section:experiments},
the training benefits from a non-zero $\gamma$.

\section{Estimation of time consumption}
{\ResidualLearning} introduces an extra resistive element array, which increases overhead. However, the extra overhead is affordable in practice. Compared to {\AnalogSGD}, the analog memory requirement doubles, but the latency remains almost unchanged since Residual Learning does not explicitly compute the mixed weights during the forward and backward passes. As \cite{song2024programming} suggests, $W_k$ and $P_k$ can share the same analog-digital convertor (ADC), which implements the weight mixing without introducing extra latency. On the other hand, as suggested by \cite{gokmen2020}, the forward, backward, and update steps for $W_k$ and $P_k$ are performed in parallel, thereby avoiding a significant increase in latency. Consequently, introducing an extra residual array does not incur substantial extra latency. 

Following the evaluation in Table 1 in \cite{rasch2024fast}, we compared the latency of {\AnalogSGD} and {\ResidualLearning} in \Cref{table:latency}. 
We consider that each gradient update step requires $32$ pulse cycles, each consuming $5$ nanoseconds (ns). Following the estimation in \cite{rasch2024fast}, preprocessing the input vectors for each MVM operator takes 5.9ns.
Compared with {\AnalogSGD}, {\ResidualLearning} adds an extra MVM step to read from $P_k$. The results suggest that the overhead is only about $2\times$ that of {\AnalogSGD}. As the update is typically not the bottleneck of the whole training process, the extra overhead is affordable.
\begin{table}[h]
    \centering
    \begin{tabular}{l c c}
        \toprule
        & {\AnalogSGD} & {\ResidualLearning} \\
        \midrule
        Forward/backward \cite{jain2022heterogeneous} & 40.0 & 40.0 \\
        \midrule
        Update & 165.9 & 371.8 \\
        \bottomrule
    \end{tabular}
    \vspace{1em}
    \caption{Comparison of time (nanosecond) consumption in each layer}
\label{table:latency}
\end{table}

\section{Useful Lemmas and Proofs}
\subsection{Lemma \ref{lemma:properties-weighted-norm}: Properties of weighted norm}
\begin{lemma}
    \label{lemma:properties-weighted-norm}
    $\|W\|_S$ has the following properties: 
    (a) $\|W\|_S=\|W\odot\sqrt{S}\|$; 
    (b) $\|W\|_S\le\|W\| \sqrt{\|S\|_{\infty}}$; 
    (c) $\|W\|_S\ge\|W\| \sqrt{\min\{S\}}$.
\end{lemma}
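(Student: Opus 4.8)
The plan is to work directly from the defining relation of the weighted norm, which I take to be $\|W\|_S^2 = \la W, S\odot W\ra = \sum_{d=1}^D [S]_d\,[W]_d^2$, where $\sqrt{S}$ denotes the entrywise square root and $\min\{S\}$ the smallest entry of $S$. I would note at the outset that the entries of $S$ are positive throughout the paper (since $S$ is assembled from the response functions, which are bounded below by $q_{\min}>0$ under \Cref{assumption:response-factor}), so that $\sqrt{S}$ and $\min\{S\}$ are well-defined real quantities. All three claims are one-line consequences of this formula.

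For part (a), I would simply regroup each summand as $[S]_d\,[W]_d^2 = \big([\sqrt{S}]_d\,[W]_d\big)^2$, so that $\|W\|_S^2 = \sum_{d=1}^D \big([W\odot\sqrt{S}]_d\big)^2 = \|W\odot\sqrt{S}\|^2$. Taking square roots of both sides yields $\|W\|_S = \|W\odot\sqrt{S}\|$. This identity is the workhorse for the remaining two parts, since it recasts the weighted norm as an ordinary Euclidean norm of a rescaled vector.

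For parts (b) and (c), I would bound the weights coordinate by coordinate. Because $\min\{S\} \le [S]_d \le \|S\|_\infty$ holds for every $d$, I can factor these scalar bounds out of the sum:
\[
\min\{S\}\,\|W\|^2 = \min\{S\}\sum_{d} [W]_d^2 \;\le\; \sum_{d} [S]_d\,[W]_d^2 \;\le\; \|S\|_\infty \sum_{d} [W]_d^2 = \|S\|_\infty\,\|W\|^2.
\]
The middle quantity is exactly $\|W\|_S^2$, so taking (monotone) square roots across the chain delivers both $\|W\|_S \ge \|W\|\sqrt{\min\{S\}}$ and $\|W\|_S \le \|W\|\sqrt{\|S\|_\infty}$ at once.

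I expect no substantive obstacle here: this is a routine norm-equivalence computation, and the bound is just the statement that scaling the coordinates by factors in $[\min\{S\},\|S\|_\infty]$ changes the norm by at most the square roots of those factors. The only point requiring care is ensuring that $\min\{S\}$ and $\sqrt{S}$ make sense, which is why I would record the positivity of the entries of $S$ explicitly before beginning the calculation.
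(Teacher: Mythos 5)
Your proof is correct and follows exactly the route the paper intends: the paper's own proof is just the one-line remark that the lemma follows by definition, and your coordinate-wise expansion of $\|W\|_S^2=\sum_d [S]_d [W]_d^2$ is the standard way to fill that in. The extra care you take to note positivity of the entries of $S$ is a reasonable (if unstated in the paper) hygiene step and does not change the argument.
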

\begin{proof}[Proof of Lemma \ref{lemma:properties-weighted-norm}]
    The lemma can be proven easily by definition.
\end{proof}

\subsection{Lemma \ref{lemma:bounded-function}: Properties of weighted norm}
A direct property from \Cref{assumption:response-factor} is that all $q_+(\cdotc)$, $q_-(\cdotc)$, and $F(\cdotc)$ are bounded, as guaranteed by the following lemma.
\begin{lemma}
    \label{lemma:bounded-function}
    The following statements 
    are valid for all $W\in\ccalR$.
    (a) $F(\cdotc)$ is element-wise upper bounded by a constant $F_{\max} > 0$, i.e., $\|F(W)\|_{\infty}\le F_{\max}$;
    (b) $Q_+(\cdotc)$ and $\nabla Q_-(\cdotc)$ are element-wise bounded by $L_Q$, i.e., $\|\nabla Q_+(W)\|_\infty\le L_Q$, $\|\nabla Q_-(W)\|_\infty\le L_Q$. 
\end{lemma}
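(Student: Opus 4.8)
The plan is to handle the two parts separately: part (a) is immediate from the positive-definiteness in \Cref{assumption:response-factor}, while part (b) rests on the boundedness of the reachable region $\ccalR$ together with continuity of the response-function derivatives.

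For part (a), I would unfold the definition of $F$ in \eqref{definition:F-G}. As $F$ acts coordinate-wise and \Cref{assumption:response-factor} gives $q_+(w)\le q_{\max}$ and $q_-(w)\le q_{\max}$ for every $w$, each coordinate obeys $[F(W)]_d = (q_-([W]_d)+q_+([W]_d))/2 \le q_{\max}$. Taking the maximum over $d$ yields $\|F(W)\|_\infty \le q_{\max}$, so the claim holds with the explicit constant $F_{\max}:=q_{\max}>0$. The same computation with $q_{\min}$ in place of $q_{\max}$ gives the matching lower bound $[F(W)]_d \ge q_{\min}$, which is precisely what later estimates on $\frac{G(W)}{\sqrt{F(W)}}$ require.

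For part (b), the key structural fact is that $Q_+$ and $Q_-$ act coordinate-wise, so their Jacobians are diagonal with $[\nabla Q_+(W)]_d = q_+'([W]_d)$ and $[\nabla Q_-(W)]_d = q_-'([W]_d)$; hence $\|\nabla Q_+(W)\|_\infty = \max_d |q_+'([W]_d)|$ and likewise for $Q_-$. Since $\ccalR$ is bounded, there is a compact interval $I$ containing every coordinate $[W]_d$ with $W\in\ccalR$, so it suffices to bound $|q_+'|$ and $|q_-'|$ on $I$. Setting $L_Q := \sup_{w\in I}\max\{|q_+'(w)|,|q_-'(w)|\}$ then delivers $\|\nabla Q_+(W)\|_\infty \le L_Q$ and $\|\nabla Q_-(W)\|_\infty \le L_Q$ for all $W\in\ccalR$.

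The main obstacle is establishing that $L_Q$ is finite, because \Cref{assumption:response-factor} only posits differentiability, and a merely differentiable function can have an unbounded derivative even on a compact interval. I would close this gap by reading the differentiability hypothesis as continuous differentiability, the physically natural assumption for smooth response curves; continuity of $q_+'$ and $q_-'$ then lets the extreme value theorem, applied on the compact interval $I$, guarantee that the supremum is attained and finite. An alternative, if one prefers not to strengthen the stated hypothesis, is to assume directly that $q_+'$ and $q_-'$ are bounded on the bounded set $\ccalR$; this boundedness is in any case implicit in the Lipschitz-type estimates invoking $L_Q$ later in the paper.
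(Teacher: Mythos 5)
Your proposal is sound, and in fact it is more careful than the paper, which states Lemma \ref{lemma:bounded-function} without any proof at all: the appendix introduces it as ``a direct property from'' Definition \ref{assumption:response-factor} and moves on. Part (a) of your argument is exactly the intended one-line computation, $[F(W)]_d = (q_-([W]_d)+q_+([W]_d))/2 \le q_{\max}$, with the matching lower bound $q_{\min}$ that the later $G/\sqrt{F}$ estimates need. For part (b) you have correctly isolated the one genuine subtlety that the paper glosses over: Definition \ref{assumption:response-factor} only posits differentiability, and a differentiable function can have an unbounded derivative even on a compact interval (e.g.\ $w^2\sin(1/w^2)$ near $0$), so the claimed constant $L_Q$ does not follow from the stated hypotheses alone. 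Your two proposed repairs --- reading ``differentiable'' as continuously differentiable and invoking the extreme value theorem on a compact interval containing the coordinates of $\ccalR$, or simply postulating bounded derivatives --- are both legitimate, and one of them is evidently what the authors intend, since $L_Q$ is used as a finite Lipschitz-type constant downstream. One further caveat worth flagging: the set $\ccalR$ is never formally defined or shown to be bounded in the paper (physically it is the dynamic range of the conductances, hence bounded, but this is an implicit assumption), so your compactness step also relies on an unstated hypothesis. None of this is a gap in your reasoning; it is a gap in the paper's hypotheses that your proof makes explicit.
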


\subsection{Lemma \ref{lemma:lip-analog-update}: Lipschitz continuity of analog update}
\begin{lemma}
    \label{lemma:lip-analog-update}
    The increment defined in \eqref{biased-update} is Lipschitz continuous with respect to $\Delta W$ under any weighted norm $\|\cdot\|_S$, i.e., for any $W, \Delta W, \Delta W'\in\reals^D$ and $S\in\reals^{D}_+$, it holds
    \begin{align}
        &\|\Delta W \odot F(W) -|\Delta W| \odot G(W)-(\Delta W' \odot F(W) -|\Delta W'| \odot G(W))\|_S
        \\
        \le&\ F_{\max}\|\Delta W-\Delta W'\|_S.
        \nonumber
    \end{align}
\end{lemma}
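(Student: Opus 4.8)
The plan is to reduce the vector estimate to a scalar, coordinate-wise Lipschitz bound and then exploit the piecewise-linear structure of the single-element increment. Abbreviate the increment map from \eqref{biased-update} as $\Phi(\Delta W) := \Delta W \odot F(W) - |\Delta W| \odot G(W)$. Both $\Phi$ and the weighted norm act separately on each coordinate: $[\Phi(\Delta W)]_d$ depends only on $[\Delta W]_d$, $[F(W)]_d$, and $[G(W)]_d$, and by Lemma \ref{lemma:properties-weighted-norm}(a) one has $\|V\|_S^2 = \sum_{d=1}^D [S]_d\,[V]_d^2$. Hence it suffices to prove the scalar bound $|\phi(a) - \phi(b)| \le F_{\max}\,|a - b|$ for the one-coordinate increment $\phi(x) := x F - |x| G$, where I write $F := [F(W)]_d$ and $G := [G(W)]_d$. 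Once this holds at every coordinate, squaring, multiplying by $[S]_d \ge 0$, summing over $d$, and taking square roots reproduces the claimed inequality $\|\Phi(\Delta W) - \Phi(\Delta W')\|_S \le F_{\max}\,\|\Delta W - \Delta W'\|_S$.

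For the scalar bound, I would first record the exact identity $\phi(a) - \phi(b) = F\,(a - b) - G\,(|a| - |b|)$, which isolates the lone nonlinear ingredient $|a| - |b|$. The triangle inequality followed by the reverse triangle inequality $\big| |a| - |b| \big| \le |a - b|$ then gives $|\phi(a) - \phi(b)| \le (|F| + |G|)\,|a - b|$. Recalling from \eqref{definition:F-G} that $F = (q_+(w) + q_-(w))/2$ and $G = (q_-(w) - q_+(w))/2$, and that the response functions are positive by Definition \ref{assumption:response-factor}, we have $F > 0$ and $|F| + |G| = F + |G| = \max\{q_+(w), q_-(w)\}$; that is, the per-coordinate Lipschitz constant is exactly the larger of the two response slopes.

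I expect the only nonroutine point to be the handling of the absolute-value term, i.e. making sure the nonlinearity in \eqref{analog-update} does not inflate the Lipschitz constant. The map $\phi$ is piecewise linear with slope $q_+(w)$ on $[0, \infty)$ and slope $q_-(w)$ on $(-\infty, 0)$, so the delicate situation is the mixed-sign case $a \ge 0 > b$, where $\phi(a) - \phi(b) = a\,q_+(w) + |b|\,q_-(w)$ is a nonnegatively weighted combination of the two slopes scaled by $|a - b| = a + |b|$; the reverse triangle inequality is precisely what keeps this combination bounded by $\max\{q_+(w), q_-(w)\}\,|a - b|$ rather than by the sum of the slopes. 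It then remains to bound $\max\{q_+(w), q_-(w)\}$ uniformly: Definition \ref{assumption:response-factor} gives $q_\pm(w) \le q_{\max}$, so the slope never exceeds the uniform response bound that $F_{\max}$ represents (consistent with Lemma \ref{lemma:bounded-function}, since $F = (q_+ + q_-)/2 \le \max\{q_+, q_-\}$), and in the symmetric regime $G \equiv 0$ the constant collapses to exactly $F$. Substituting this uniform bound coordinate by coordinate and reassembling through the weighted norm as in the first paragraph then closes the proof.
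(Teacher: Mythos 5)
Your proposal is correct and follows essentially the same route as the paper's proof: reduce to a single coordinate, write the exact identity $\phi(a)-\phi(b)=F(a-b)-G(|a|-|b|)$, and control the nonlinear term with the reverse triangle inequality $\big||a|-|b|\big|\le|a-b|$. One point worth noting: your execution of the final step is actually more careful than the paper's. The paper bounds $|uF-vG|$ (with $u=\Delta W-\Delta W'$, $v=|\Delta W|-|\Delta W'|$) by $|u|\,\big(F(W)-|G(W)|\big)$, which is not a valid upper bound in the mixed-sign case — the correct worst case is $|u|\,(F+|G|)=|u|\max\{q_+,q_-\}$, exactly as you derive. Both versions then need the constant $F_{\max}$ in the lemma statement to dominate $q_{\max}=\sup\max\{q_+,q_-\}$ rather than merely $\sup F=\sup(q_++q_-)/2$; you flag this explicitly, whereas the paper glosses over it. So the approach is the same, but your write-up identifies the true per-coordinate Lipschitz constant correctly where the paper's intermediate inequality has a sign slip.
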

\begin{proof}[Proof of Lemma \ref{lemma:lip-analog-update}]
    We prove for the case where $D=1$ and $S=1$, and the general case can be proven similarly. Notice that the absolute value $|\cdot|$ and vector norm $\|\cdot\|$, scalar multiplication $\times$ and element-wise multiplication $\odot$, are equivalent at that situation. We adopt both notations just for readability.
    \begin{align}
        &\ \|\Delta W \odot F(W) -|\Delta W| \odot G(W)-(\Delta W' \odot F(W) -|\Delta W'| \odot G(W))\|
        \\
        =&\ \|(\Delta W-\Delta W') \odot F(W) - (|\Delta W|-|\Delta W'|) \odot G(W)\|
        \nonumber.
    \end{align}
    Since $\|\Delta W-\Delta W'\| \ge \||\Delta W|-|\Delta W'|\|$ and $|G(W)|\le |F(W)|$, we have
    \begin{align}
        &\ |(\Delta W-\Delta W') \odot F(W) - (|\Delta W|-|\Delta W'|) \odot G(W)|
        \\
        \le&\  
        |(\Delta W-\Delta W') \odot (F(W) - |G(W)|)|
        \nonumber\\
        \le&\  
        |\Delta W-\Delta W'| ~ |F(W) - |G(W)||
        \nonumber\\
        \le&\  
        F_{\max}|\Delta W-\Delta W'| 
        \nonumber
    \end{align}
    which completes the proof.
\end{proof}

\subsection{Lemma \ref{lemma:element-wise-product-error}: Element-wise product error}
\begin{lemma}
    \label{lemma:element-wise-product-error}
    Let \( U, V, Q \in \reals^D\) be vectors indexed by \( [D] \). Then the following inequality holds
    \begin{align}
        \la U, V \odot Q \ra 
        \ge C_+ \la U, V\ra 
        - C_- \la |U|, |V|\ra
    \end{align}
    where the constant $C_+$ and $C_-$ are defined by
    \begin{align}
        C_+ := \frac{1}{2}\lp\max\{Q\} + \min\{Q\}\rp, \quad
        C_- := \frac{1}{2}\lp\max\{Q\} - \min\{Q\}\rp.
    \end{align}
\end{lemma}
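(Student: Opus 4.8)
The plan is to prove the inequality coordinate-by-coordinate and then sum, since both sides are sums of scalar products over $d \in [D]$: the left-hand side is $\la U, V \odot Q\ra = \sum_d U_d V_d Q_d$, while the right-hand side is $C_+\sum_d U_d V_d - C_-\sum_d |U_d||V_d|$. First I would introduce the shorthand $Q_{\max} := \max\{Q\}$ and $Q_{\min} := \min\{Q\}$, so that the two constants read $C_+ = \frac{1}{2}(Q_{\max}+Q_{\min})$ and $C_- = \frac{1}{2}(Q_{\max}-Q_{\min})$. The key algebraic observation I would record is that $C_+$ is the midpoint of the interval $[Q_{\min}, Q_{\max}]$ and $C_-$ is its half-width; consequently every coordinate obeys $|Q_d - C_+| \le C_-$, which is immediate from $Q_{\min}\le Q_d \le Q_{\max}$.

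Next, for each fixed index $d$, I would decompose the product as $U_d V_d Q_d = C_+ U_d V_d + (Q_d - C_+) U_d V_d$ and lower-bound the second summand via
\begin{align}
    (Q_d - C_+) U_d V_d \ge -|Q_d - C_+|\,|U_d|\,|V_d| \ge -C_-\,|U_d|\,|V_d|,
\end{align}
where the first step is the elementary scalar bound $xy \ge -|x|\,|y|$ and the second uses $|Q_d - C_+|\le C_-$ established above. This yields the coordinate-wise inequality $U_d V_d Q_d \ge C_+ U_d V_d - C_-\,|U_d|\,|V_d|$.

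Finally, summing over all $d \in [D]$ and identifying $\sum_d U_d V_d Q_d = \la U, V \odot Q\ra$, $\sum_d U_d V_d = \la U, V\ra$, and $\sum_d |U_d|\,|V_d| = \la |U|, |V|\ra$ delivers the claimed inequality. I do not anticipate any genuine obstacle here; the proof is essentially a centering trick writing $Q_d$ as its midpoint plus a bounded deviation. The only point requiring minor care is to verify that the deviation bound $|Q_d - C_+|\le C_-$ holds uniformly over every coordinate, which follows directly from the definitions of $Q_{\min}$ and $Q_{\max}$ as the coordinate-wise minimum and maximum of $Q$.
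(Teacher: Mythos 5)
Your proof is correct and follows essentially the same route as the paper: both arguments reduce to the coordinate-wise inequality $U_dV_dQ_d \ge C_+U_dV_d - C_-|U_d||V_d|$ and then sum over $d$. The only cosmetic difference is that the paper verifies this pointwise bound by splitting on the sign of $U_dV_d$ and using the identities $\min\{Q\}\,t = C_+t - C_-|t|$ for $t\ge 0$ and $\max\{Q\}\,t = C_+t - C_-|t|$ for $t<0$, whereas you obtain it directly via the centering bound $|Q_d - C_+|\le C_-$; both are valid.
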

    
\begin{proof}[Proof of Lemma \ref{lemma:element-wise-product-error}]
    For any vectors $U, V, Q\in\reals^D$, it is always valid that
    \begin{align}
        &\ \la U, V \odot Q \ra 
        = \sum_{d\in[D]} [U]_d[V]_d [Q]_d
        \\
        =&\ \sum_{d\in[D], [U]_d[V]_d \ge 0} [U]_d[V]_d [Q]_d
        \quad
        + \sum_{d\in[D], [U]_d[V]_d < 0} [U]_d[V]_d [Q]_d
        \nonumber\\
        \ge&\ \min\{Q\}\times \lp\sum_{d\in[D], [U]_d[V]_d \ge 0} [U]_d[V]_d\rp
        + \max\{Q\}\times \lp\sum_{d\in[D], [U]_d[V]_d < 0} [U]_d[V]_d\rp
        \nonumber\\
        \eqmark{a}&\ C_+ \lp\sum_{d\in[D], [U]_d[V]_d \ge 0} [U]_d[V]_d\rp
        - C_- \lp\sum_{d\in[D], [U]_d[V]_d \ge 0} |[U]_d[V]_d|\rp
        \nonumber\\
        +&\ C_+ \lp\sum_{d\in[D], [U]_d[V]_d < 0} [U]_d[V]_d\rp
        - C_- \lp\sum_{d\in[D], [U]_d[V]_d < 0} |[U]_d[V]_d|\rp
        \nonumber\\
        =&\ C_+ \sum_{d\in[D]} [U]_d[V]_d 
        - C_- \sum_{d\in[D]} |[U]_d[V]_d|
        \nonumber\\
        =&\ C_+ \la U, V\ra 
        - C_- \la |U|, |V|\ra 
        \nonumber
    \end{align}
    where $(a)$ uses the following equality
    \begin{align}
        \min\{Q\} [U]_d[V]_d =&\ C_+ [U]_d[V]_d - C_- |[U]_d[V]_d|,~~~~\text{if }[U]_d[V]_d \ge 0, \\
        \max\{Q\} [U]_d[V]_d =&\ C_+ [U]_d[V]_d - C_- |[U]_d[V]_d|,~~~~\text{if }[U]_d[V]_d < 0.
    \end{align}
This completes the proof.
\end{proof}

\section{Proof of Theorem \ref{theorem:implicit-regularization-short}: Implicit Bias of Analog Training}
\label{section:proof-implicit-regularization}
In this section, we provide a full version of Theorem \ref{theorem:implicit-regularization-short}.
Before that, we introduce the \emph{asymmetry ratio} and its element-wise anti-derivative.
Since $F(\cdot)$ and $G(\cdot)$ act element-wise, we define $R(\,\cdot\,):\reals^D\to\reals^D$ and $R_c(\,\cdot\,):\reals^D\to\reals^D$ element-wise by
\begin{align}
    R(W) := \frac{G(W)}{F(W)}
    \quad\text{and}\quad
    [R_c(W)]_d := \int_{[W^\diamond]_d}^{[W]_d} [R(W')]_d \,\diff{[W']_d},
    \quad d\in[D],
\end{align}
where the division is taken element-wise. 
It holds that $\frac{\diff{}}{\diff{[W]_d}}[R_c(W)]_d = [R(W)]_d$ for each coordinate $d\in[D]$, and $\nabla \la C, R_c(W)\ra = C \odot R(W)$ for any vector $C\in\reals^D$.
Consequently, if $R(W)$ is strictly monotonic, $R_c(W)\ge 0$, and it reaches its minimum value at the symmetric point $W^\diamond$ where $R(W^\diamond)=0$.

Define the \textit{scaled effective update} of {\AnalogSGD} at $W$ as 
\begin{align}\label{equality:scale_analog-SGD-drift}
    T(W) := 
    \mbE_\xi[\nabla f(W; \xi)] + \mbE_\xi[|\nabla f(W; \xi)|] \odot \frac{G(W)}{F(W)}.
\end{align}
A direct verification shows that if $T(W_k)=0$, then {\AnalogSGD} \eqref{recursion:analog-SGD} (equivalently, \eqref{biased-update} with $\Delta W_k=-\alpha\nabla f(W_k;\xi_k)$) is stable at $W_k$ in conditional expectation, i.e., $\mbE_{\xi_k}[W_{k+1}]=W_k$.

Rather than characterizing the exact limit of {\AnalogSGD}, we establish a local statement showing the existence of a point that is simultaneously (i) nearly critical for $f_\Sigma$; and, (ii) nearly stationary for the mean {\AnalogSGD} dynamics.

\begin{restatable}[Implicit Penalty, full version of Theorem \ref{theorem:implicit-regularization-short}]{theorem}{ThmImplicitRegularization}
    \label{theorem:implicit-regularization}
    Suppose $W^*$ is the unique minimizer of problem \eqref{problem}.
    Define
    \begin{align}
    \label{problem:implicit-regularization-solution}
        \tdW^* := \Big(\nabla^2 f(W^*)+\Diag(\Sigma)\,\nabla R(W^\diamond)\Big)^{-1}{\Big(\nabla^2 f(W^*) ~W^*+\Diag(\Sigma)\,\nabla R(W^\diamond) W^\diamond\Big)}{}
    \end{align}
    where \(\Diag(M) \in \mathbb{R}^{D \times D}\) denotes the diagonal matrix whose diagonal entries are given by the vector $M \in \mathbb{R}^D$. 
    Let $\Sigma := \mbE_\xi[|\nabla f(W^*; \xi)|]\in\reals^D$ be the element-wise first moment of stochastic gradients at $W^*$.
    {\AnalogSGD} implicitly optimizes 
    \begin{align}
        \label{problem:implicit-regularization}
        \min_{W\in\reals^D} ~ f_{\Sigma}(W) := f(W) + \la \Sigma, R_c(W)\ra
    \end{align}
    in the sense that as $\|W^\diamond-W^*\|\to 0$,
    \begin{align}
        \label{problem:implicit-regularization-limit}
        \frac{\|\nabla f_{\Sigma}(\tdW^*)\|}{\min\{\|\nabla f_{\Sigma}(W^*)\|, \|\nabla f_{\Sigma}(W^\diamond)\|\}} \to 0
        \quad\text{and}\quad
        \frac{\|T(\tdW^*)\|}{\min\{\|T(W^*)\|, \|T(W^\diamond)\|\}} \to 0.
    \end{align}
\end{restatable}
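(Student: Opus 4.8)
The plan is to read $\tdW^*$ as the exact root of the \emph{linearized} stationarity system of $f_\Sigma$, and then to verify by Taylor expansion that this linearization is second-order accurate in the small parameter $\delta := \lnorm W^\diamond - W^*\rnorm$, whereas each of $W^*$ and $W^\diamond$ is only a first-order stationary point. First I would record the gradient of the penalized objective: by the identity $\nabla\la \Sigma, R_c(W)\ra = \Sigma\odot R(W)$ noted before the theorem, $\nabla f_\Sigma(W) = \nabla f(W) + \Diag(\Sigma) R(W)$. Abbreviating $A := \nabla^2 f(W^*)$ and $J := \nabla R(W^\diamond)$, I linearize the two factors about the points where each vanishes: $\nabla f$ about $W^*$ (using $\nabla f(W^*)=0$) and $R$ about $W^\diamond$ (using $R(W^\diamond)=0$). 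The equation $\nabla f_\Sigma(W)=0$ then reads $A(W-W^*) + \Diag(\Sigma) J (W - W^\diamond) = 0$, and solving this linear system reproduces the closed form \eqref{problem:implicit-regularization-solution} exactly, so $\tdW^*$ is its unique root whenever $M := A + \Diag(\Sigma) J$ is invertible (which is implicit in \eqref{problem:implicit-regularization-solution} being well defined). Rearranging as $\tdW^* - W^* = M^{-1}\Diag(\Sigma) J (W^\diamond - W^*)$ gives at once $\lnorm \tdW^* - W^*\rnorm = O(\delta)$ and $\lnorm \tdW^* - W^\diamond\rnorm = O(\delta)$.

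For the first ratio I estimate the denominators and numerator separately. Taylor expansion yields $\nabla f_\Sigma(W^*) = \Diag(\Sigma) R(W^*) = -\Diag(\Sigma) J (W^\diamond - W^*) + O(\delta^2)$ and $\nabla f_\Sigma(W^\diamond) = \nabla f(W^\diamond) = A(W^\diamond - W^*) + O(\delta^2)$, both of order $\delta$. For the numerator I expand each factor about its own zero, $\nabla f(\tdW^*) = A(\tdW^* - W^*) + O(\delta^2)$ and $R(\tdW^*) = J(\tdW^* - W^\diamond) + O(\delta^2)$, so that $\nabla f_\Sigma(\tdW^*) = A(\tdW^* - W^*) + \Diag(\Sigma) J (\tdW^* - W^\diamond) + O(\delta^2)$; the two leading terms cancel \emph{exactly} by the defining equation of $\tdW^*$, leaving $\lnorm\nabla f_\Sigma(\tdW^*)\rnorm = O(\delta^2)$. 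The ratio is therefore $O(\delta^2)/\Theta(\delta) = O(\delta)\to 0$.

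For the effective-dynamics claim I would write $T(W) = \nabla f(W) + \Diag(m(W)) R(W)$ with $m(W) := \mbE_\xi[|\nabla f(W;\xi)|]$ and $m(W^*)=\Sigma$. Because $R(W^\diamond)=0$ and $\nabla f(W^*)=0$, the identities $T(W^*) = \nabla f_\Sigma(W^*)$ and $T(W^\diamond) = \nabla f(W^\diamond) = \nabla f_\Sigma(W^\diamond)$ hold exactly, so the two denominators are those already estimated. For the numerator I split $T(\tdW^*) = \nabla f_\Sigma(\tdW^*) + \Diag\lp m(\tdW^*)-\Sigma\rp R(\tdW^*)$; the first term is $O(\delta^2)$, while in the second $\lnorm m(\tdW^*)-\Sigma\rnorm = o(1)$ by mere continuity of $m$ at $W^*$ (since $\tdW^*\to W^*$) multiplies $\lnorm R(\tdW^*)\rnorm = O(\delta)$, giving $o(\delta)$. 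Hence $\lnorm T(\tdW^*)\rnorm = o(\delta)$ and the second ratio also vanishes; arranging the argument so that only continuity of $m$ is used is deliberate, since the absolute value inside the expectation may spoil differentiability of $m$.

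The main obstacle is not the numerator but the denominators: the $O(\delta^2)$ cancellation is clean, but to conclude the ratios vanish I must keep $\min\{\lnorm\nabla f_\Sigma(W^*)\rnorm, \lnorm\nabla f_\Sigma(W^\diamond)\rnorm\}$ from dropping below order $\delta$. Nonsingularity of $A=\nabla^2 f(W^*)$ (e.g.\ under the strong-convexity Assumption \ref{assumption:strongly-cvx}) controls the $W^\diamond$ term, forcing $\lnorm\nabla f_\Sigma(W^\diamond)\rnorm = \Theta(\delta)$; the $W^*$ term additionally needs $\Diag(\Sigma)\nabla R(W^\diamond)$ to be nondegenerate along the displacement $W^\diamond - W^*$, which I would impose as a genericity condition (or, more weakly, it suffices that the $\min$ stays $\omega(\delta^2)$). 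Verifying that the Taylor remainders are genuinely $O(\delta^2)$ uniformly as $W^\diamond\to W^*$ uses the $C^2$ smoothness of $f$ and the differentiable response functions of Definition \ref{assumption:response-factor}.
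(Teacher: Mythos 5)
Your proposal follows essentially the same route as the paper's proof: it identifies $\tdW^*$ as the exact root of the linearized stationarity system $\nabla^2 f(W^*)(W-W^*)+\Diag(\Sigma)\nabla R(W^\diamond)(W-W^\diamond)=0$, obtains $\|\nabla f_\Sigma(\tdW^*)\|=O(\delta^2)$ by the same cancellation of leading Taylor terms, and shows the denominators are $\Theta(\delta)$. The only differences are minor: the paper bounds $\|\mbE_\xi[|\nabla f(\tdW^*;\xi)|]-\Sigma\|$ by $O(\delta)$ via Lipschitz continuity of the stochastic gradients (giving $\|T(\tdW^*)\|=O(\delta^2)$ rather than your $o(\delta)$, either of which suffices), and your explicit flagging of the nondegeneracy conditions needed for the $\Theta(\delta)$ lower bounds on the denominators is a point the paper leaves implicit.
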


\begin{proof}[Proof of Theorem \ref{theorem:implicit-regularization}]
By the definition of $\tdW^*$, it holds that
\begin{align}
    \label{inequality:implicit-regularization-S0-1}
    \|W^\diamond-\tdW^*\|
    =&\ \|(\nabla^2 f(W^*)+\Diag(\Sigma)\,\nabla R(W^\diamond))^{-1}\nabla^2 f(W^*) ~(W^*-W^\diamond)\| 
    \bkeq
    = \Theta(\|W^\diamond-W^*\|)\\
    \label{inequality:implicit-regularization-S0-2}
    \|W^*-\tdW^*\| 
    =&\ \|(\nabla^2 f(W^*)+\Diag(\Sigma)\,\nabla R(W^\diamond))^{-1}\Diag(\Sigma)\,\nabla R(W^\diamond) (W^*-W^\diamond)\| 
    \bkeq
    = \Theta(\|W^\diamond-W^*\|).
\end{align}

We separately show the two parts of Theorem \ref{theorem:implicit-regularization}. 
We first show that $\|T(\tdW^*)\| \le \ccalO(\|W^\diamond-W^*\|^2)$, and $\|T(W^\diamond)\|=\Theta(\|W^\diamond-W^*\|)$ and $\|T(W^*)\|=\Theta(\|W^\diamond-W^*\|)$.
It reaches the first limit in \eqref{problem:implicit-regularization-limit}. Similarly, we show the second limit by showing that $\|\nabla f_{\Sigma}(\tdW^*)\|\le \ccalO(\|W^\diamond-W^*\|^2)$, and $\|\nabla f_{\Sigma}(W^\diamond)\|=\Theta(\|W^\diamond-W^*\|)$ and $\|\nabla f_{\Sigma}(W^*)\|=\Theta(\|W^\diamond-W^*\|)$.

\noindent
\textbf{(Step 1a) Proof of $\|\nabla f_{\Sigma}(\tdW^*)\|\le \ccalO(\|W^\diamond-W^*\|^2)$.}
The gradient of $f_{\Sigma}(W)$ is given by
\begin{align}
    \nabla f_{\Sigma}(W) = \nabla f(W) + \Sigma \odot R(W).
\end{align}
Leveraging the fact that $\nabla f(W^*)=0$, $\frac{G(W^\diamond)}{F(W^\diamond)}=0$, as well as Taylor expansion given by
\begin{align}
    \label{inequality:implicit-regularization-taylor}
    \nabla f(\tdW^*) =&\ \nabla^2 f(W^*)(\tdW^* - W^*) + \ccalO((\tdW^* - W^*)^2), \\
    \frac{G(\tdW^*)}{F(\tdW^*)}=&\ \nabla R(W^\diamond) (\tdW^*-W^\diamond)  + \ccalO((\tdW^* -W^\diamond)^2),
\end{align}
where $\ccalO((\tdW^* - W^*)^2)$ and $\ccalO((\tdW^* -W^\diamond)^2)$ are vectors with norms $\|\tdW^* - W^*\|^2$ and $\|\tdW^* -W^\diamond\|^2$, respectively.
We bound the gradient of the penalized objective as follows
\begin{align}
    \label{inequality:implicit-regularization-S1-T1}
	&\ 
    \|\nabla f_{\Sigma}(\tdW^*)\|
	= \lnorm \nabla f(\tdW^*) + \Sigma\odot \frac{G(\tdW^*)}{F(\tdW^*)} \rnorm
	\\
	=&\ \lnorm \nabla^2 f(W^*)(\tdW^* - W^*) + \ccalO((\tdW^* - W^*)^2) + \Sigma\odot (\nabla R(W^\diamond) (\tdW^*-W^\diamond))  + \ccalO((\tdW^* -W^\diamond)^2) \rnorm
	\nonumber\\
	=&\ \|\ccalO((\tdW^* - W^*)^2) + \ccalO((\tdW^* -W^\diamond)^2)\|
	= \ccalO(\|W^* -W^\diamond\|^2)
	\nonumber
\end{align}
where the last inequality holds by \eqref{inequality:implicit-regularization-S0-1} and \eqref{inequality:implicit-regularization-S0-2}.

\noindent
\textbf{(Step 1b) Proof of $\|T(W^\diamond)\|=\Theta(\|W^\diamond-W^*\|)$ and $\|T(W^*)\|=\Theta(\|W^\diamond-W^*\|)$.}
\begin{align}
    \|\nabla f_{\Sigma}(W^\diamond)\|
    =&\ \lnorm \nabla f(W^\diamond) + \Sigma \odot R(W^\diamond) \rnorm
    = \lnorm \Sigma \odot R(W^*)\rnorm
    = \Theta(\|W^*-W^\diamond\|), \\
    \|\nabla f_{\Sigma}(W^*)\|
    =&\ \lnorm \nabla f(W^*) + \Sigma \odot R(W^*) \rnorm
    = \lnorm \nabla f(W^\diamond)\rnorm
    = \Theta(\|W^*-W^\diamond\|).
\end{align}

\noindent
\textbf{(Step 2a) Proof of $\|T(\tdW^*)\| \le \ccalO(\|W^\diamond-W^*\|^2)$.}
By the definition of $T(\tdW^*)$, we have
\begin{align}
    \label{inequality:implicit-regularization-S1}
    &\ \|T(\tdW^*)\|
    = \lnorm \mbE_\xi[\nabla f(\tdW^*; \xi)] - \mbE_\xi[|\nabla f(\tdW^*; \xi)|] \odot \frac{G(\tdW^*)}{F(\tdW^*)} \rnorm
    \\
    \le&\ \lnorm \nabla f(\tdW^*) - \mbE_\xi[|\nabla f(\tdW^*; \xi)|] \odot \frac{G(\tdW^*)}{F(\tdW^*)}\rnorm
    \nonumber \\
    \le&\ \lnorm \nabla f(\tdW^*) - \mbE_\xi[|\nabla f(W^*; \xi)|] \odot \frac{G(\tdW^*)}{F(\tdW^*)} \rnorm
    +\lnorm (\mbE_\xi[|\nabla f(W^*; \xi)|]-\mbE_\xi[|\nabla f(\tdW^*; \xi)|]) \odot \frac{G(\tdW^*)}{F(\tdW^*)} \rnorm
    \nonumber \\
    =&\ \|\nabla f_{\Sigma}(\tdW^*)\|
    +\lnorm (\mbE_\xi[|\nabla f(W^*; \xi)|]-\mbE_\xi[|\nabla f(\tdW^*; \xi)|]) \odot \frac{G(\tdW^*)}{F(\tdW^*)} \rnorm
    \nonumber
\end{align}
The first term in the \ac{RHS} of \eqref{inequality:implicit-regularization-S1} is bounded by \eqref{inequality:implicit-regularization-S1-T1}.
By applying $||x|-|y|| \le |x-y|$ for any $x, y\in\reals$ at all components, the second term in the \ac{RHS} of \eqref{inequality:implicit-regularization-S1} is bounded by
\begin{align}
    \label{inequality:implicit-regularization-S1-T2}
    &\ \lnorm (\mbE_\xi[|\nabla f(W^*; \xi)|]-\mbE_\xi[|\nabla f(\tdW^*; \xi)|]) \odot \frac{G(\tdW^*)}{F(\tdW^*)} \rnorm 
   	\\
   	\le&\ \lnorm \mbE_\xi[|\nabla f(W^*; \xi)-\nabla f(\tdW^*; \xi)|] \odot \frac{G(\tdW^*)}{F(\tdW^*)} -\frac{G(W^\diamond)}{F(W^\diamond)} \rnorm
   	\nonumber \\
   	\le&\ \lnorm \mbE_\xi[|\nabla f(W^*; \xi)-\nabla f(\tdW^*; \xi)|]\rnorm \lnorm \frac{G(\tdW^*)}{F(\tdW^*)} -\frac{G(W^\diamond)}{F(W^\diamond)} \rnorm
   	\nonumber \\
   	\le&\ \mbE_\xi\lB\lnorm \nabla f(W^*; \xi)-\nabla f(\tdW^*; \xi)\rnorm\rB \lnorm \frac{G(\tdW^*)}{F(\tdW^*)} -\frac{G(W^\diamond)}{F(W^\diamond)} \rnorm
   	\nonumber \\
   	\le&\ \ccalO(\|\tdW^*-W^*\| \|\tdW^* - W^\diamond\|)
   	= \ccalO(\|W^*-W^\diamond\|^2).
   	\nonumber
\end{align}
Plugging back \eqref{inequality:implicit-regularization-S1-T1} and \eqref{inequality:implicit-regularization-S1-T2} into \eqref{inequality:implicit-regularization-S1} shows $T(\tdW^*) \le \ccalO((W^\diamond-W^*)^2)$.

\noindent
\textbf{(Step 2b) Proof of $\|T(W^\diamond)\|=\Theta(\|W^\diamond-W^*\|)$ and $\|T(W^*)\|=\Theta(\|W^\diamond-W^*\|)$.}
\begin{align}
    \|T(W^\diamond)\|
    =&\ \lnorm \mbE_\xi[\nabla f(W^\diamond; \xi)]  - \mbE_\xi[|\nabla f(W^\diamond; \xi)|] \odot \frac{G(W^\diamond)}{F(W^\diamond)} \rnorm
    = \lnorm \nabla f(W^\diamond)\rnorm
    \\
    =&\ \Theta(\|W^*-W^\diamond\|), 
    \nonumber\\
    \|T(W^*)\|
    =&\ \lnorm \mbE_\xi[\nabla f(W^*; \xi)] - \mbE_\xi[|\nabla f(W^*; \xi)|] \odot \frac{G(W^*)}{F(W^*)} \rnorm
    = \lnorm \Sigma \odot R(W^*)\rnorm 
    \\
    =&\ \Theta(\|W^*-W^\diamond\|).
    \nonumber
\end{align}
Now we complete the proof.
\end{proof}

\paragraph{Special case with $D=1$.}
Before proceeding to the next section, we also present a special case where the response functions are linear, i.e., $Q_{+}(W) = 1 - \frac{W}{\tau}$, $Q_{-}(W) = 1 + \frac{W}{\tau}$. $F(W) = 1$ and $G(W) = \frac{W}{\tau}$ based on definition \eqref{definition:F-G}; and hence $R(W) = \frac{G(W)}{F(W)} = \frac{W}{\tau}$. Accordingly, the accumulated asymmetric function is given by
\begin{align}
    [R_c(W)]_d =&\ \int_{\tau_i^{\min}}^{[W]_d} ~ [R(W)]_d ~\diff{[W]_d} 
    = \int_{\tau_i^{\min}}^{[W]_d} ~ \frac{[W]_d}{\tau} ~\diff{[W]_d}
    \\
    =&\ \frac{1}{2\tau} ([W]_d)^2 - \frac{1}{2\tau} (\tau_i^{\min})^2.
    \nonumber
\end{align} 
Therefore, the last term in the objective \eqref{problem:implicit-regularization-short} becomes 
\begin{align}
    \la \Sigma, R_c(W)\ra 
    =&\ \sum_{i=1}^D [\Sigma]_d [R_c(W)]_d 
    = \sum_{i=1}^D [\Sigma]_d \lp\frac{1}{2\tau} ([W]_d)^2 - \frac{1}{2\tau} (\tau_i^{\min})^2\rp
    \\
    =&\ \frac{1}{2\tau}\|W\|_{\Sigma}^2 + \text{const.}
    \nonumber
\end{align}
which is a weighted $\ell_2$ norm regularization term. 
Furthermore, if $W$ is a scalar, i.e., $D=1$, \eqref{problem:implicit-regularization} reduces to $\min_{W} ~ f_{\Sigma}(W) := f(W) + \frac{\Sigma}{2\tau}\|W\|^2$, which is a $\ell_2$-regularized problem with an approximated solution
\begin{align}
    W^S := \frac{f''(W^*) ~W^*-R'(W^\diamond)\Sigma~ W^\diamond}{f''(W^*)-R'(W^\diamond)\Sigma}.
\end{align}

\section{Proof of Theorem \ref{theorem:ASGD-convergence-noncvx}: Convergence of Analog SGD}
\label{section:proof-ASGD-convergence-noncvx}
\ThmASGDConvergenceNoncvx*

\begin{proof}[Proof of Theorem \ref{theorem:ASGD-convergence-noncvx}]
The $L$-smooth assumption (Assumption \ref{assumption:Lip}) implies that
\begin{align}
    \label{inequality:ASGD-convergence-1}
    &\ \mathbb{E}_{\xi_k}[f(W_{k+1})] \le f(W_k)
    +\underbrace{\mathbb{E}_{\xi_k}[\la \nabla f(W_k), W_{k+1}-W_k\ra]}_{(a)}
    + \underbrace{\frac{L}{2}\mathbb{E}_{\xi_k}[\|W_{k+1}-W_k\|^2]}_{(b)}.
\end{align}
Next, we will handle the second and the third terms in the \ac{RHS} of \eqref{inequality:ASGD-convergence-1} separately.

\noindent
\textbf{Bound of the second term (a).}
To bound term (a) in the \ac{RHS} of \eqref{inequality:ASGD-convergence-1}, we leverage the assumption that noise has expectation $0$ (Assumption \ref{assumption:noise})
\begin{align}
    \label{inequality:ASGD-convergence-1-2}
    &\ \mathbb{E}_{\xi_k}[\la \nabla f(W_k), W_{k+1}-W_k\ra] \\
    =&\ \alpha\mathbb{E}_{\xi_k}\lB \la 
    \nabla f(W_k)\odot \sqrt{F(W_k)}, 
    \frac{W_{k+1}-W_k}{\alpha\sqrt{F(W_k)}}+(\nabla f(W_k; \xi_k)-\nabla f(W_k))\odot \sqrt{F(W_k)}
    \ra\rB
    \nonumber \\
    =&\ - \frac{\alpha}{2} \|\nabla f(W_k)\odot \sqrt{F(W_k)}\|^2 
    \nonumber \\
    &\ - \frac{1}{2\alpha}\mathbb{E}_{\xi_k}\lB\lnorm\frac{W_{k+1}-W_k}{\sqrt{F(W_k)}}+\alpha (\nabla f(W_k; \xi_k)-\nabla f(W_k))\odot \sqrt{F(W_k)}\rnorm^2\rB
    \nonumber \\
    &\ + \frac{1}{2\alpha}\mathbb{E}_{\xi_k}\lB\lnorm\frac{W_{k+1}-W_k}{\sqrt{F(W_k)}}+\alpha \nabla f(W_k; \xi_k)\odot \sqrt{F(W_k)}\rnorm^2\rB.
    \nonumber
\end{align}
The second term of the \ac{RHS} of \eqref{inequality:ASGD-convergence-1-2} is bounded by
\begin{align}
    \label{inequality:ASGD-convergence-1-2-2}
    &\frac{1}{2\alpha}\mathbb{E}_{\xi_k}\lB\lnorm\frac{W_{k+1}-W_k}{\sqrt{F(W_k)}}+\alpha (\nabla f(W_k; \xi_k)-\nabla f(W_k))\odot \sqrt{F(W_k)}\rnorm^2\rB\\
    =&\ \frac{1}{2\alpha}\mathbb{E}_{\xi_k}\lB\lnorm\frac{W_{k+1}-W_k+\alpha (\nabla f(W_k; \xi_k)-\nabla f(W_k))\odot F(W_k)}{\sqrt{F(W_k)}}\rnorm^2\rB
    \nonumber\\
    \ge&\ \frac{1}{2\alpha F_{\max}}\mathbb{E}_{\xi_k}\lB\| W_{k+1}-W_k+\alpha (\nabla f(W_k; \xi_k)-\nabla f(W_k))\odot F(W_k)\|^2\rB
    \nonumber.
\end{align}
The third term in the \ac{RHS} of \eqref{inequality:ASGD-convergence-1-2} can be bounded by variance decomposition and bounded variance assumption (Assumption \ref{assumption:noise})
\begin{align}
    \label{inequality:ASGD-convergence-1-2-3}
    &\ \frac{1}{2\alpha}\mathbb{E}_{\xi_k}\lB\lnorm\frac{W_{k+1}-W_k}{\sqrt{F(W_k)}}+\alpha \nabla f(W_k; \xi_k)\odot \sqrt{F(W_k)}\rnorm^2\rB
    \\
    =&\ \frac{\alpha}{2}\mathbb{E}_{\xi_k}\lB\lnorm|\nabla f(W_k; \xi_k)|\odot\frac{G(W_k)}{\sqrt{F(W_k)}} \rnorm^2\rB
    \nonumber \\
    \le&\ \frac{\alpha}{2}\lnorm|\nabla f(W_k)|\odot\frac{G(W_k)}{\sqrt{F(W_k)}} \rnorm^2
    +\frac{\alpha\sigma^2}{2}\lnorm\frac{G(W_k)}{\sqrt{F(W_k)}} \rnorm^2_\infty.
    \nonumber 
\end{align}
Define the saturation vector $M(W_k)\in\reals^D$ by
\begin{align}
    M(W_k) :=&\ {F(W_k)^{\odot 2}-G(W_k)^{\odot 2}} 
    = {(F(W_k)+G(W_k))\odot(F(W_k)-G(W_k))} \\
    =&\ {Q_+(W_k)\odot Q_-(W_k)}.
    \nonumber
\end{align}
Note that the first term in the \ac{RHS} of \eqref{inequality:ASGD-convergence-1-2} and the second term in the \ac{RHS} of \eqref{inequality:ASGD-convergence-1-2-3} can be bounded by
\begin{align}
    &\ \label{inequality:ASGD-converge-linear-3}
    - \frac{\alpha}{2} \|\nabla f(W_k) \odot \sqrt{F(W_k)}\|^2 
    + \frac{\alpha}{2} \lnorm|\nabla f(W_k)| \odot \frac{G(W_k)}{\sqrt{F(W_k)}}\rnorm^2 \\
    =&\ -\frac{\alpha}{2}\sum_{d\in[D]} \lp [\nabla f(W_k)]_d^2 \lp [F(W_k)]_d-\frac{[G(W_k)]^2_d}{[F(W_k)]_d}\rp\rp
    \nonumber \\
    =&\ -\frac{\alpha}{2}\sum_{d\in[D]} \lp [\nabla f(W_k)]_d^2 \lp \frac{[F(W_k)]^2_d-[G(W_k)]^2_d}{[F(W_k)]_d}\rp\rp
    \nonumber \\
    \le&\ -\frac{\alpha}{2 F_{\max}}\sum_{d\in[D]} \lp [\nabla f(W_k)]_d^2 \lp [F(W_k)]_d^2-[G(W_k)]^2_d\rp\rp
    \nonumber\\
    =&\ -\frac{\alpha}{2 F_{\max}}\|\nabla f(W_k)\|^2_{M(W_k)}
    \le 0.
    \nonumber
\end{align}
Plugging \eqref{inequality:ASGD-convergence-1-2-2} to \eqref{inequality:ASGD-converge-linear-3} into \eqref{inequality:ASGD-convergence-1-2}, we bound the term (a) by
\begin{align}
    \label{inequality:ASGD-convergence-1-2-final}
    &\ \mathbb{E}_{\xi_k}[\la \nabla f(W_k), W_{k+1}-W_k\ra] \\
    =&\ -\frac{\alpha}{2 F_{\max}}\|\nabla f(W_k)\|^2_{M(W_k)}
    +\frac{\alpha\sigma^2}{2}\lnorm\frac{G(W_k)}{\sqrt{F(W_k)}} \rnorm^2_\infty
    \nonumber \\
    &\ - \frac{1}{2\alpha F_{\max}}\mathbb{E}_{\xi_k}\lB\lnorm W_{k+1}-W_k+\alpha (\nabla f(W_k; \xi_k)-\nabla f(W_k))\odot F(W_k)\rnorm^2\rB.
    \nonumber
\end{align}

\noindent
\textbf{Bound of the third term (b).}
The third term (b) in the \ac{RHS} of \eqref{inequality:ASGD-convergence-1} is bounded by
\begin{align}
    \label{inequality:ASGD-convergence-1-3}
    &\ \frac{L}{2}\mathbb{E}_{\xi_k}[\|W_{k+1}-W_k\|^2] \\
    \le&\ L\mathbb{E}_{\xi_k}[\|W_{k+1}-W_k+\alpha (\nabla f(W_k; \xi_k)-\nabla f(W_k))\odot F(W_k)\|^2]
    \bkeq
    +\alpha^2 L\mathbb{E}_{\xi_k}[\|(\nabla f(W_k; \xi_k)-\nabla f(W_k))\odot F(W_k)\|^2]
    \nonumber\\
    \le&\ L\mathbb{E}_{\xi_k}[\|W_{k+1}-W_k+\alpha (\nabla f(W_k; \xi_k)-\nabla f(W_k))\odot F(W_k)\|^2] 
    +\alpha^2 LF_{\max}^2\sigma^2
    \nonumber
\end{align}
where the last inequality leverages the bounded variance of noise (Assumption \ref{assumption:noise}) and the fact that $F(W_k)$ is bounded by $F_{\max}$ element-wise.

Substituting \eqref{inequality:ASGD-convergence-1-2-final} and \eqref{inequality:ASGD-convergence-1-3} back into \eqref{inequality:ASGD-convergence-1}, we have
\begin{align}
    \label{inequality:ASGD-convergence-2}
    &\ \mathbb{E}_{\xi_k}[f(W_{k+1})] \\
    \le &\ 
    f(W_k) - \frac{\alpha}{2 F_{\max}} \|\nabla f(W_k)\|^2_{M(W_k)}
    +\alpha^2LF_{\max}^2\sigma^2
    +\frac{\alpha\sigma^2}{2}\lnorm\frac{G(W_k)}{\sqrt{F(W_k)}} \rnorm^2_\infty
    \nonumber \\
    &\ - \frac{1}{F_{\max}}\lp\frac{1}{2\alpha}-LF_{\max}\rp\mathbb{E}_{\xi_k}[\|W_{k+1}-W_k+\alpha (\nabla f(W_k; \xi_k)-\nabla f(W_k))\odot F(W_k)\|^2].
    \nonumber
\end{align}
The third term in the \ac{RHS} of \eqref{inequality:ASGD-convergence-2} can be bounded by
\begin{align}
    \label{inequality:ASGD-convergence-2-T3}
    &\ \mathbb{E}_{\xi_k}[\|W_{k+1}-W_k+\alpha (\nabla f(W_k; \xi_k)-\nabla f(W_k))\odot F(W_k)\|^2]
    \\
    =&\ \alpha^2 \mathbb{E}_{\xi_k}[\|\nabla f(W_k) \odot F(W_k) + |\nabla f(W_k; \xi_k)| \odot G(W_k)\|^2]
    \nonumber\\
    \ge&\ \frac{1}{2}\alpha^2 \mathbb{E}_{\xi_k}[\|\nabla f(W_k) \odot F(W_k) + |\nabla f(W_k)| \odot G(W_k)\|^2]
    \nonumber\\
    &\ -\alpha^2 \mathbb{E}_{\xi_k}[\|(|\nabla f(W_k)| - |\nabla f(W_k; \xi_k)|) \odot G(W_k)\|^2]
    \nonumber\\
    \ge&\ \frac{1}{2}\alpha^2 \mathbb{E}_{\xi_k}[\|\nabla f(W_k) \odot F(W_k) + |\nabla f(W_k)| \odot G(W_k)\|^2]
    \nonumber\\
    &\ -\alpha^2 \mathbb{E}_{\xi_k}[\|(\nabla f(W_k) - \nabla f(W_k; \xi_k)) \odot G(W_k)\|^2]
    \nonumber\\
    \ge&\ \frac{1}{2}\alpha^2 \mathbb{E}_{\xi_k}[\|\nabla f(W_k) \odot F(W_k) + |\nabla f(W_k)| \odot G(W_k)\|^2]
    -\alpha^2 F_{\max} \sigma^2\lnorm\frac{G(W_k)}{\sqrt{F(W_k)}} \rnorm^2_\infty
    \nonumber
\end{align}
where the first inequality holds because $\|x\|^2\ge\frac{1}{2}\|x-y\|^2-\|y\|^2$ for any $x, y\in\reals^D$, the second inequality comes from $||x|-|y|| \le |x-y|$ for any $x, y\in\reals$, and the last inequality holds because
\begin{align}
    &\ \mathbb{E}_{\xi_k}[\|(\nabla f(W_k) - \nabla f(W_k; \xi_k)) \odot G(W_k)\|^2]
    \\
    =&\ \mathbb{E}_{\xi_k}\lB\lnorm(\nabla f(W_k) - \nabla f(W_k; \xi_k)) \odot \frac{G(W_k)}{\sqrt{F(W_k)}} \odot \sqrt{F(W_k)}\rnorm^2\rB
    \nonumber\\
    \le&\ F_{\max}\mathbb{E}_{\xi_k}\lB\lnorm(\nabla f(W_k) - \nabla f(W_k; \xi_k)) \odot \frac{G(W_k)}{\sqrt{F(W_k)}}\rnorm^2\rB
    \nonumber\\
    \le&\ F_{\max}
    \mathbb{E}_{\xi_k}\lB\|\nabla f(W_k) - \nabla f(W_k; \xi_k)\|^2\rB 
    \lnorm\frac{G(W_k)}{\sqrt{F(W_k)}}\rnorm^2_\infty
    \nonumber\\
    \le&\ F_{\max} \sigma^2
    \lnorm\frac{G(W_k)}{\sqrt{F(W_k)}}\rnorm^2_\infty.
    \nonumber
\end{align}
The learning rate $\alpha\le\frac{1}{4LF_{\max}}$ implies that $\frac{1}{2\alpha}-LF_{\max} \le \frac{1}{4\alpha}$ in \eqref{inequality:ASGD-convergence-2}, which leads \eqref{inequality:ASGD-convergence-1} to
\begin{align}
    \label{inequality:ASGD-convergence-3}
    \mathbb{E}_{\xi_k}[f(W_{k+1})]
    \le&\ f(W_k) - \frac{\alpha}{2 F_{\max}} \|\nabla f(W_k)\|^2_{M(W_k)}
    +\alpha^2LF_{\max}^2 \sigma^2
    +\alpha\sigma^2\lnorm\frac{G(W_k)}{\sqrt{F(W_k)}} \rnorm^2_\infty
    \\
    &\ - \frac{\alpha}{8F_{\max}}\|\nabla f(W_k) \odot F(W_k) + |\nabla f(W_k)| \odot G(W_k)\|^2.
    \nonumber
\end{align}
Reorganizing \eqref{inequality:ASGD-convergence-3}, taking expectation over all $\xi_K, \xi_{K-1}, \cdots, \xi_0$, and averaging them for $k$ from $0$ to $K-1$ deduce that
\begin{align}
    E^{\texttt{ASGD}}_K = 
    &\ \frac{1}{K}\sum_{k=0}^{K}\mbE[\|\nabla f(W_k) \odot F(W_k) + |\nabla f(W_k)| \odot G(W_k)\|^2+4 \|\nabla f(W_k)\|^2_{M(W_k)}]
    \\
    \le&\ 
    \frac{8F_{\max}(f(W_0) - \mbE[f(W_{k+1})])}{\alpha K}
        +8\alpha LF_{\max}^3\sigma^2
        +8F_{\max} \sigma^2\times \frac{1}{K}\sum_{k=0}^{K-1}\lnorm\frac{G(W_k)}{\sqrt{F(W_k)}} \rnorm^2_\infty
    \nonumber \\
    \le&\ 
    \frac{8F_{\max}(f(W_0) - f^*)}{\alpha K}
        +8\alpha LF_{\max}^3\sigma^2
        +8F_{\max} \sigma^2\times \frac{1}{K}\sum_{k=0}^{K-1}\lnorm\frac{G(W_k)}{\sqrt{F(W_k)}} \rnorm^2_\infty
    \nonumber \\
    = &\ 
    \saveeq{inequality-RHS:ASGD-convergence-upperbound}{
        16F_{\max}^2\sqrt{\frac{(f(W_0) -f^*)\sigma^2L}{K}}
        +8F_{\max} \sigma^2 S^{\texttt{ASGD}}_K
    }
    \nonumber 
\end{align}
where the last equality chooses the learning rate as $\alpha=\frac{1}{F_{\max}}\sqrt{\frac{f(W_0) - f^*}{\sigma^2LK}}$.
The proof is completed.
\end{proof}

\begin{remark}[Tighter bound without saturation] 
    \label{remark:ASGD-convergence-wo-saturation}
    Assuming the saturation never happens during the training, i.e. $M(W_k) \ge M^{\texttt{RL}}_{\min} > 0$ for all $k\in[K]$, we get a tighter bound in \eqref{inequality:ASGD-convergence-3} by leveraging $\lnorm \nabla f(W_k)\rnorm^2_{M(W_k)} \ge \min\{M(W_k)\} \lnorm \nabla f(W_k)\rnorm^2 \ge M^{\texttt{RL}}_{\min} \lnorm \nabla f(W_k)\rnorm^2$
    \begin{align}
        \mathbb{E}_{\xi_k}[f(W_{k+1})]
        \le&\ f(W_k) - \frac{\alpha}{2 F_{\max}} \|\nabla f(W_k)\|^2_{M(W_k)}
        +\alpha^2LF_{\max}^2 \sigma^2
        +\alpha\sigma^2\lnorm\frac{G(W_k)}{\sqrt{F(W_k)}} \rnorm^2_\infty
    \end{align}
    which leads to
    \begin{align}
        &\ \frac{1}{K}\sum_{k=0}^{K} [\|\nabla f(W_k)\|^2]
        \\
        = &\ 
        \saveeq{inequality-RHS:ASGD-convergence-upperbound-tighter}{
            \frac{4 F_{\max}^2}{M^{\texttt{RL}}_{\min}} \sqrt{\frac{(f(W_0) -f^*)\sigma^2L}{K}}
            +2F_{\max} \sigma^2\times \frac{1}{K}\sum_{k=0}^{K-1} \left. \lnorm\frac{G(W_k)}{\sqrt{F(W_k)}} \rnorm^2_\infty \right/ \min\{M(W_k)\}
        }.
        \nonumber 
    \end{align}
    It exactly reduces to the result for the convergence of \AnalogSGD~in \cite{wu2024towards} on special linear repsonse functions, as discussed in Appendix \ref{section:relation-with-wu2024}.
\end{remark}

\section{Proof of Theorem \ref{theorem:TT-convergence-scvx}: Convergence of Residual Learning}
\label{section:proof-TT-convergence-scvx}
This section provides the convergence guarantee of the {\ResidualLearning} under the strongly convex assumption.
\ThmTTConvergenceScvx*

\subsection{Main proof}
\begin{proof}[Proof of Theorem \ref{theorem:TT-convergence-scvx}]
The proof relies on the following two lemmas, which provide the sufficient descent of $W_k$ and $\bar W_k$, respectively.
\begin{restatable}[Descent Lemma of $\bar W_k$]{lemma}{LemmaTTbarWDescentScvx}
    \label{lemma:TT-barW-descent}
    Suppose Assumptions \ref{assumption:Lip}--\ref{assumption:noise} hold.
    It holds for {\ResidualLearning} that
    \begin{align}
        \label{inequality:TT-barW-descent}
        \restateeq{inequality-saved:TT-barW-descent}
    \end{align}
\end{restatable}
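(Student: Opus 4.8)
The plan is to establish a one-step descent inequality for the mixed iterate $\bar W_k := W_k + \gamma P_k$ by invoking the $L$-smoothness of $f$ (Assumption \ref{assumption:Lip}) and then expanding the increment $\bar W_{k+1}-\bar W_k$ through the two coupled recursions \eqref{recursion:HD-P} and \eqref{recursion:HD-W}. Smoothness gives $f(\bar W_{k+1}) \le f(\bar W_k) + \la \nabla f(\bar W_k), \bar W_{k+1}-\bar W_k\ra + \frac{L}{2}\lnorm \bar W_{k+1}-\bar W_k\rnorm^2$, and the task is to take the conditional expectation over $\xi_k$ and read off a sufficient-descent term together with a controllable error floor. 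First I would write $\bar W_{k+1}-\bar W_k = (W_{k+1}-W_k) + \gamma(P_{k+1}-P_k)$ and substitute the residual update \eqref{recursion:HD-P} and the transfer update \eqref{recursion:HD-W}.

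The leading descent comes from the $\gamma(P_{k+1}-P_k)$ term, whose dominant piece is $-\alpha\gamma\,\nabla f(\bar W_k;\xi_k)\odot F(P_k)$. Mirroring the Analog SGD analysis in Theorem \ref{theorem:ASGD-convergence-noncvx}, taking conditional expectation and using unbiasedness (Assumption \ref{assumption:noise}) converts this into a negative term proportional to $-\frac{\alpha\gamma}{F_{\max}}\lnorm\nabla f(\bar W_k)\rnorm^2_{M(P_k)}$, where $M(P_k)=Q_{+}(P_k)\odot Q_{-}(P_k)$ is the saturation vector and $F_{\max}$ is the bound from Lemma \ref{lemma:bounded-function}. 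The asymmetric bias $-\alpha\gamma|\nabla f(\bar W_k;\xi_k)|\odot G(P_k)$ is then handled by the same completing-the-square and variance-decomposition device used in Theorem \ref{theorem:ASGD-convergence-noncvx}: splitting the noise via $\lnorm x\rnorm^2\ge\frac12\lnorm x-y\rnorm^2-\lnorm y\rnorm^2$ and bounding the residue by $\sigma^2\lnorm G(P_k)/\sqrt{F(P_k)}\rnorm^2_\infty$, which is precisely the amplification factor $S^{\texttt{RL}}_K$.

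Next I would treat the transfer increment $W_{k+1}-W_k$ as an $O(\beta)$ perturbation of $\bar W_k$: by the Lipschitz continuity of the analog update (Lemma \ref{lemma:lip-analog-update}, applied with the null comparison point) it satisfies $\lnorm W_{k+1}-W_k\rnorm\le\beta F_{\max}\lnorm P_{k+1}\rnorm$, so both its inner product against $\nabla f(\bar W_k)$ and its quadratic contribution fold into $O(\beta)$ and $O(\beta^2)$ terms. The second-order term $\frac{L}{2}\lnorm\bar W_{k+1}-\bar W_k\rnorm^2$ is bounded via bounded variance (Assumption \ref{assumption:noise}) and $\lnorm F(\cdotc)\rnorm_\infty\le F_{\max}$ (Lemma \ref{lemma:bounded-function}), yielding an $O(\alpha^2\gamma^2+\beta^2)$ noise contribution. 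The scaling $\beta=O(\alpha\gamma^{3/2})$ keeps the transfer subordinate to the residual descent so that the aggregated error collapses into the single factor $\sigma^2 S^{\texttt{RL}}_K$.

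The main obstacle is the coupling between the two arrays: the gradient driving $P_k$ is evaluated at the moving point $\bar W_k$, while the transfer feeds the freshly updated $P_{k+1}$ back into $W_k$, so the increment of $\bar W_k$ mixes a residual-scale descent with a transfer-scale perturbation and an extra $\gamma P_k$ dependence. Keeping all cross terms on the correct orders, so that the transfer does not swamp the descent and the several noise contributions aggregate consistently, is what forces the step-size relation $\beta=O(\alpha\gamma^{3/2})$ and is the crux of the argument; the remaining manipulations are routine and parallel those already carried out for Theorem \ref{theorem:ASGD-convergence-noncvx}.
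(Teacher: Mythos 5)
Your overall architecture matches the paper's: start from $L$-smoothness of $f$ at $\bar W_k$, split $\bar W_{k+1}-\bar W_k$ into the residual increment $\gamma(P_{k+1}-P_k)$ and the transfer increment $W_{k+1}-W_k$, extract the descent $-\Theta(\alpha)\,\lnorm\nabla f(\bar W_k)\rnorm^2_{M(P_k)}$ from the former by the same completing-the-square in the $\sqrt{F(P_k)}$-weighted inner product used for Theorem \ref{theorem:ASGD-convergence-noncvx}, and control the quadratic term with bounded variance. That part is sound and essentially identical to the paper (the paper handles the inner product of $\nabla f(\bar W_k)$ with $W_{k+1}-W_k$ by Young's inequality, sacrificing half of the descent term, which is consistent with your "subordinate perturbation" framing).

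The genuine gap is your claim that the transfer contribution "folds into $O(\beta)$ and $O(\beta^2)$ terms" which then "collapse into the single factor $\sigma^2 S^{\texttt{RL}}_K$." It does not. You correctly note $\lnorm W_{k+1}-W_k\rnorm\le \beta F_{\max}\lnorm P_{k+1}\rnorm$, but $\lnorm P_{k+1}\rnorm^2$ is a \emph{signal} quantity, of order $\lnorm P^*(W_k)\rnorm^2=\lnorm W_k-W^*\rnorm^2/\gamma^2$ plus tracking error throughout the transient; it is not controlled by $\sigma^2$ nor by the amplification factor, and no choice of $\beta=O(\alpha\gamma^{3/2})$ turns it into a variance term. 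After Young's inequality the relevant remainder is $\frac{F_{\max}}{\alpha}\lnorm W_{k+1}-W_k\rnorm^2\sim \frac{\beta^2 F_{\max}^3}{\alpha}\lnorm P_{k+1}\rnorm^2$, which if dumped into the "error floor" would leave a non-vanishing bias proportional to $\lnorm W_k-W^*\rnorm^2$ and destroy exact convergence. The paper's lemma therefore deliberately \emph{retains} these terms on the right-hand side (as $\frac{F_{\max}}{\alpha}\lnorm W_{k+1}-W_k\rnorm^2_{M(P_k)^\dag}+\lnorm W_{k+1}-W_k\rnorm^2$); they are only cancelled later in the Lyapunov combination with the descent of $\lnorm W_k-W^*\rnorm^2$ (Lemma \ref{lemma:TT-W-descent}), the tracking bound on $\lnorm P_{k+1}-P^*(W_k)\rnorm^2$ (Lemma \ref{lemma:TT-P-descent}), and the strong-convexity inequality relating $\lnorm\nabla f(\bar W_k)\rnorm^2$ to $\lnorm P_k-P^*(W_k)\rnorm^2$. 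Your proof of this lemma would be repaired simply by carrying the transfer terms forward unresolved rather than absorbing them into the noise floor, but as written the absorption step is incorrect.
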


\begin{restatable}[Descent Lemma of $W_k$]{lemma}{LemmaTTWDescentScvx}
    \label{lemma:TT-W-descent}
    It holds for {\ResidualLearning} that
    \begin{align}
        \label{inequality:TT-W-descent}
        \restateeq{inequality-saved:TT-W-descent}.
    \end{align}
\end{restatable}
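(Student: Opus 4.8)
The plan is to track the squared distance $\|W_k - W^*\|^2$ along the transfer step \eqref{recursion:HD-W}, exploiting the strong-convexity identity $P^*(W_k) = (W^*-W_k)/\gamma$ guaranteed by Assumption \ref{assumption:strongly-cvx}. Writing the transfer increment as $\Delta_k := \beta\big(P_{k+1}\odot F(W_k) - |P_{k+1}|\odot G(W_k)\big)$, I would first expand
\[
\|W_{k+1} - W^*\|^2 = \|W_k - W^*\|^2 + 2\langle W_k - W^*, \Delta_k\rangle + \|\Delta_k\|^2,
\]
and substitute $W_k - W^* = -\gamma P^*(W_k)$ so that the cross term becomes $-2\gamma\langle P^*(W_k), \Delta_k\rangle$. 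The target is a contraction of the form $\|W_{k+1}-W^*\|^2 \le (1 - \Theta(\beta q_{\min}/\gamma))\|W_k - W^*\|^2 + O(\beta\gamma)\|P_{k+1} - P^*(W_k)\|^2$, i.e.\ geometric decay of the distance up to the residual-tracking error that the companion Lemma \ref{lemma:TT-barW-descent} controls. Since the transfer is deterministic given $P_{k+1}$, this holds pathwise (equivalently, in conditional expectation), and the randomness is averaged out only later in the main proof.

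The heart of the argument is the cross term, which I would split by writing $P_{k+1} = P^*(W_k) + (P_{k+1} - P^*(W_k))$ and treating the two contributions separately. For the \emph{main} contribution I replace $P_{k+1}$ by $P^*(W_k)$ and evaluate $\langle P^*(W_k), P^*(W_k)\odot F(W_k) - |P^*(W_k)|\odot G(W_k)\rangle$ coordinate-wise: with $p := [P^*(W_k)]_d$ the $d$-th summand is $p^2 F_d - p|p|G_d = p^2\big(F_d - \sign(p)G_d\big)$, and the crucial observation is that $F_d - G_d = [Q_+(W_k)]_d$ and $F_d + G_d = [Q_-(W_k)]_d$, so $F_d - \sign(p)G_d$ equals one of $[Q_\pm(W_k)]_d$ and is therefore $\ge q_{\min}$ by positive-definiteness. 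This yields the clean lower bound $\langle P^*(W_k), P^*(W_k)\odot F(W_k) - |P^*(W_k)|\odot G(W_k)\rangle \ge q_{\min}\|P^*(W_k)\|^2$, which together with $\|P^*(W_k)\|^2 = \|W_k-W^*\|^2/\gamma^2$ produces the descent term $-\,2\beta q_{\min}\|W_k - W^*\|^2/\gamma$. (Lemma \ref{lemma:element-wise-product-error} gives the same one-sided bound more abstractly and may be invoked in its place.)

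For the \emph{error} contribution, $-2\gamma\beta\langle P^*(W_k), (P_{k+1}-P^*(W_k))\odot F(W_k) - (|P_{k+1}|-|P^*(W_k)|)\odot G(W_k)\rangle$, I would bound it in absolute value by $4\gamma\beta F_{\max}\|P^*(W_k)\|\,\|P_{k+1}-P^*(W_k)\|$ using $|F|,|G|\le F_{\max}$ and $\big||P_{k+1}|-|P^*(W_k)|\big|\le|P_{k+1}-P^*(W_k)|$, then apply Young's inequality with weight $\epsilon = \gamma\beta q_{\min}$ so that half of the descent term absorbs the $\|P^*(W_k)\|^2$ factor and the remainder becomes $O(\gamma\beta)\|P_{k+1}-P^*(W_k)\|^2$. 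The quadratic term is handled by Lemma \ref{lemma:lip-analog-update}: since the analog increment map is $F_{\max}$-Lipschitz and fixes the origin, $\|\Delta_k\|\le \beta F_{\max}\|P_{k+1}\|$, whence $\|\Delta_k\|^2 \le 2\beta^2 F_{\max}^2\big(\|P^*(W_k)\|^2 + \|P_{k+1}-P^*(W_k)\|^2\big)$; the $\|P^*(W_k)\|^2$ piece is $O(\beta^2/\gamma^2)\|W_k-W^*\|^2$, dominated by the $\Theta(\beta/\gamma)$ descent once $\beta$ is small, and the rest joins the error bucket. Collecting the three pieces gives the claimed inequality.

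The main obstacle I anticipate is the sign bookkeeping in the main cross term: the $|P_{k+1}|\odot G$ contribution does not vanish and must be shown to combine with $P_{k+1}\odot F$ into the one-sided response $Q_+$ or $Q_-$, rather than leaving an indefinite $G$-dependent term. This is exactly where the asymmetric update could in principle destroy the contraction, and it is the step that forces the descent coefficient to be the benign $q_{\min}$ (independent of the asymmetry magnitude $\|G\|$) instead of something that degrades with $\|G/\sqrt{F}\|$. Once this sign identity is in hand, the remaining work is the Young's-inequality balancing that keeps the residual-error coefficient at $O(\beta\gamma)$ — small enough that the subsequent Lyapunov combination with Lemma \ref{lemma:TT-barW-descent} closes — which is essentially routine bookkeeping.
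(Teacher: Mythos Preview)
Your proposal is correct and yields a valid descent inequality, but the paper takes a slightly more elaborate route and records a sharper statement. You lower-bound the main cross term via the clean coordinate identity $F_d-\sign(p)G_d\in\{[Q_+]_d,[Q_-]_d\}\ge q_{\min}$, obtaining a plain-norm contraction $-\Theta(\beta q_{\min}/\gamma)\|W_k-W^*\|^2$. The paper instead rescales by $\sqrt{F(W_k)}$ and applies the polarization identity $2\langle U,V\rangle=\|U\|^2+\|V\|^2-\|U-V\|^2$ (with $U=(W_k-W^*)\odot\sqrt{F}$, $V=|W_k-W^*|\odot G/\sqrt{F}$), which extracts \emph{two} negative pieces from the same cross term: a contraction in the weighted norm $-\frac{\beta}{\gamma F_{\max}}\|W_k-W^*\|^2_{M(W_k)}$ with $M=Q_+\odot Q_-$, and an additional negative term $-\frac{\beta\gamma}{F_{\max}}\|P^*(W_k)\odot F(W_k)-|P^*(W_k)|\odot G(W_k)\|^2$. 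Likewise, the paper runs Young's inequality for the error contribution in the $M(W_k)/M(W_k)^\dagger$ dual pair rather than in plain norm. What this buys the paper is that the extra negative term later cancels exactly the $\|P^*\odot F-|P^*|\odot G\|^2$ contribution that arises when bounding $\|W_{k+1}-W_k\|^2$ inside the $\bar W$-descent lemma (cf.\ \eqref{inequality:TT-convergence-scvx-T3} in the main proof), and the weighted norms survive into the final convergence metric $E^{\texttt{RL}}_K$. Your simpler bound would still close the Lyapunov argument---one can absorb $\|P^*\odot F-|P^*|\odot G\|^2\le F_{\max}^2\|W_k-W^*\|^2/\gamma^2$ directly into your contraction for small $\beta$---but with looser constants and without the refined weighted-norm structure that the paper's stated inequality carries.
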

The proof of Lemma \ref{lemma:TT-barW-descent} and \ref{lemma:TT-W-descent} are deferred to Section \ref{section:proof-lemma:TT-barW-descent} and \ref{section:proof-lemma:TT-W-descent}, respectively.
For a sufficiently large $\gamma$, $P^*(W_k)$ is ensured to be located in the dynamic range of the analog array $P_k$. 
Therefore, we may assume both $q_+(P_k)$ and $q_-(P_k)$ are non-zero, equivalently, there exists a non-zero constant $M^{\texttt{RL}}_{\min}$ such that $\min\{M(P_k)\}\ge M^{\texttt{RL}}_{\min}$ for all $k$. Under this condition, we have the following inequalities
\begin{align}
    \label{inequality:TT-convergence-scvx-T1.1}
    \frac{\alpha}{4 F_{\max}}\|\nabla f(\bar W_k)\|^2_{M(P_k)}
    \ge&\ \frac{\alpha M^{\texttt{RL}}_{\min}}{4 F_{\max}}\|\nabla f(\bar W_k)\|^2, \\
    \label{inequality:TT-convergence-scvx-T1.2}
    \frac{F_{\max}}{\alpha\gamma}\|W_{k+1}-W_k\|^2_{M(P_k)^\dag}
    \le&\ \frac{F_{\max}}{\alpha\gamma M^{\texttt{RL}}_{\min}}\|W_{k+1}-W_k\|^2.
\end{align}
Similarly, we bound the term $\lnorm P_{k+1} - P^*(W_k)\rnorm^2_{M(W_k)^\dag}$ in \eqref{inequality:TT-barW-descent} by
\begin{align}
    \label{inequality:TT-convergence-scvx-T1.3}
    \frac{2\beta F_{\max}^3}{\gamma}\lnorm P_{k+1} - P^*(W_k)\rnorm^2_{M(W_k)^\dag}
    \le \frac{2\beta F_{\max}^3}{\gamma \min\{M(W_k)\}}\lnorm P_{k+1} - P^*(W_k)\rnorm^2.
\end{align}
Notice it is only required to have $\min\{M(W_k)\} > 0$ for the inequality to hold.

By inequality \eqref{inequality:TT-convergence-scvx-T1.2}, the last two terms in the \ac{RHS} of \eqref{inequality:TT-barW-descent} is bounded by
\begin{align}
    \label{inequality:TT-convergence-scvx-T3}
    &\ \frac{F_{\max}}{\alpha}\mathbb{E}_{\xi_k}\lB\|W_{k+1}-W_k\|^2_{M(P_k)^\dag}\rB
    + \mathbb{E}_{\xi_k}[\|W_{k+1}-W_k\|^2]
    \\
    =&\ \frac{F_{\max}}{\alpha M^{\texttt{RL}}_{\min}}\mathbb{E}_{\xi_k}\lB\|W_{k+1}-W_k\|^2\rB
    + \mathbb{E}_{\xi_k}[\|W_{k+1}-W_k\|^2]
    \nonumber\\
    \lemark{a}&\ \frac{2F_{\max}}{\alpha M^{\texttt{RL}}_{\min}}\mathbb{E}_{\xi_k}\lB\|W_{k+1}-W_k\|^2\rB
    = \frac{2\beta^2 F_{\max}}{\alpha M^{\texttt{RL}}_{\min}} \|P_{k+1}\odot F(W_k) - |P_{k+1}|\odot G(W_k)\|^2
    \nonumber\\
    \le&\ \frac{4\beta^2 F_{\max}}{\alpha M^{\texttt{RL}}_{\min}} \|P^*(W_k)\odot F(W_k) - |P^*(W_k)|\odot G(W_k)\|^2
    \nonumber\\
    &\ + \frac{4\beta^2 F_{\max}}{\alpha M^{\texttt{RL}}_{\min}} \|P_{k+1}\odot F(W_k) - |P_{k+1}|\odot G(W_k)-(P^*(W_k)\odot F(W_k) - |P^*(W_k)|\odot G(W_k))\|^2
    \nonumber\\
    \lemark{b}&\ \frac{4\beta^2 F_{\max}}{\alpha M^{\texttt{RL}}_{\min}} \|P^*(W_k)\odot F(W_k) - |P^*(W_k)|\odot G(W_k)\|^2
    + \frac{4\beta^2 F_{\max}}{\alpha M^{\texttt{RL}}_{\min}} \|P_{k+1} - P^*(W_k)\|^2.
    \nonumber
\end{align}
where $(a)$ holds if learning rate $\alpha$ is sufficiently small such that $\frac{F_{\max}}{\alpha\gamma M^{\texttt{RL}}_{\min}}\ge 1$; $(b)$ comes from the Lipschitz continuity of the analog update (c.f. Lemma \ref{lemma:lip-analog-update}).

With all the inequalities and lemmas above, we are ready to prove the main conclusion in Theorem \ref{theorem:TT-convergence-scvx} now. 
Define a Lyapunov function by
\begin{align}
    \mbV_k :=&\ f(\bar W_k)-f^*
	+ C\|W_k-W^*\|^2.
\end{align}
By Lemmas \ref{lemma:TT-barW-descent} and \ref{lemma:TT-W-descent}, we show that $\mbV_k$ has sufficient descent in expectation
\begin{align}
    \label{inequality:TT-convergence-Lya-1}
    &\ \mbE_{\xi_k}[\mbV_{k+1}] 
    \\
    =&\ 
    \mbE_{\xi_k}\lB 
        f(\bar W_{k+1})-f^*
        + C\|W_{k+1}-W^*\|^2
    \rB
    \nonumber\\
    \le&\ f(\bar W_k) - f^*
    -\frac{\alpha}{4 F_{\max}}\|\nabla f(\bar W_k)\|^2_{M(P_k)}
    + 2\alpha\sigma^2\lnorm\frac{G(P_k)}{\sqrt{F(P_k)}} \rnorm^2_\infty
    + 2\alpha^2 L F_{\max}^2 \sigma^2
    \bkeq
    - \frac{\alpha}{8 F_{\max}}\|\nabla f(\bar W_k) \odot F(P_k) + |\nabla f(\bar W_k)| \odot G(P_k)\|^2
    \bkeq
    + \frac{4\beta^2 F_{\max}}{\alpha M^{\texttt{RL}}_{\min}}\|P^*(W_k)\odot F(W_k) - |P^*(W_k)|\odot G(W_k)\|^2
    + \frac{4\beta^2 F_{\max}}{\alpha M^{\texttt{RL}}_{\min}} \mbE_{\xi_k}[\|P_{k+1} - P^*(W_k)\|^2]
    \nonumber\\
    &\ 
    + C \Bigg(~
        \|W_k-W^*\|^2 - \frac{\beta}{2\gamma F_{\max}} \|W_k-W^*\|^2_{M(W_k)}
        + \frac{3\beta F_{\max}^3}{\gamma \min\{M(W_k)\}} \mbE_{\xi_k}[\|P_{k+1} - P^*(W_k)\|^2]
        \bkeq\hspace{3em}
        - \frac{\beta\gamma}{2F_{\max}} \lnorm P^*(W_k) \odot F(W_k) - |P^*(W_k)|\odot G(W_k) \rnorm^2 
    ~\Bigg)
    \nonumber\\
    \le&\ \mbV_k
    -\frac{\alpha M^{\texttt{RL}}_{\min}}{4 F_{\max}}\|\nabla f(\bar W_k)\|^2
    + 2\alpha\sigma^2\lnorm\frac{G(P_k)}{\sqrt{F(P_k)}} \rnorm^2_\infty
    + 2\alpha^2 L F_{\max}^2 \sigma^2
    \bkeq
    - \frac{\alpha}{8 F_{\max}}\|\nabla f(\bar W_k) \odot F(P_k) + |\nabla f(\bar W_k)| \odot G(P_k)\|^2
    \bkeq
    - \lp\frac{\beta\gamma}{2F_{\max}} C - \frac{4\beta^2 F_{\max}}{\alpha M^{\texttt{RL}}_{\min}}\rp \lnorm P^*(W_k) \odot F(W_k) - |P^*(W_k)|\odot G(W_k) \rnorm^2 
    \bkeq
    + \lp \frac{3\beta F_{\max}^3}{\gamma \min\{M(W_k)\}} C + \frac{4\beta^2 F_{\max}}{\alpha M^{\texttt{RL}}_{\min}}\rp  \mbE_{\xi_k}[\|P_{k+1} - P^*(W_k)\|^2]
    - \frac{\beta}{2\gamma F_{\max}} C \|W_k-W^*\|^2_{M(W_k)}.
    \nonumber
\end{align}

Let $C=\frac{10\beta F_{\max}^2}{\alpha M^{\texttt{RL}}_{\min}\gamma}$, which leads to the positive coefficient in front of $\|P_{k+1} - P^*(W_k)\|^2$, i.e.,
\begin{align}
    \label{inequality:TT-convergence-Lya-2}
    &\ \mbE_{\xi_k}[\mbV_{k+1}] 
    \\
    \le&\ \mbV_k
    - \frac{\alpha M^{\texttt{RL}}_{\min}}{4 F_{\max}}\|\nabla f(\bar W_k)\|^2
    + 2\alpha\sigma^2\lnorm\frac{G(P_k)}{\sqrt{F(P_k)}} \rnorm^2_\infty
    + 2\alpha^2 L F_{\max}^2 \sigma^2
    \bkeq
    - \frac{\alpha}{8 F_{\max}}\|\nabla f(\bar W_k) \odot F(P_k) + |\nabla f(\bar W_k)| \odot G(P_k)\|^2
    \bkeq
    - \frac{\beta^2 F_{\max}}{\alpha M^{\texttt{RL}}_{\min}} \lnorm P^*(W_k) \odot F(W_k) - |P^*(W_k)|\odot G(W_k) \rnorm^2 
    \bkeq
    + \lp \frac{30\beta^2 F_{\max}^5}{\alpha \gamma \min\{M(W_k)\} M^{\texttt{RL}}_{\min}} + \frac{4\beta^2 F_{\max}}{\alpha M^{\texttt{RL}}_{\min}}\rp  \mbE_{\xi_k}[\|P_{k+1} - P^*(W_k)\|^2]
    - \frac{5\beta^2 F_{\max}}{\alpha M^{\texttt{RL}}_{\min}} \|W_k-W^*\|^2_{M(W_k)}.
    \nonumber
\end{align}
Notice that the $\|P_{k+1} - P^*(W_k)\|^2$ appears in the \ac{RHS} above, we also need the following lemma to bound it in terms of $\|P_k - P^*(W_k)\|^2$.
\begin{restatable}[Descent Lemma of $P_k$]{lemma}{LemmaTTPDescentScvx}
    \label{lemma:TT-P-descent}
    Suppose Assumptions \ref{assumption:response-factor}-\ref{assumption:noise} and \ref{assumption:strongly-cvx} hold.
    It holds for \TT~that
    \begin{align}
        \label{inequality:TT-P-descent}
        \restateeq{inequality-saved:TT-P-descent}.
    \end{align}
\end{restatable}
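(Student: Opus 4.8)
The plan is to recognize that the residual update \eqref{recursion:HD-P} is, up to a scaling of the learning rate by $\gamma$, an \AnalogSGD~step applied to the lower-level objective $g_k(P) := f(W_k + \gamma P)$, whose unique minimizer under Assumption \ref{assumption:strongly-cvx} is $P^*(W_k)$. Consequently the proof will mirror that of Theorem \ref{theorem:ASGD-convergence-noncvx}, but with the function-value descent replaced by a \emph{contraction in the tracking distance} $\|P_k - P^*(W_k)\|^2$. The key structural fact is the identity $\bar W_k - W^* = \gamma\,(P_k - P^*(W_k))$, which follows from the closed form $P^*(W_k) = (W^*-W_k)/\gamma$. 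Writing $D_k := P_k - P^*(W_k)$, this says $D_k$ and $\bar W_k - W^*$ are parallel, so any quantity involving $\bar W_k - W^*$ converts into one involving $D_k$ up to the factor $\gamma$.

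First I would expand $\|P_{k+1}-P^*(W_k)\|^2 = \|D_k\|^2 + 2\la D_k, \Delta P_k\ra + \|\Delta P_k\|^2$ with $\Delta P_k = -\alpha[\nabla f(\bar W_k;\xi_k)\odot F(P_k) + |\nabla f(\bar W_k;\xi_k)|\odot G(P_k)]$ and take $\mbE_{\xi_k}[\cdot]$. The cross term drives the contraction. Applying Lemma \ref{lemma:element-wise-product-error} with $Q=F(P_k)$ converts $\la D_k, \nabla f(\bar W_k)\odot F(P_k)\ra$ into a multiple of the clean inner product $\la D_k,\nabla f(\bar W_k)\ra$ minus a correction governed by the spread of $F$; the clean inner product is then bounded below by co-coercivity of the $\mu$-strongly convex, $L$-smooth $f$, i.e. $\la \bar W_k-W^*, \nabla f(\bar W_k)\ra \ge \frac{\mu L}{\mu+L}\|\bar W_k-W^*\|^2 + \frac{1}{\mu+L}\|\nabla f(\bar W_k)\|^2$. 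After the substitution $\bar W_k-W^*=\gamma D_k$ this yields a descent term of order $\alpha\gamma\mu\,\|D_k\|^2$ together with a spare $\|\nabla f(\bar W_k)\|^2$ term. The amplification of the strong-convexity constant to $\gamma\mu$ is exactly the mechanism by which a large $\gamma$ strengthens the contraction, explaining the empirical benefit of a nonzero mixing coefficient.

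To isolate the asymptotic-error contribution I would reuse the completion-of-squares device from \eqref{inequality:ASGD-convergence-1-2}--\eqref{inequality:ASGD-convergence-1-2-3}, which turns the bias generated by $|\nabla f(\bar W_k;\xi_k)|\odot G(P_k)$ into the single term $\tfrac{\alpha\sigma^2}{2}\|G(P_k)/\sqrt{F(P_k)}\|_\infty^2$ plus a harmless weighted norm, rather than bounding the $G$-bias by a crude Young's inequality; this is what makes the amplification factor $S^{\texttt{RL}}_K$ appear in the stated form. The second-order term $\mbE_{\xi_k}\|\Delta P_k\|^2$ splits, via Assumption \ref{assumption:noise} and the bound $|G(\cdotc)|\le F(\cdotc)\le F_{\max}$ of Lemma \ref{lemma:bounded-function}, into a variance part of order $\alpha^2\sigma^2 F_{\max}^2$ and a mean part of order $\alpha^2 F_{\max}^2\|\nabla f(\bar W_k)\|^2 \le \alpha^2\gamma^2 L^2 F_{\max}^2\|D_k\|^2$, where I use $\|\nabla f(\bar W_k)\| = \|\nabla f(\bar W_k)-\nabla f(W^*)\| \le \gamma L\|D_k\|$; the spare $\|\nabla f(\bar W_k)\|^2$ produced above absorbs this mean part once $\alpha$ is small.

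Collecting terms gives a recursion of the form $\mbE_{\xi_k}\|P_{k+1}-P^*(W_k)\|^2 \le (1 - c\,\alpha\gamma\mu\,q_{\min}/F_{\max})\|D_k\|^2 + O(\alpha^2\sigma^2 F_{\max}^2) + O(\alpha\sigma^2\|G(P_k)/\sqrt{F(P_k)}\|_\infty^2)$, which is the claimed descent lemma. The hard part will be the multi-parameter bookkeeping: the descent coefficient scales like $\alpha\gamma$ while the second-order mean term scales like $(\alpha\gamma)^2$, so closing the contraction forces $\alpha\gamma$ below a threshold that must stay consistent with the global choices $\alpha=O(\sqrt{1/(\sigma^2 K)})$ and $\beta=O(\alpha\gamma^{3/2})$ of Theorem \ref{theorem:TT-convergence-scvx}, and the correction from the element-wise spread of $F$ (the $C_-$ term of Lemma \ref{lemma:element-wise-product-error}) must be shown not to overwhelm the $\gamma\mu q_{\min}$ descent. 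Checking that these constants are simultaneously satisfiable, and that $\min\{M(P_k)\}\ge M^{\texttt{RL}}_{\min}>0$ holds so that $P^*(W_k)$ remains inside the dynamic range of the residual array, is where the main care is required.
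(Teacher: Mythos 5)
Your proposal follows essentially the same route as the paper's proof: expand $\|P_{k+1}-P^*(W_k)\|^2$, use the identity $\bar W_k - W^* = \gamma(P_k-P^*(W_k))$ together with co-coercivity of the $\mu$-strongly convex, $L$-smooth lower-level objective to get the $\Theta(\alpha\gamma\mu)$ contraction plus a spare $\|\nabla f(\bar W_k)\|^2$ term, invoke Lemma \ref{lemma:element-wise-product-error} for the response-function scaling, and absorb the mean part of the second-order term into that spare gradient norm for small $\alpha$. The one small inaccuracy is your claim that the completion-of-squares device from the \AnalogSGD~proof transfers to the $G$-bias here: that cancellation relies on the gradient appearing on both sides of the inner product, whereas the cross term now pairs $P_k-P^*(W_k)$ with the update, so it degenerates to exactly the Young-type bound (with a $1/\gamma$-dependent coefficient on the $\sigma^2\lnorm G(P_k)/\sqrt{F(P_k)}\rnorm_\infty^2$ term) that the paper uses — which still yields the stated recursion.
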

The proof of Lemma \ref{lemma:TT-P-descent} is deferred to Section \ref{section:proof-lemma:TT-P-descent}. By Lemma \ref{lemma:TT-P-descent}, we bound the $\|P_{k+1} - P^*(W_k)\|^2$ in terms of $\|P_k - P^*(W_k)\|^2$ as
\begin{align}
    \label{inequality:TT-convergence-Lya-2-T5}
    &\ \lp \frac{30\beta^2 F_{\max}^5}{\alpha \gamma \min\{M(W_k)\} M^{\texttt{RL}}_{\min}} + \frac{4\beta^2 F_{\max}}{\alpha M^{\texttt{RL}}_{\min}}\rp  \mbE_{\xi_k}[\|P_{k+1} - P^*(W_k)\|^2]
    \\
    \lemark{a}&\ \frac{32\beta^2 F_{\max}^5}{\alpha \gamma \min\{M(W_k)\} M^{\texttt{RL}}_{\min}}  \mbE_{\xi_k}[\|P_{k+1} - P^*(W_k)\|^2]
    \nonumber\\
    \le&\ \frac{32\beta^2 F_{\max}^5}{\alpha \gamma \min\{M(W_k)\} M^{\texttt{RL}}_{\min}} 
    \lp
        1- \frac{\alpha}{4} \frac{\mu L}{\gamma(\mu+L)}
    \rp \|P_k-P^*(W_k)\|^2 
    \bkeq
    + \frac{32\beta^2 F_{\max}^5}{\alpha \gamma \min\{M(W_k)\} M^{\texttt{RL}}_{\min}} \lp\frac{2\alpha(\mu+L)F_{\max}\sigma^2}{\gamma \mu L}\lnorm\frac{G(P_k)}{\sqrt{F(P_k)}}\rnorm_\infty^2
    + \alpha^2 F_{\max}^2 \sigma^2\rp
    \nonumber\\
    \le&\ \frac{32\beta^2 F_{\max}^5}{\alpha \gamma \min\{M(W_k)\} M^{\texttt{RL}}_{\min}} \|P_k-P^*(W_k)\|^2 
    + O\lp
        \beta^2\sigma^2\lnorm\frac{G(P_k)}{\sqrt{F(P_k)}}\rnorm_\infty^2
        + \alpha\beta^2 F_{\max}^2 \sigma^2
    \rp
    \nonumber\\
    \lemark{b}&\ \frac{32\beta^2 F_{\max}^5}{\alpha \gamma \min\{M(W_k)\} M^{\texttt{RL}}_{\min}} \|P_k-P^*(W_k)\|^2 
    + \alpha\sigma^2\lnorm\frac{G(P_k)}{\sqrt{F(P_k)}}\rnorm_\infty^2
    + \alpha^2L F_{\max}^2 \sigma^2
    \nonumber
\end{align}
where $(a)$ assumes $\frac{4\beta^2 F_{\max}}{\alpha M^{\texttt{RL}}_{\min}} \le \frac{2\beta^2 F_{\max}^5}{\alpha \gamma \min\{M(W_k)\} M^{\texttt{RL}}_{\min}}$ with lost of generality to keep the formulations simple since $\gamma \min\{M(W_k)\} $ is typically small; (b) holds given $\alpha$ and $\beta$ is sufficiently small. 
In addition, the 
{strong convexity of the objective} 
(c.f. Assumption \ref{assumption:strongly-cvx}) implies that
\begin{align}
    \label{inequality:TT-convergence-scvx-T2}
    \frac{\alpha M^{\texttt{RL}}_{\min}}{8 F_{\max}}\lnorm \nabla f(\bar W_k)\rnorm^2
    \ge&\ \frac{\alpha\mu^2 M^{\texttt{RL}}_{\min}}{8 F_{\max}}\lnorm \bar W_k - W^*\rnorm^2
    = \frac{\alpha\mu^2 M^{\texttt{RL}}_{\min}}{8 F_{\max}} \lnorm W_k + \gamma P_k - W^*\rnorm^2
    \\
    =&\ \frac{\alpha\mu^2\gamma^2 M^{\texttt{RL}}_{\min}}{8 F_{\max}} \lnorm P_k - \frac{W^*-W_k}{\gamma}\rnorm^2
    = \frac{\alpha\mu^2\gamma^2 M^{\texttt{RL}}_{\min}}{8 F_{\max}} \lnorm P_k - P^*(W_k)\rnorm^2.
    \nonumber
\end{align}
Substituting \eqref{inequality:TT-convergence-Lya-2-T5} and \eqref{inequality:TT-convergence-scvx-T2} back into \eqref{inequality:TT-convergence-Lya-2} yields 
\begin{align}
    \label{inequality:TT-convergence-Lya-3}
    &\ \mbE_{\xi_k}[\mbV_{k+1}] 
    \\
    \le&\ \mbV_k
    - \frac{\alpha M^{\texttt{RL}}_{\min}}{8 F_{\max}} \|\nabla f(\bar W_k)\|^2
    + 3\alpha\sigma^2\lnorm\frac{G(P_k)}{\sqrt{F(P_k)}} \rnorm^2_\infty
    + 3\alpha^2 L F_{\max}^2 \sigma^2
    \bkeq
    - \frac{\alpha}{8 F_{\max}}\|\nabla f(\bar W_k) \odot F(P_k) + |\nabla f(\bar W_k)| \odot G(P_k)\|^2
    \bkeq
    - \frac{\beta^2 F_{\max}}{\alpha M^{\texttt{RL}}_{\min}} \lnorm P^*(W_k) \odot F(W_k) - |P^*(W_k)|\odot G(W_k) \rnorm^2 
    \bkeq
    - \lp \frac{\alpha\mu^2\gamma^2 M^{\texttt{RL}}_{\min}}{8 F_{\max}} - \frac{32\beta^2 F_{\max}^5}{\alpha \gamma \min\{M(W_k)\} M^{\texttt{RL}}_{\min}}\rp \lnorm P_k - P^*(W_k)\rnorm^2
    - \frac{5\beta^2 F_{\max}}{\alpha M^{\texttt{RL}}_{\min}} \|W_k-W^*\|^2_{M(W_k)}
    \nonumber\\
    =&\ \mbV_k
    - \frac{\alpha M^{\texttt{RL}}_{\min}}{8 F_{\max}} \|\nabla f(\bar W_k)\|^2
    + 3\alpha\sigma^2\lnorm\frac{G(P_k)}{\sqrt{F(P_k)}} \rnorm^2_\infty
    + 3\alpha^2 L F_{\max}^2 \sigma^2
    \bkeq
    - \frac{\alpha}{8 F_{\max}}\|\nabla f(\bar W_k) \odot F(P_k) + |\nabla f(\bar W_k)| \odot G(P_k)\|^2
    \bkeq
    - \frac{\alpha\mu^2\gamma^3 \min\{M(W_k)\}}{512 F_{\max}^5 M^{\texttt{RL}}_{\min}} \lnorm P^*(W_k) \odot F(W_k) - |P^*(W_k)|\odot G(W_k) \rnorm^2 
    \bkeq
    - \frac{\alpha\mu^2\gamma^2 M^{\texttt{RL}}_{\min}}{16 F_{\max}} \lnorm P_k - P^*(W_k)\rnorm^2
    - \frac{5\alpha\mu^2\gamma^3}{512 F_{\max}^5 M^{\texttt{RL}}_{\min}} \|W_k-W^*\|^2_{M(W_k)}
    \nonumber
\end{align}
where the last step chooses the transfer learning rate by
\begin{align}
    \beta = 
    \saveeq{inequality-RHS:beta}{
        \frac{\alpha\mu\gamma^{\frac{3}{2}} \sqrt{\min\{M(W_k)\}} M^{\texttt{RL}}_{\min}}{16\sqrt{2} F_{\max}^3}
    }.
\end{align}
Rearranging inequality \eqref{inequality:TT-convergence-Lya-1} above, we have
\begin{align}
    \label{inequality:TT-convergence-Lya-4}
    &\ 
    \frac{\alpha}{8 F_{\max}}\|\nabla f(\bar W_k) \odot F(P_k) + |\nabla f(\bar W_k)| \odot G(P_k)\|^2
    + \frac{\alpha}{8F_{\max} M^{\texttt{RL}}_{\min}}\|\nabla f(\bar W_k)\|^2
    \\
    &\ + \frac{\alpha\mu^2\gamma^3 \min\{M(W_k)\}}{512 F_{\max}^5 M^{\texttt{RL}}_{\min}} \lnorm P^*(W_k) \odot F(W_k) - |P^*(W_k)|\odot G(W_k) \rnorm^2 
    \bkeq
    +\frac{5\alpha\mu^2\gamma^3 \min\{M(W_k)\}}{512 F_{\max}^5 M^{\texttt{RL}}_{\min}} \|W_k-W^*\|^2_{M(W_k)}
    + \frac{\alpha\mu^2\gamma^2 M^{\texttt{RL}}_{\min}}{16 F_{\max}} \lnorm P_k - P^*(W_k)\rnorm^2
    \nonumber\\
    \le&\ 
    \mbV_k - \mbE_{\xi_k}[\mbV_{k+1}]
    + 3\alpha^2 L F_{\max}^2 \sigma^2
    + 3\alpha\sigma^2\lnorm\frac{G(P_k)}{\sqrt{F(P_k)}} \rnorm^2_\infty.
    \nonumber
\end{align}
Define the convergence metric $E^{\texttt{RL}}_K$ as
\begin{align}
    \label{definition:E-TT}
    E^{\texttt{RL}}_K := &\ 
    \frac{1}{K}\sum_{k=0}^{K-1}\mbE\bigg[
    \|\nabla f(\bar W_k) \odot F(P_k) + |\nabla f(\bar W_k)| \odot G(P_k)\|^2
    + \frac{1}{M^{\texttt{RL}}_{\min}}\|\nabla f(\bar W_k)\|^2
    \\
    &\ + \frac{\mu^2\gamma^3 \min\{M(W_k)\}}{64 F_{\max}^4 M^{\texttt{RL}}_{\min}} \lnorm P^*(W_k) \odot F(W_k) - |P^*(W_k)|\odot G(W_k) \rnorm^2 
    \bkeq
    + \frac{5\mu^2\gamma^3}{64 F_{\max}^4 M^{\texttt{RL}}_{\min}} \|W_k-W^*\|^2_{M(W_k)}
    + \frac{\mu^2\gamma^2 M^{\texttt{RL}}_{\min}}{2} \lnorm P_k - P^*(W_k)\rnorm^2
    \bigg].
    \nonumber
\end{align}
Taking expectation over all $\xi_K, \xi_{K-1}, \cdots, \xi_0$, averaging \eqref{inequality:TT-convergence-Lya-4} over $k$ from $0$ to $K-1$, and choosing the parameter $\alpha$ as $\alpha = O\lp \frac{1}{F_{\max}}\sqrt{\frac{\mbV_0}{\sigma^2L K}} \rp$ deduce that
\begin{align}
    E^{\texttt{RL}}_K
    \le&\ 8F_{\max}
    \lp
        \frac{\mbV_0 - \mbE[\mbV_{k+1}]}{\alpha K}
        + 3\alpha L F_{\max}^2 \sigma^2
    \rp
    + 24F_{\max}\sigma^2\times \frac{1}{K}\sum_{k=0}^{K-1}\lnorm\frac{G(P_k)}{\sqrt{F(P_k)}} \rnorm^2_\infty
    \\
    \le&\ 8F_{\max}
    \lp
        \frac{\mbV_0}{\alpha K}
        + 3\alpha L F_{\max}^2 \sigma^2
    \rp
    + 24F_{\max}\sigma^2\times \frac{1}{K}\sum_{k=0}^{K-1}\lnorm\frac{G(P_k)}{\sqrt{F(P_k)}} \rnorm^2_\infty
    \nonumber \\
    = &\ 
    O\lp F_{\max}^2\sqrt{\frac{\mbV_0\sigma^2L}{K}}\rp
    + 24F_{\max}\sigma^2 S^{\texttt{RL}}_K.
    \nonumber
\end{align}
The 
{strong convexity of the objective (Assumption \ref{assumption:strongly-cvx})}
 implies that
\begin{align}
    \mbV_0 = f(\bar W_0)-f^* + C\|W_0-W^*\|^2
    \le \lp 1+\frac{2C}{\mu}\rp (f(W_0)-f^*).
\end{align}
Plugging it back to the above inequality, we have
\begin{align}
    E^{\texttt{RL}}_K
    = &\ 
    \saveeq{inequality-RHS:TT-convergence-upperbound}{
        O\lp F_{\max}^2\sqrt{\frac{(f(W_0) - f^*)\sigma^2L}{K}}\rp
        + 24F_{\max}\sigma^2 S^{\texttt{RL}}_K
    }.
\end{align}
The proof is completed.
\end{proof}

\subsection{Proof of Lemma \ref{lemma:TT-barW-descent}: Descent of sequence $\bar W_k$}
\label{section:proof-lemma:TT-barW-descent}
\LemmaTTbarWDescentScvx*
\begin{proof}[Proof of Lemma \ref{lemma:TT-barW-descent}]
The $L$-smooth assumption (Assumption \ref{assumption:Lip}) implies that
\begin{align}
    \label{inequality:TT-convergence-1}
    &\ \mathbb{E}_{\xi_k}[f(\bar W_{k+1})] \le f(\bar W_k)
    +\mathbb{E}_{\xi_k}[\la \nabla f(\bar W_k), \bar W_{k+1}-\bar W_k\ra]
    + \frac{L}{2}\mathbb{E}_{\xi_k}[\|\bar W_{k+1}-\bar W_k\|^2] \\
    =&\ f(\bar W_k)
    + \gamma \underbrace{\mathbb{E}_{\xi_k}[\la \nabla f(\bar W_k), P_{k+1}-P_k\ra]}_{(a)}
    + \underbrace{\mathbb{E}_{\xi_k}[\la \nabla f(\bar W_k), W_{k+1}-W_k\ra]}_{(b)}
    + \underbrace{\frac{L}{2}\mathbb{E}_{\xi_k}[\|\bar W_{k+1}-\bar W_k\|^2]}_{(c)}.
    \nonumber
\end{align}
Next, we will handle the each term in the \ac{RHS} of \eqref{inequality:TT-convergence-1} separately.

\textbf{Bound of the second term (a).}
To bound term (a) in the \ac{RHS} of \eqref{inequality:TT-convergence-1}, we leverage the assumption that noise has expectation $0$ (Assumption \ref{assumption:noise})
\begin{align}
    \label{inequality:TT-convergence-1-T2}
    &\ \mathbb{E}_{\xi_k}[\la \nabla f(\bar W_k), P_{k+1}-P_k\ra] \\
    =&\ \alpha\mathbb{E}_{\xi_k}\lB \la 
    \nabla f(\bar W_k)\odot \sqrt{F(P_k)}, 
    \frac{P_{k+1}-P_k}{\alpha\sqrt{F(P_k)}}+(\nabla f(\bar W_k; \xi_k)-\nabla f(\bar W_k))\odot \sqrt{F(P_k)}
    \ra\rB
    \nonumber \\
    =&\ - \frac{\alpha}{2} \|\nabla f(\bar W_k)\odot \sqrt{F(P_k)}\|^2 
    \nonumber \\
    &\ - \frac{1}{2\alpha}\mathbb{E}_{\xi_k}\lB\lnorm\frac{P_{k+1}-P_k}{\sqrt{F(P_k)}}+\alpha (\nabla f(\bar W_k; \xi_k)-\nabla f(\bar W_k))\odot \sqrt{F(P_k)}\rnorm^2\rB
    \nonumber \\
    &\ + \frac{1}{2\alpha}\mathbb{E}_{\xi_k}\lB\lnorm\frac{P_{k+1}-P_k}{\sqrt{F(P_k)}}+\alpha \nabla f(\bar W_k; \xi_k)\odot \sqrt{F(P_k)}\rnorm^2\rB.
    \nonumber
\end{align}
The second term in the \ac{RHS} of \eqref{inequality:TT-convergence-1-T2} can be bounded by 
\begin{align}
    \label{inequality:TT-convergence-1-T2-T2}
    &\frac{1}{2\alpha}\mathbb{E}_{\xi_k}\lB\lnorm\frac{P_{k+1}-P_k}{\sqrt{F(P_k)}}+\alpha (\nabla f(\bar W_k; \xi_k)-\nabla f(\bar W_k))\odot \sqrt{F(P_k)}\rnorm^2\rB\\
    =&\ \frac{1}{2\alpha}\mathbb{E}_{\xi_k}\lB\lnorm\frac{P_{k+1}-P_k+\alpha (\nabla f(\bar W_k; \xi_k)-\nabla f(\bar W_k))\odot F(P_k)}{\sqrt{F(P_k)}}\rnorm^2\rB
    \nonumber\\
    \ge&\ \frac{1}{2\alpha F_{\max}}\mathbb{E}_{\xi_k}\lB\| P_{k+1}-P_k + \alpha (\nabla f(\bar W_k; \xi_k)-\nabla f(\bar W_k))\odot F(P_k)\|^2\rB
    \nonumber.
\end{align}

The third term in the \ac{RHS} of \eqref{inequality:TT-convergence-1-T2} can be bounded by variance decomposition and bounded variance assumption (Assumption \ref{assumption:noise})
\begin{align}
    \label{inequality:TT-convergence-1-T2-T3}
    &\ \frac{1}{2\alpha}\mathbb{E}_{\xi_k}\lB\lnorm\frac{P_{k+1}-P_k}{\sqrt{F(P_k)}}+\alpha \nabla f(\bar W_k; \xi_k)\odot \sqrt{F(P_k)}\rnorm^2\rB
    \\
    \le&\ \frac{\alpha}{2}\mathbb{E}_{\xi_k}\lB\lnorm|\nabla f(\bar W_k; \xi_k)|\odot\frac{G(P_k)}{\sqrt{F(P_k)}} \rnorm^2\rB
    \nonumber \\
    \le&\ \frac{\alpha}{2}\lnorm|\nabla f(\bar W_k)|\odot\frac{G(P_k)}{\sqrt{F(P_k)}} \rnorm^2
    +\frac{\alpha\sigma^2}{2}\lnorm\frac{G(P_k)}{\sqrt{F(P_k)}} \rnorm^2_\infty.
    \nonumber
\end{align}

Notice that the first term in the \ac{RHS} of \eqref{inequality:TT-convergence-1-T2} and the second term in the \ac{RHS} of \eqref{inequality:TT-convergence-1-T2-T3} can be bounded together
\begin{align}
    \label{inequality:TT-convergence-1-T2-mixed}
    &\ - \frac{\alpha}{2} \|\nabla f(\bar W_k) \odot \sqrt{F(P_k)}\|^2 
    + \frac{\alpha}{2} \lnorm|\nabla f(\bar W_k)| \odot \frac{G(P_k)}{\sqrt{F(P_k)}}\rnorm^2 \\
    =&\ -\frac{\alpha}{2}\sum_{d\in[D]} \lp [\nabla f(\bar W_k)]_d^2 \lp [F(P_k)]_d-\frac{[G(P_k)]^2_d}{[F(P_k)]_d}\rp\rp
    \nonumber \\
    =&\ -\frac{\alpha}{2}\sum_{d\in[D]} \lp [\nabla f(\bar W_k)]_d^2 \lp \frac{[F(P_k)]^2_d-[G(P_k)]^2_d}{[F(P_k)]_d}\rp\rp
    \nonumber \\
    \le&\ -\frac{\alpha}{2 F_{\max}}\sum_{d\in[D]} \lp [\nabla f(\bar W_k)]_d^2 \lp [F(P_k)]_d^2-[G(P_k)]^2_d\rp\rp
    \nonumber\\
    =&\ -\frac{\alpha}{2 F_{\max}}\|\nabla f(\bar W_k)\|^2_{M(P_k)}
    \le 0.
    \nonumber
\end{align}
Plugging \eqref{inequality:TT-convergence-1-T2-T2} to \eqref{inequality:TT-convergence-1-T2-mixed} into \eqref{inequality:TT-convergence-1-T2}, we bound the term (a) by
\begin{align}
    \label{inequality:TT-convergence-1-T2-2}
    &\ \mathbb{E}_{\xi_k}[\la \nabla f(\bar W_k), P_{k+1}-P_k\ra]
    \\
    \le&\ -\frac{\alpha}{2 F_{\max}}\|\nabla f(\bar W_k)\|^2_{M(P_k)}
    +\frac{\alpha\sigma^2}{2}\lnorm\frac{G(P_k)}{\sqrt{F(P_k)}} \rnorm^2_\infty
    \nonumber \\
    &\ - \frac{1}{2\alpha F_{\max}}\mathbb{E}_{\xi_k}\lB\lnorm P_{k+1}-P_k + \alpha (\nabla f(\bar W_k; \xi_k)-\nabla f(\bar W_k))\odot F(P_k)\rnorm^2\rB.
    \nonumber
\end{align}

\textbf{Bound of the third term (b).} By Young's inequality, we have
\begin{align}
    \label{inequality:TT-convergence-1-T3}
    &\ \mathbb{E}_{\xi_k}[\la \nabla f(\bar W_k), W_{k+1}-W_k\ra]
    \le \frac{\alpha}{4F_{\max}}\|\nabla f(\bar W_k)\|^2_{M(P_k)} 
    + \frac{F_{\max}}{\alpha}\mathbb{E}_{\xi_k}[\|W_{k+1}-W_k\|^2_{M(P_k)^\dag}].
\end{align}

\textbf{Bound of the third term (c).} Repeatedly applying inequality $\|U+V\|^2\le 2\|U\|^2+2\|V\|^2$ for any $U, V\in\reals^D$, we have
\begin{align}
    \label{inequality:TT-convergence-1-T4}
    &\ \frac{L}{2}\mathbb{E}_{\xi_k}[\|\bar W_{k+1}-\bar W_k\|^2]
    \\
    \le&\ L\mathbb{E}_{\xi_k}[\|W_{k+1}-W_k\|^2]
    + L\mathbb{E}_{\xi_k}[\|P_{k+1}-P_k\|^2] 
    \nonumber\\
    \le&\ L\mathbb{E}_{\xi_k}[\|W_{k+1}-W_k\|^2]
    + 2L\mathbb{E}_{\xi_k}\lB\lnorm P_{k+1}-P_k + \alpha (\nabla f(\bar W_k; \xi_k)-\nabla f(\bar W_k))\odot F(P_k)\rnorm^2\rB
    \bkeq
    + 2\alpha^2 L\mathbb{E}_{\xi_k}\lB\lnorm (\nabla f(\bar W_k; \xi_k)-\nabla f(\bar W_k))\odot F(P_k)\rnorm^2\rB
    \nonumber\\
    \le&\ \mathbb{E}_{\xi_k}[\|W_{k+1}-W_k\|^2]
    + 2L\mathbb{E}_{\xi_k}\lB\lnorm P_{k+1}-P_k + \alpha (\nabla f(\bar W_k; \xi_k)-\nabla f(\bar W_k))\odot F(P_k)\rnorm^2\rB
    \bkeq
    + 2\alpha^2 L F_{\max}^2 \sigma^2
    \nonumber
\end{align}
where the last inequality comes from the bounded variance assumption (Assumption \ref{assumption:noise})
\begin{align}
    \label{inequality:TT-convergence-1-T4-S1-T3}
    &\ 2\alpha^2 L\mathbb{E}_{\xi_k}\lB\lnorm (\nabla f(\bar W_k; \xi_k)-\nabla f(\bar W_k))\odot F(P_k)\rnorm^2\rB
    \\
    \le&\ 2\alpha^2 LF_{\max}^2\mathbb{E}_{\xi_k}\lB\lnorm \nabla f(\bar W_k; \xi_k)-\nabla f(\bar W_k)\rnorm^2\rB
    \nonumber\\
    \le&\ 2\alpha^2 LF_{\max}^2\sigma^2.
    \nonumber
\end{align}

\textbf{Combination of the upper bound $(a)$, $(b)$, and $(c)$.} Plugging \eqref{inequality:TT-convergence-1-T2-2}, \eqref{inequality:TT-convergence-1-T3}, \eqref{inequality:TT-convergence-1-T4} into \eqref{inequality:TT-convergence-1}, we derive
\begin{align}
    \label{inequality:TT-convergence-2}
    \mathbb{E}_{\xi_k}[f(\bar W_{k+1})] 
    \le&\ f(\bar W_k)
    -\frac{\alpha}{4 F_{\max}}\|\nabla f(\bar W_k)\|^2_{M(P_k)}
    +\frac{\alpha\sigma^2}{2}\lnorm\frac{G(P_k)}{\sqrt{F(P_k)}} \rnorm^2_\infty
    \\
    \hspace{-1em}
    &\ - \lp\frac{1}{2\alpha F_{\max}}-2L\rp~\mathbb{E}_{\xi_k}\lB\lnorm P_{k+1}-P_k + \alpha (\nabla f(\bar W_k; \xi_k)-\nabla f(\bar W_k))\odot F(P_k)\rnorm^2\rB
    \bkeq
    + \frac{F_{\max}}{\alpha}~\mathbb{E}_{\xi_k}\lB\|W_{k+1}-W_k\|^2_{M(P_k)^\dag}\rB
    + \mathbb{E}_{\xi_k}[\|W_{k+1}-W_k\|^2]
    + 2\alpha^2 L F_{\max}^2 \sigma^2
    \nonumber.
\end{align}
We bound the fourth term in the \ac{RHS} of \eqref{inequality:TT-convergence-2} using the similar technique as in \eqref{inequality:ASGD-convergence-2-T3}
\begin{align}
    \label{inequality:TT-converge-2-T4}
    &\ \mathbb{E}_{\xi_k}\lB\lnorm P_{k+1}-P_k + \alpha (\nabla f(\bar W_k; \xi_k)-\nabla f(\bar W_k))\odot F(P_k)\rnorm^2\rB
    \\
    \ge&\ \frac{\alpha^2}{2} \|\nabla f(\bar W_k) \odot F(P_k) + |\nabla f(\bar W_k)| \odot G(P_k)\|^2
    -\alpha^2 F_{\max} \sigma^2\lnorm\frac{G(P_k)}{\sqrt{F(P_k)}} \rnorm^2_\infty.
    \nonumber
\end{align}
Inequality \eqref{inequality:TT-converge-2-T4} as well as the learning rate rule $\alpha\le\frac{1}{4LF_{\max}}$ leads to the conclusion
\begin{align}
    \saveeq{inequality-saved:TT-barW-descent}{
        \mathbb{E}_{\xi_k}[f(\bar W_{k+1})] 
        \le&\  f(\bar W_k)
        -\frac{\alpha}{4 F_{\max}}\|\nabla f(\bar W_k)\|^2_{M(P_k)}
        + 2\alpha\sigma^2\lnorm\frac{G(P_k)}{\sqrt{F(P_k)}} \rnorm^2_\infty
        + 2\alpha^2 L F_{\max}^2 \sigma^2
        \\
        &\ 
        - \frac{\alpha\gamma}{8 F_{\max}}\|\nabla f(\bar W_k) \odot F(P_k) 
        + |\nabla f(\bar W_k)| \odot G(P_k)\|^2
        \bkeq
        + \frac{F_{\max}}{\alpha}\mathbb{E}_{\xi_k}\lB\|W_{k+1}-W_k\|^2_{M(P_k)^\dag}\rB
        + \mathbb{E}_{\xi_k}[\|W_{k+1}-W_k\|^2]
        \nonumber.
    }
\end{align}
The proof is completed.
\end{proof}

\subsection{Proof of Lemma \ref{lemma:TT-W-descent}: Descent of sequence $W_k$}
\label{section:proof-lemma:TT-W-descent}
\LemmaTTWDescentScvx*
\begin{proof}[Proof of Lemma \ref{lemma:TT-W-descent}]
The proof begins from manipulating the norm $\|W_{k+1}-W^*\|^2$
\begin{align}
    \label{inequality:TT-convergence-scvx-S1}
    \|W_{k+1}-W^*\|^2 =&\ \|W_k-W^*\|^2 + 2\la W_k-W^*, W_{k+1}-W_k\ra + \|W_{k+1}-W_k\|^2.
\end{align}
Revisit that we interpret $P_k$ as the residual of $W_k$, namely $P^*(W) := \frac{W^*-W}{\gamma}$.
Therefore, we bound the second term in the \ac{RHS} of \eqref{inequality:TT-convergence-scvx-S1} by
\begin{align}
    \label{inequality:TT-convergence-scvx-S1-T2-S1}
    &\ 2\la W_k-W^*, W_{k+1}-W_k\ra
    \\
    =&\ 2\la W_k-W^*, \beta P_{k+1}\odot F(W_k) - \beta|P_{k+1}|\odot G(W_k)\ra
    \nonumber\\
    =&\ 2\beta \la W_k-W^*, P^*(W_k)\odot F(W_k) - |P^*(W_k)|\odot G(W_k)\ra
    \bkeq
    + 2\beta \la W_k-W^*, P_{k+1}\odot F(W_k) - |P_{k+1}|\odot G(W_k)-(P^*(W_k)\odot F(W_k) - |P^*(W_k)|\odot G(W_k))\ra.
    \nonumber
\end{align}

The first term in the \ac{RHS} of \eqref{inequality:TT-convergence-scvx-S1-T2-S1} is bounded by
\begin{align}
    \label{inequality:TT-convergence-scvx-S1-T2-S1-T1}
    &\ 2\beta\la W_k-W^*, P^*(W_k)\odot F(W_k) - |P^*(W_k)|\odot G(W_k)\ra
    \\
    =&\ 2\beta\la (W_k-W^*)\odot \sqrt{F(W_k)}, \frac{P^*(W_k)\odot F(W_k) - |P^*(W_k)|\odot G(W_k)}{\sqrt{F(W_k)}}\ra
    \nonumber\\
    =&\ -\frac{2\beta}{\gamma}\la (W_k-W^*)\odot \sqrt{F(W_k)}, (W_k-W^*)\odot \sqrt{F(W_k)}\ra
	\bkeq
	+ \frac{2\beta}{\gamma}\la (W_k-W^*)\odot \sqrt{F(W_k)}, |W_k-W^*|\odot \frac{G(W_k)}{\sqrt{F(W_k)}}\ra
    \nonumber\\
    \eqmark{a}&\ - \frac{\beta}{\gamma} \|(W_k-W^*)  \odot \sqrt{F(W_k)}\|^2 
	+ \frac{\beta}{\gamma} \lnorm |W_k-W^*| \odot \frac{G(W_k)}{\sqrt{F(W_k)}}\rnorm^2 
    \bkeq
	- \frac{\beta}{\gamma} \lnorm(W_k-W^*) \odot \sqrt{F(W_k)} + |W_k-W^*|\odot \frac{G(W_k)}{\sqrt{F(W_k)}} \rnorm^2 
    \nonumber\\
    \lemark{b}&\ - \frac{\beta}{\gamma F_{\max}} \|W_k-W^*\|^2_{M(W_k)}
	- \frac{\beta}{\gamma} \lnorm(W_k-W^*) \odot \sqrt{F(W_k)} + |W_k-W^*|\odot \frac{G(W_k)}{\sqrt{F(W_k)}} \rnorm^2 
    \nonumber\\
    \lemark{c}&\ - \frac{\beta}{\gamma F_{\max}} \|W_k-W^*\|^2_{M(W_k)}
	- \frac{\beta\gamma}{F_{\max}} \lnorm P^*(W_k) \odot F(W_k) - |P^*(W_k)|\odot G(W_k) \rnorm^2 
    \nonumber
\end{align}
where $(a)$ leverages the equality $2\la U, V\ra = \|U\|^2-\|V\|^2-\|U-V\|^2$ for any $U, V\in\reals^D$, $(b)$ is achieved by similar technique \eqref{inequality:ASGD-converge-linear-3}, and $(c)$ comes from 
\begin{align}
    &\ - \frac{\beta}{\gamma} \lnorm(W_k-W^*) \odot \sqrt{F(W_k)} + |W_k-W^*|\odot \frac{G(W_k)}{\sqrt{F(W_k)}} \rnorm^2 
    \\
    =&\ - \beta\gamma \lnorm \frac{1}{\sqrt{F(W_k)}}\odot\lp\frac{W_k-W^*}{\gamma} \odot {F(W_k)} + \labs\frac{W_k-W^*}{\gamma}\rabs\odot G(W_k)\rp \rnorm^2 
    \nonumber\\
    \le&\ - \frac{\beta\gamma}{F_{\max}} \lnorm P^*(W_k) \odot F(W_k) - |P^*(W_k)|\odot G(W_k) \rnorm^2.
    \nonumber
\end{align}

The second term in the \ac{RHS} of \eqref{inequality:TT-convergence-scvx-S1-T2-S1} is bounded by the Lipschitz continuity of analog update (c.f. Lemma \ref{lemma:lip-analog-update})
\begin{align}
    \label{inequality:TT-convergence-scvx-S1-T2-S1-T2}
    &\ \frac{2\beta}{\gamma} \la W_k-W^*, P_{k+1}\odot F(W_k) - |P_{k+1}|\odot G(W_k)-(P^*(W_k)\odot F(W_k) - |P^*(W_k)|\odot G(W_k))\ra
    \nonumber \\
    \le&\ 
    \frac{\beta}{2\gamma F_{\max}}\|W_k-W^*\|^2_{M(W_k)} + \frac{2\beta F_{\max}}{\gamma}
    \\&\ \hspace{-0.5em}
    \times\lnorm P_{k+1}\odot F(W_k) - |P_{k+1}|\odot G(W_k)-(P^*(W_k)\odot F(W_k) - |P^*(W_k)|\odot G(W_k)) \rnorm^2_{M(W_k)^\dag}
    \nonumber \\
    \le&\ 
    \frac{\beta}{2\gamma F_{\max}}\|W_k-W^*\|^2_{M(W_k)}
    + \frac{2\beta F_{\max}^3}{\gamma}\lnorm P_{k+1} - P^*(W_k)\rnorm^2_{M(W_k)^\dag}.
    \nonumber 
\end{align}

Substituting \eqref{inequality:TT-convergence-scvx-S1-T2-S1-T1} and \eqref{inequality:TT-convergence-scvx-S1-T2-S1-T2} into \eqref{inequality:TT-convergence-scvx-S1-T2-S1}, we bound the second term in the \ac{RHS} of \eqref{inequality:TT-convergence-scvx-S1} by
\begin{align}
    \label{inequality:TT-convergence-scvx-S1-T2S2}
    &\ 2\la W_k-W^*, W_{k+1}-W_k\ra
    \\
    \le&\ - \frac{\beta}{\gamma F_{\max}} \|W_k-W^*\|^2_{M(W_k)}
	- \frac{\beta\gamma}{F_{\max}} \lnorm P^*(W_k) \odot F(W_k) - |P^*(W_k)|\odot G(W_k) \rnorm^2 
    \bkeq
    + \frac{2\beta F_{\max}^3}{\gamma}\lnorm P_{k+1} - P^*(W_k)\rnorm^2_{M(W_k)^\dag}.
    \nonumber
\end{align}
The third term in the \ac{RHS} of \eqref{inequality:TT-convergence-scvx-S1} is bounded by the Lipschitz continuity of analog update (c.f. Lemma \ref{lemma:lip-analog-update})
\begin{align}
    \label{inequality:TT-convergence-scvx-S1-T3}
    &\ \|W_{k+1}-W_k\|^2
    = \beta^2 \|P_{k+1}\odot F(W_k) - |P_{k+1}|\odot G(W_k)\|^2
    \\
    \le&\ 2\beta^2 \|P^*(W_k)\odot F(W_k) - |P^*(W_k)|\odot G(W_k)\|^2
    \nonumber \\
    &\ + 2\beta^2 \|P_{k+1}\odot F(W_k) - |P_{k+1}|\odot G(W_k)-(P^*(W_k)\odot F(W_k) - |P^*(W_k)|\odot G(W_k))\|^2
    \nonumber\\
    \le&\ 2\beta^2 \|P^*(W_k)\odot F(W_k) - |P^*(W_k)|\odot G(W_k)\|^2
    + 2\beta^2 \|P_{k+1} - P^*(W_k)\|^2.
    \nonumber
\end{align}

Plugging \eqref{inequality:TT-convergence-scvx-S1-T2S2} and \eqref{inequality:TT-convergence-scvx-S1-T3} into \eqref{inequality:TT-convergence-scvx-S1} yields
\begin{align}
    \|W_{k+1}-W^*\|^2 \le&\ \|W_k-W^*\|^2 - \frac{\beta}{2\gamma F_{\max}} \|W_k-W^*\|^2_{M(W_k)}
    \\
	&\ - \lp\frac{\beta\gamma}{F_{\max}} - 2\beta^2\rp\lnorm P^*(W_k) \odot F(W_k) - |P^*(W_k)|\odot G(W_k) \rnorm^2 
    \bkeq
    + \frac{2\beta F_{\max}^3}{\gamma}\lnorm P_{k+1} - P^*(W_k)\rnorm^2_{M(W_k)^\dag}
    + 2\beta^2 \|P_{k+1} - P^*(W_k)\|^2.
    \nonumber
\end{align}
Notice the learning rate $\beta$ is chosen as $\beta \le \frac{\gamma}{2F_{\max}}$, we have
\begin{align}
\saveeq{inequality-saved:TT-W-descent}{
    \|W_{k+1}-W^*\|^2 \le&\ \|W_k-W^*\|^2 - \frac{\beta}{2\gamma F_{\max}} \|W_k-W^*\|^2_{M(W_k)}
    \\
	&\ - \frac{\beta\gamma}{2F_{\max}} \lnorm P^*(W_k) \odot F(W_k) - |P^*(W_k)|\odot G(W_k) \rnorm^2 
    \bkeq
    + \frac{2\beta F_{\max}^3}{\gamma}\lnorm P_{k+1} - P^*(W_k)\rnorm^2_{M(W_k)^\dag}
    + 2\beta^2 \|P_{k+1} - P^*(W_k)\|^2
    \nonumber
}
\end{align}
which completes the proof. 
\end{proof}

\subsection{Proof of Lemma \ref{lemma:TT-P-descent}: Descent of sequence $P_k$}
\label{section:proof-lemma:TT-P-descent}
\LemmaTTPDescentScvx*
\begin{proof}[Proof of Lemma \ref{lemma:TT-P-descent}]
The proof begins from manipulating the norm $\|P_{k+1} - P^*(W_k)\|^2$
\begin{align}
    \label{inequality:TT-convergence-P-S1}
    \|P_{k+1}-P^*(W_k)\|^2 =&\ \|P_k-P^*(W_k)\|^2 + 2\la P_k-P^*(W_k), P_{k+1}-P_k\ra + \|P_{k+1}-P_k\|^2.
\end{align}
To bound the second term, we need the following equality. 
\begin{align}
    \label{inequality:TT-convergence-P-S1-T2}
    &\ 2\mbE_{\xi_k}[\la P_k-P^*(W_k), P_{k+1}-P_k\ra]
    \\
    =&\ -2\alpha \mbE_{\xi_k}[\la P_k-P^*(W_k), \nabla f(\bar W_k; \xi_k) \odot F(P_k) - |\nabla f(\bar W_k; \xi_k) |\odot G(P_k)\ra]
    \nonumber\\
    =&\ -2\alpha \mbE_{\xi_k}[\la P_k-P^*(W_k), \nabla f(\bar W_k; \xi_k) \odot F(P_k) \ra]
    \bkeq
    + 2\alpha \mbE_{\xi_k}[\la P_k-P^*(W_k), |\nabla f(\bar W_k; \xi_k) |\odot G(P_k)\ra]
    \nonumber\\
    =&\ -2\alpha \la P_k-P^*(W_k), \nabla f(\bar W_k) \odot F(P_k) \ra
    + 2\alpha \la P_k-P^*(W_k), |\nabla f(\bar W_k) |\odot G(P_k)\ra
    \bkeq
    + 2\alpha \mbE_{\xi_k}[\la P_k-P^*(W_k), (|\nabla f(\bar W_k)|-|\nabla f(\bar W_k; \xi_k)|)\odot G(P_k)\ra]
    \nonumber\\
    =&\ -2\alpha \underbrace{\la P_k-P^*(W_k), \nabla f(\bar W_k) \odot F(P_k) - |\nabla f(\bar W_k) |\odot G(P_k)\ra}_{(T1)}
    \bkeq
    + 2\alpha \underbrace{\mbE_{\xi_k}[\la P_k-P^*(W_k), (|\nabla f(\bar W_k)|-|\nabla f(\bar W_k; \xi_k)|)\odot G(P_k)\ra]}_{(T2)}
    \nonumber
\end{align}

\textbf{Upper bound of the first term $(T1)$.}
With Lemma \ref{lemma:element-wise-product-error}, the second term in the \ac{RHS} of \eqref{inequality:TT-convergence-P-S1} can be bounded by
\begin{align}
    \label{inequality:TT-convergence-P-S1-T2-T1-S1}
    &\ -2\alpha \la P_k-P^*(W_k), \nabla f(\bar W_k) \odot F(P_k) - |\nabla f(\bar W_k) |\odot G(P_k)\ra
    \\
    =&\ -2\alpha \la P_k-P^*(W_k), \nabla f(\bar W_k) \odot q_s(P_k)\ra
    \nonumber\\
    \le&\ -2\alpha C_{k, +} \la P_k-P^*(W_k), \nabla f(\bar W_k) \ra
     +2\alpha C_{k, -} \la |P_k-P^*(W_k)|, |\nabla f(\bar W_k)| \ra
    \nonumber
\end{align}
where $C_{k,+}$ and $C_{k,-}$ are defined by
\begin{align}
    C_{k, +} :=& \frac{1}{2}\lp\max_{d\in[D]}\{q_s([P_k]_d)\} + \min_{d\in[D]}\{q_s([P_k]_d)\}\rp, \\
    C_{k, -} :=& \frac{1}{2}\lp\max_{d\in[D]}\{q_s([P_k]_d)\} - \min_{d\in[D]}\{q_s([P_k]_d)\}\rp.
\end{align}

In the inequality above, the first term can be bounded by the strong convexity of $f$. Let $\varphi(P) := f(W+\gamma P)$ which is $\gamma^2 L$-smooth and 
{$\gamma^2\mu$-strongly convex}. 
It can be verified that $\varphi(P)$ has gradient $\nabla \varphi(P_k) = \nabla_{P_k} f(W_k+\gamma P_k) = \gamma \nabla f(\bar W_k)$ and optimal point $P^*(W)$.
Leveraging Theorem 2.1.9 in \cite{Nesterov2003IntroductoryLO}, we have
\begin{align}
    &\ \la \nabla f(\bar W_k), P_k-P^*(W_k)\ra 
    = \frac{1}{\gamma}\la \nabla \varphi(P_k), P_k-P^*(W_k)\ra
    \\
    \ge&\ \frac{1}{\gamma}\lp\frac{\gamma^2\mu \cdot \gamma^2 L}{\gamma^2\mu+\gamma^2 L}\|P_k-P^*(W_k)\|^2 + \frac{1}{\gamma^2\mu+\gamma^2 L}\|\nabla \varphi(P_k)\|^2\rp
    \nonumber\\
    =&\ \frac{\gamma\mu L}{\mu+L}\|P_k-P^*(W_k)\|^2 + \frac{1}{\gamma(\mu+L)}\|\nabla f(\bar W_k)\|^2.
    \nonumber
\end{align}
The second term in the \ac{RHS} of \eqref{inequality:TT-convergence-P-S1-T2-T1-S1} can be bounded by Young's inequality $2\la x, y\ra\le u\|x\|^2 + \frac{1}{u}\|y\|^2$ with any $u>0$ and $x, y\in\reals^D$
\begin{align}
    \label{inequality:TT-convergence-P-S1-T2-T1-S1-T2}
    & 2\alpha C_{k,-} \la |P_k-P^*(W_k)|, |\nabla f(\bar W_k)| \ra
    \\
    \le&\ \frac{\alpha C_{k,-}^2\gamma(\mu+L)}{C_{k,+}} \|P_k-P^*(W_k)\|^2 
    + \frac{\alpha C_{k,+}}{\gamma(\mu+L)} \|\nabla f(\bar W_k)\|^2
    \nonumber
\end{align}
where $u$ is chosen to align the coefficient in front of $\|\nabla f(\bar W_k)\|^2$.
Therefore, $(T1)$ in \eqref{inequality:TT-convergence-P-S1-T2-T1-S1} becomes
\begin{align}
    \label{inequality:TT-convergence-P-S1-T2-T1-S2}
    &\ -2\alpha \la P_k-P^*(W_k), \nabla f(\bar W_k) \odot F(P_k) - |\nabla f(\bar W_k) |\odot G(P_k)\ra
    \\
    \le&\ - \lp\frac{2\alpha \gamma\mu L C_{k, +}}{\mu+L}-\frac{\alpha C_{k,-}^2\gamma (\mu+L)}{C_{k,+}}\rp\|P_k-P^*(W_k)\|^2 - \frac{\alpha C_{k, +}}{\gamma(\mu+L)}\|\nabla f(\bar W_k)\|^2.
    \nonumber
\end{align}

\textbf{Upper bound of the second term $(T2)$.} Leveraging the Young's inequality $2\la x, y\ra\le u\|x\|^2 + \frac{1}{u}\|y\|^2$ with any $u>0$ and $x, y\in\reals^D$, we have
\begin{align}
    & 2\alpha \mbE_{\xi_k}[\la P_k-P^*(W_k), (|\nabla f(\bar W_k)|-|\nabla f(\bar W_k; \xi_k)|)\odot G(P_k)\ra]
    \\
    =&\ 2\alpha \mbE_{\xi_k}\lB\la (P_k-P^*(W_k))\odot \sqrt{F(P_k)}, (|\nabla f(\bar W_k)|-|\nabla f(\bar W_k; \xi_k)|)\odot \frac{G(P_k)}{\sqrt{F(P_k)}}\ra\rB
    \nonumber\\
    \lemark{a}&\ \frac{\alpha \gamma \mu L C_{k, +}}{(\mu+L)F_{\max}} \|(P_k-P^*(W_k))\odot \sqrt{F(P_k)}\|^2 
    \bkeq
    + \frac{\alpha (\mu+L)F_{\max}}{\gamma \mu L C_{k, +}} \mbE_{\xi_k}\lB\lnorm(|\nabla f(\bar W_k)|-|\nabla f(\bar W_k; \xi_k)|)\odot \frac{G(P_k)}{\sqrt{F(P_k)}}\rnorm^2\rB
    \nonumber\\
    \lemark{b}&\ \frac{\alpha \gamma \mu L C_{k, +}}{(\mu+L)F_{\max}} \|(P_k-P^*(W_k))\odot \sqrt{F(P_k)}\|^2 
    \bkeq
    + \frac{\alpha(\mu+L)F_{\max}}{\gamma \mu L C_{k, +}} \mbE_{\xi_k}\lB\lnorm(|\nabla f(\bar W_k) - \nabla f(\bar W_k; \xi_k)|)\odot \frac{G(P_k)}{\sqrt{F(P_k)}}\rnorm^2\rB
    \nonumber\\
    \eqmark{c}&\ \frac{\alpha \gamma \mu L C_{k, +}}{(\mu+L)F_{\max}} \|(P_k-P^*(W_k))\odot \sqrt{F(P_k)}\|^2 
    + \frac{\alpha(\mu+L)F_{\max}\sigma^2}{\gamma \mu L C_{k, +}}\lnorm\frac{G(P_k)}{\sqrt{F(P_k)}}\rnorm_\infty^2
    \nonumber\\
    \lemark{d}&\ \frac{\alpha \gamma \mu L C_{k, +}}{\mu+L} \|P_k-P^*(W_k)\|^2 
    + \frac{\alpha(\mu+L)F_{\max}\sigma^2}{\gamma\mu L C_{k, +}}\lnorm\frac{G(P_k)}{\sqrt{F(P_k)}}\rnorm_\infty^2
    \nonumber
\end{align}
where $(a)$ choose $u>0$ to align the coefficient in front of $\|P_k-P^*(W_k)\|^2$ in the \ac{RHS} of \eqref{inequality:TT-convergence-P-S1-T2-T1-S2}, $(b)$ applies $||x|-|y|| \le |x-y|$ for any $x, y\in\reals$, $(c)$ uses the bounded variance assumption (c.f. Assumption \ref{assumption:noise}), and $(d)$ leverages the fact that $F(P_k)$ is bounded by $F_{\max}$ element-wise.

Combining the upper bound of $(T1)$ and $(T2)$, we bound \eqref{inequality:TT-convergence-P-S1-T2} by
\begin{align}
    \label{inequality:TT-convergence-P-S1-T2-S2}
    &\ 2\mbE_{\xi_k}[\la P_k-P^*(W_k), P_{k+1}-P_k\ra]
    \\
    \le&\ - \lp\frac{\alpha \gamma \mu L C_{k, +}}{\mu+L}-\frac{\alpha C_{k,-}^2 \gamma (\mu+L)}{C_{k,+}}\rp\|P_k-P^*(W_k)\|^2 
    \bkeq
    -  \frac{\alpha C_{k, +}}{\gamma(\mu+L)}\|\nabla f(\bar W_k)\|^2
    + \frac{\alpha(\mu+L)F_{\max}\sigma^2}{\gamma \mu L C_{k, +}}\lnorm\frac{G(P_k)}{\sqrt{F(P_k)}}\rnorm_\infty^2
    \nonumber\\
    \le&\ - \frac{\alpha \gamma \mu L C_{k, +}}{2(\mu+L)}\|P_k-P^*(W_k)\|^2 
    - \frac{\alpha C_{k, +}}{\gamma(\mu+L)}\|\nabla f(\bar W_k)\|^2
    + \frac{\alpha(\mu+L)F_{\max}\sigma^2}{\gamma \mu L C_{k, +}}\lnorm\frac{G(P_k)}{\sqrt{F(P_k)}}\rnorm_\infty^2
    \nonumber
\end{align}
where the last inequality holds when $\gamma$ is sufficiently large, $P_k$ as well as $C_{k,-}$ are sufficiently closed to 0, and the following inequality holds
\begin{align}
    (\mu+L)\frac{C_{k,-}^2}{C_{k,+}^2}
    \le \frac{\mu L}{2(\mu+L)}.
\end{align}
Furthermore, the last term in the \ac{RHS} of \eqref{inequality:TT-convergence-P-S1} can be bounded by the Lipschitz continuity of analog update (c.f. Lemma \ref{lemma:lip-analog-update}) and the bounded variance assumption (c.f. Assumption \ref{assumption:noise})
\begin{align} 
    \label{inequality:TT-convergence-P-S1-T3}
    \mbE_{\xi_k}[\|P_{k+1}-P_k\|^2]
    =&\ \mbE_{\xi_k}[\|\alpha\nabla f(\bar W_k; \xi_k)\odot F(P_k) - \alpha|\nabla f(\bar W_k; \xi_k)|\odot G(P_k)\|^2]
    \\
    \le&\ \alpha^2 F_{\max}^2\mbE_{\xi_k}[\|\nabla f(\bar W_k; \xi_k)\|^2]
    \nonumber\\
    =&\ \alpha^2 F_{\max}^2\|\nabla f(\bar W_k)\|^2 + \alpha^2 F_{\max}^2 \sigma^2
    \nonumber\\
    \le&\ \frac{\alpha C_{k, +}}{\gamma(\mu+L)}\|\nabla f(\bar W_k)\|^2 + \alpha^2 F_{\max}^2 \sigma^2
    \nonumber
\end{align}
where the last inequality holds if $\alpha$ is sufficiently small.

Plugging inequality \eqref{inequality:TT-convergence-P-S1-T2-S2} and \eqref{inequality:TT-convergence-P-S1-T3} above into \eqref{inequality:TT-convergence-P-S1} yields
\begin{align}
    &\ \mbE_{\xi_k}[\|P_{k+1} - P^*(W_k)\|^2] \\
    \le&\ 
    \lp 1 - \frac{\alpha \gamma \mu L C_{k, +}}{2(\mu+L)}\rp \|P_k-P^*(W_k)\|^2 
    + \frac{\alpha(\mu+L)F_{\max}\sigma^2}{\gamma \mu L C_{k, +}}\lnorm\frac{G(P_k)}{\sqrt{F(P_k)}}\rnorm_\infty^2
    + \alpha^2 F_{\max}^2 \sigma^2.
    \nonumber
\end{align}
By definition of $C_{k,+}$, when the saturation degree of $P_k$ is properly limited, we have $C_{k,+} \ge \frac{1}{2}$. Therefore, we have
\begin{align}
\saveeq{inequality-saved:TT-P-descent}{
    &\ \mbE_{\xi_k}[\|P_{k+1} - P^*(W_k)\|^2] \\
    \le&\ \lp
        1- \frac{\alpha\gamma \mu L}{4(\mu+L)}
    \rp \|P_k-P^*(W_k)\|^2 
    + \frac{2\alpha(\mu+L)F_{\max}\sigma^2}{\gamma \mu L}\lnorm\frac{G(P_k)}{\sqrt{F(P_k)}}\rnorm_\infty^2
    + \alpha^2 F_{\max}^2 \sigma^2
    \nonumber
}
\end{align}
which completes the proof.
\end{proof}

\subsection{Proof of Corollary \ref{theorem:TT-convergence-scvx-final}: Exact convergence of Residual Learning}
\label{section:proof-TT-convergence-scvx-final}
\ThmTTConvergenceScvxFinal*
\begin{proof}[Proof of Corollary \ref{theorem:TT-convergence-scvx-final}]
    From Theorem \ref{theorem:TT-convergence-scvx}, we have
    \begin{align}
        \|\nabla f(\bar W_k)\|^2 \le O(E^{\texttt{RL}}_K) \le
        \restateeq{inequality-RHS:TT-convergence-upperbound}.
    \end{align}
    Under the zero-shift assumption (Assumption \ref{assumption:zero-sp}) and the Lipschitz continuity of the response functions, it holds directly that 
    \begin{align}
        \lnorm\frac{G(P_k)}{\sqrt{F(P_k)}} \rnorm^2_\infty
        \le \lnorm\frac{G(P_k)}{\sqrt{F(P_k)}} \rnorm^2
        = \lnorm\frac{G(P_k)}{\sqrt{F(P_k)}}-\frac{G(0)}{\sqrt{F(0)}}\rnorm^2
        \le L_S^2 \|P_k\|^2
    \end{align}
    where $L_S\ge 0$ is a constant. Using $\|U+V\|^2\le 2\|U\|^2+2\|V\|^2$ for any $U, V\in\reals^D$, we have
    \begin{align}
        \|P_k\|^2 \le 2\|P_k-P^*(W_k)\|^2 + 2\|P^*(W_k)\|^2
        = 2\|P_k-P^*(W_k)\|^2 + \frac{2}{\gamma^2}\lnorm W_k - W^*\rnorm^2
    \end{align}
    where the last inequality comes from the definition of $P^*(W_k)$, as well as the definition of $P^*(W)$. 
    Recall that convergence metric $E^{\texttt{RL}}_K$ defined in \eqref{definition:E-TT} is in the order of 
    \begin{align}
        E^{\texttt{RL}}_K \ge&\ \Omega\lp \gamma^3\|W_k-W^*\|^2_{M(W_k)}
        + \gamma^2 \lnorm P_k - P^*(W_k)\rnorm^2\rp \\
        \ge&\ \Omega\lp \min\{M(W_k)\} \gamma^3\|W_k-W^*\|^2
        + \gamma^2 \lnorm P_k - P^*(W_k)\rnorm^2\rp
        \nonumber \\
        \ge&\ \Omega\lp \frac{1}{\gamma^2}\|W_k-W^*\|^2
        + \gamma^2 \lnorm P_k - P^*(W_k)\rnorm^2\rp.
        \nonumber
    \end{align}
    Therefore, we have
    \begin{align}
        S^{\texttt{RL}}_K = \frac{1}{K}\sum_{k=0}^{K}\lnorm\frac{G(P_k)}{\sqrt{F(P_k)}} \rnorm^2_\infty
        \le \frac{1}{K}\sum_{k=0}^{K}\lp 2\|P_k-P^*(W_k)\|^2 + \frac{2}{\gamma^2}\lnorm W_k - W^*\rnorm^2\rp
        \le O(E^{\texttt{RL}}_K)
    \end{align}
    where the last inequality holds if $\gamma$ is sufficiently large. Considering that, $E^{\texttt{RL}}_K - S^{\texttt{RL}}_K \ge \Omega(E^{\texttt{RL}}_K) \ge 0$ and the conclusion is reached directly from Theorem \ref{theorem:TT-convergence-scvx}.
\end{proof}

\section{Proof of Theorem \ref{theorem:AGD-converge-non-cvx}: Convergence of Analog GD}
\label{section:proof-AGD-converge-non-cvx}
In Section \ref{section:convergence-ASGD}, we showed that \AnalogSGD~converges to a critical point inexactly with asymptotic error proportional to the noise variance $\sigma^2$. Intuitively, without the effect of noise, \texttt{Analog\;GD} converges to the critical point. Define the convergence metric by
\begin{align}
    E_K^{\texttt{AGD}} :=
    \frac{1}{K}\sum_{k=0}^{K-1} \Big(\lnorm \nabla f(W_k) \odot F(W_k) - |\nabla f(W_k)| \odot G(W_k)\rnorm^2 + \lnorm \nabla f(W_k)\rnorm^2_{M(W_k)}\Big).
\end{align}
The convergence is guaranteed by the following theorem.

\begin{theorem}[Convergence of \texttt{Analog\;GD}]
    \label{theorem:AGD-converge-non-cvx}
    Under Assumption \ref{assumption:Lip}--\ref{assumption:noise}, 
    it holds that
    \begin{align}
        E_K^{\texttt{AGD}} \le \frac{8L(f(W_0) - f^*)F_{\max}^2}{K}.
    \end{align}
    Further, if $M^{\texttt{ASGD}}_{\min} := \min_{k\in[K]}\min \{Q_+(W_k)\odot Q_-(W_k)\} > 0$, it holds that
    \begin{align}
        \frac{1}{K}\sum_{k=0}^{K-1}\|\nabla f(W_k)\|^2
        \le
        \frac{2L(f(W_0) - f^*)F_{\max}^2}{K M^{\texttt{ASGD}}_{\min}}.
    \end{align}
\end{theorem}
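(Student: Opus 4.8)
The plan is to treat \AnalogGD~as the noiseless specialization ($\sigma=0$) of \AnalogSGD, so that the descent apparatus built for Theorem~\ref{theorem:ASGD-convergence-noncvx} carries over with every variance contribution deleted. Concretely, \AnalogGD~is \eqref{recursion:analog-SGD} with the stochastic gradient replaced by the full gradient, so the increment is $\Delta_k := W_{k+1}-W_k = -\alpha\big(\nabla f(W_k)\odot F(W_k)+|\nabla f(W_k)|\odot G(W_k)\big)$ and, crucially, $\nabla f(W_k;\xi_k)-\nabla f(W_k)=0$ at every step. The two quantities appearing in $E_K^{\texttt{AGD}}$ are then exactly $\alpha^{-2}\lnorm\Delta_k\rnorm^2$ (the squared update, i.e.\ $\lnorm\nabla f(W_k)\odot F(W_k)+|\nabla f(W_k)|\odot G(W_k)\rnorm^2$) and $\lnorm\nabla f(W_k)\rnorm^2_{M(W_k)}$, which is precisely what the descent lemma will produce.

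First I would invoke $L$-smoothness (Assumption~\ref{assumption:Lip}) to write $f(W_{k+1})\le f(W_k)+\la\nabla f(W_k),\Delta_k\ra+\tfrac{L}{2}\lnorm\Delta_k\rnorm^2$. For the inner-product term I would reuse the $\sqrt{F}$-weighted completion-of-squares identity that produced \eqref{inequality:ASGD-convergence-1-2-final}; setting $\sigma=0$ there collapses all noise terms and yields $\la\nabla f(W_k),\Delta_k\ra\le -\tfrac{\alpha}{2F_{\max}}\lnorm\nabla f(W_k)\rnorm^2_{M(W_k)}-\tfrac{1}{2\alpha F_{\max}}\lnorm\Delta_k\rnorm^2$, where the saturation vector $M(W_k)=Q_+(W_k)\odot Q_-(W_k)$ enters through $F(\cdotc)^{\odot2}-G(\cdotc)^{\odot2}=Q_+(\cdotc)\odot Q_-(\cdotc)$ together with the element-wise bound $F(\cdotc)\le F_{\max}$ of Lemma~\ref{lemma:bounded-function}. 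The one place that needs care is that $|\nabla f(W_k)|\odot G(W_k)$ is not sign-definite coordinate-wise; this is handled exactly as in the $\sigma=0$ case of \eqref{inequality:ASGD-converge-linear-3}, using $F\pm G=Q_\mp$ and a split on the sign of each gradient coordinate (equivalently, Lemma~\ref{lemma:element-wise-product-error}).

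Substituting gives $f(W_{k+1})\le f(W_k)-\tfrac{\alpha}{2F_{\max}}\lnorm\nabla f(W_k)\rnorm^2_{M(W_k)}-\big(\tfrac{1}{2\alpha F_{\max}}-\tfrac{L}{2}\big)\lnorm\Delta_k\rnorm^2$. Here I would exploit the absence of noise: unlike the stochastic proof, there is no need for the lossy inequality $\lnorm U+V\rnorm^2\le 2\lnorm U\rnorm^2+2\lnorm V\rnorm^2$ that was applied to $\tfrac{L}{2}\lnorm\Delta_k\rnorm^2$, so the second-order term can be absorbed directly and the constants tighten. Choosing $\alpha=\tfrac{1}{2LF_{\max}}$ makes $\tfrac{1}{2\alpha F_{\max}}-\tfrac{L}{2}=\tfrac{L}{2}$, so with $\lnorm\Delta_k\rnorm^2=\alpha^2\lnorm\nabla f(W_k)\odot F(W_k)+|\nabla f(W_k)|\odot G(W_k)\rnorm^2$ both terms combine into $\tfrac{1}{8LF_{\max}^2}\big(2\lnorm\nabla f(W_k)\rnorm^2_{M(W_k)}+\lnorm\nabla f(W_k)\odot F(W_k)+|\nabla f(W_k)|\odot G(W_k)\rnorm^2\big)\le f(W_k)-f(W_{k+1})$. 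Telescoping over $k=0,\dots,K-1$, using $f(W_K)\ge f^*$, and discarding the extra copy of $\lnorm\nabla f(W_k)\rnorm^2_{M(W_k)}$ then yields $E_K^{\texttt{AGD}}\le \tfrac{8L(f(W_0)-f^*)F_{\max}^2}{K}$.

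For the second statement I would instead retain only the $\lnorm\nabla f(W_k)\rnorm^2_{M(W_k)}$ term (dropping the nonnegative $\lnorm\Delta_k\rnorm^2$ contribution), which permits the slightly larger step $\alpha=\tfrac{1}{LF_{\max}}$ and gives $f(W_{k+1})\le f(W_k)-\tfrac{1}{2LF_{\max}^2}\lnorm\nabla f(W_k)\rnorm^2_{M(W_k)}$; telescoping and then using $\lnorm\nabla f(W_k)\rnorm^2_{M(W_k)}\ge M^{\texttt{ASGD}}_{\min}\lnorm\nabla f(W_k)\rnorm^2$ (valid precisely when $M^{\texttt{ASGD}}_{\min}>0$) converts the weighted bound into $\tfrac{1}{K}\sum_k\lnorm\nabla f(W_k)\rnorm^2\le \tfrac{2L(f(W_0)-f^*)F_{\max}^2}{K\,M^{\texttt{ASGD}}_{\min}}$. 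The hard part is bookkeeping rather than any new idea: correctly propagating the non-sign-definite $|\nabla f|\odot G$ contribution through the weighted-norm completion of squares so that $M(W_k)$ emerges with the sharp $F_{\max}$-dependence, and matching the step size to the targeted constants ($8$ and $2$) without the variance-induced slack of the stochastic analysis.
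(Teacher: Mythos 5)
Your proposal is correct and follows essentially the same route as the paper's proof: the $L$-smoothness bound, the $\sqrt{F(W_k)}$-weighted completion of squares producing the $-\frac{\alpha}{2F_{\max}}\|\nabla f(W_k)\|^2_{M(W_k)}$ term via $F^{\odot 2}-G^{\odot 2}=Q_+\odot Q_-$, absorption of the second-order term into the negative quadratic, and the step sizes $\alpha=\tfrac{1}{2LF_{\max}}$ and $\alpha=\tfrac{1}{LF_{\max}}$ for the two claims. The only cosmetic difference is that you cite Lemma~\ref{lemma:element-wise-product-error} for the sign-splitting, whereas the paper handles it directly through the coordinate-wise identity in \eqref{inequality:AGD-converge-linear-3}; the substance is identical.
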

\begin{proof}[Proof of Theorem \ref{theorem:AGD-converge-non-cvx}]
The $L$-smooth assumption (Assumption \ref{assumption:Lip}) implies that
\begin{align}
    \label{inequality:AGD-converge-linear-1}
    &\ f(W_{k+1}) \le f(W_k)+\la \nabla f(W_k), W_{k+1}-W_k\ra + \frac{L}{2}\|W_{k+1}-W_k\|^2 \\
    =&\ f(W_k) - \frac{\alpha}{2} \|\nabla f(W_k) \odot \sqrt{F(W_k)}\|^2 
    - \frac{1}{F_{\max}}\lp \frac{1}{2\alpha}-\frac{L F_{\max}}{2}\rp 
    \lnorm W_{k+1}-W_k\rnorm^2 
    \nonumber \\
    &\ + \frac{1}{2\alpha}\lnorm\frac{W_{k+1}-W_k}{\sqrt{F(W_k)}}+\alpha \nabla f(W_k) \odot \sqrt{F(W_k)}\rnorm^2
    \nonumber
\end{align}
where the second inequality comes from 
\begin{align}
    &\ \la \nabla f(W_k), W_{k+1}-W_k\ra 
    = \alpha\la \nabla f(W_k)\odot \sqrt{F(W_k)}, \frac{W_{k+1}-W_k}{\alpha \sqrt{F(W_k)}}\ra
    \\
    =&\ - \frac{\alpha}{2} \|\nabla f(W_k) \odot \sqrt{F(W_k)}\|^2 
    - \frac{1}{2\alpha}\lnorm \frac{W_{k+1}-W_k}{\sqrt{F(W_k)}}\rnorm^2
     \nonumber \\
    &\ + \frac{1}{2\alpha}\lnorm\frac{W_{k+1}-W_k}{\sqrt{F(W_k)}}+\alpha \nabla f(W_k) \odot \sqrt{F(W_k)}\rnorm^2
    \nonumber
\end{align}
as well as the inequality
\begin{align}
    \lnorm\frac{W_{k+1}-W_k}{\sqrt{F(W_k)}}\rnorm^2
    \ge \frac{1}{F_{\max}}\|W_{k+1}-W_k\|^2.
\end{align}
The third term in the RHS of \eqref{inequality:AGD-converge-linear-1} can be bounded by
\begin{align}
    \label{inequality:AGD-converge-linear-2}
    \frac{1}{2\alpha}\lnorm\frac{W_{k+1}-W_k}{\sqrt{F(W_k)}}+\alpha \nabla f(W_k) \odot \sqrt{F(W_k)}\rnorm^2
    =&\ \frac{\alpha}{2}\lnorm|\nabla f(W_k)| \odot \frac{G(W_k)}{\sqrt{F(W_k)}}\rnorm^2.
\end{align}
Define the saturation vector $M(W_k)\in\reals^D$ by
\begin{align}
    M(W_k) :=&\ {F(W_k)^{\odot 2}-G(W_k)^{\odot 2}}
    = {(F(W_k)+G(W_k))\odot(F(W_k)-G(W_k))} \\
    =&\ {q_+(W_k)\odot q_-(W_k)}.
    \nonumber
\end{align}
Notice the following inequality is valid
\begin{align}
    &\ \label{inequality:AGD-converge-linear-3}
    - \frac{\alpha}{2} \|\nabla f(W_k) \odot \sqrt{F(W_k)}\|^2 
    + \frac{\alpha}{2} \lnorm|\nabla f(W_k)| \odot \frac{G(W_k)}{\sqrt{F(W_k)}}\rnorm^2 \\
    =&\ -\frac{\alpha}{2}\sum_{d\in[D]} \lp [\nabla f(W_k)]_d^2 \lp [F(W_k)]_d-\frac{[G(W_k)]^2_d}{[F(W_k)]_d}\rp\rp
    \nonumber \\
    =&\ -\frac{\alpha}{2}\sum_{d\in[D]} \lp [\nabla f(W_k)]_d^2 \lp \frac{[F(W_k)]^2_d-[G(W_k)]^2_d}{[F(W_k)]_d}\rp\rp
    \nonumber \\
    \le&\ -\frac{\alpha}{2 F_{\max}}\sum_{d\in[D]} \lp [\nabla f(W_k)]_d^2 \lp [F(W_k)]_d^2-G(W_k)]^2_d\rp\rp
    \nonumber\\
    =&\ -\frac{\alpha}{2 F_{\max}}\|\nabla f(W_k)\|^2_{S_k}
    \le 0.
    \nonumber
\end{align}
Substituting \eqref{inequality:AGD-converge-linear-2} and \eqref{inequality:AGD-converge-linear-3} back into \eqref{inequality:AGD-converge-linear-1} yields
\begin{align}
    \frac{1}{F_{\max}}\lp \frac{1}{2\alpha}-\frac{L F_{\max}}{2}\rp\|W_{k+1}-W_k\|^2  
    \le f(W_k) - f(W_{k+1}).
\end{align}

Noticing that $\|W_{k+1}-W_k\|^2=\alpha^2 \|\nabla f(W_k) \odot F(W_k) - |\nabla f(W_k)| \odot G(W_k)\|^2$ and averaging for $k$ from $0$ to $K-1$, we have
\begin{align}
    E_K^{\texttt{AGD}} =&\ 
    \frac{1}{K}\sum_{k=0}^{K-1} \Big(\lnorm \nabla f(W_k) \odot F(W_k) - |\nabla f(W_k)| \odot G(W_k)\rnorm^2 + \lnorm \nabla f(W_k)\rnorm^2_{M(W_k)}\Big)
    \\
    \le&\ \frac{2(f(W_0) - f(W_{K+1}))F_{\max}}{\alpha(1 -\alpha LF_{\max})K}
    \le
    \frac{8L(f(W_0) - f^*)F_{\max}^2}{K}
    \nonumber
\end{align}
where the last inequality choose $\alpha = \frac{1}{2LF_{\max}}$.
Further, given the response functions are bounded, \eqref{inequality:AGD-converge-linear-1}--\eqref{inequality:AGD-converge-linear-3} implies that there is a lower bound $M^{\texttt{AGD}}_{\min}$ such that
\begin{align}
    \label{inequality:AGD-converge-3}
    \frac{\alpha M^{\texttt{AGD}}_{\min}}{2}\|\nabla f(W_k)\|^2
    \le \frac{\alpha}{2}\|\nabla f(W_k)\|^2_{M(W_k)}
    \le f(W_k) - f(W_{k+1}).
\end{align}
Averaging \eqref{inequality:AGD-converge-3} for $k$ from $0$ to $K$ deduce that
\begin{align}
    \frac{1}{K}\sum_{k=0}^{K-1}\|\nabla f(W_k)\|^2
    \le
    \frac{2(f(W_0) - f(W_{K+1}))F_{\max}}{\alpha K M^{\texttt{AGD}}_{\min}}
    \le
    \frac{2L(f(W_0) - f^*)F_{\max}^2}{K M^{\texttt{AGD}}_{\min}}
\end{align}
where the second inequality holds because the learning rate is selected as $\alpha=\frac{1}{LF_{\max}}$.
\end{proof}

\section{Simulation Details and Additional Results}
\label{section:experiment-setting}
\vspace{-0.25em}
This section provides details about the experiments in Section \ref{section:experiments}.
All simulation is performed under the \pytorch~framework \url{https://github.com/pytorch/pytorch}. 
The analog training algorithms, including \AnalogSGD~and \TT, are provided by the open-source simulation toolkit \ac{AIHWKIT}~\citep{Rasch2021AFA}, which has MIT license; see \url{github.com/IBM/aihwkit}. 

\textbf{Optimizer. } The baseline {\DigitalSGD} optimizer is implemented by \code{FloatingPointRPUConfig} in \ac{AIHWKIT}, which is equivalent to the SGD implemented in \pytorch. 
The \AnalogSGD~is implemented by selecting \code{SingleRPUConfig} as configuration, and \TT~optimizers are implemented by \code{UnitCellRPUConfig} with \code{TransferCompound} devices in \ac{AIHWKIT}.

As suggested by \cite{gokmen2020}, in the implementation of {\ResidualLearning}, only a few columns of $P_k$ are transferred per time to $W_k$ in the recursion \eqref{recursion:HD-W} to balance the communication and computation. In our simulations, we transfer 1 column every time.

\textbf{RPU Configuration. } \ac{AIHWKIT} offers fine-grained simulations of the hardware imperfections, such as the IO noise, analog-digital conversion, and so on. They are specified by the resistive processing unit (RPU) configurations. Without other specifications, we use the configuration list in \Cref{table:imperfection}.
The experimental setup uses a specific I/O configuration, as detailed in the relevant table. The system's input and output signal bounds are explicitly defined. Regarding signal quality, the setup employs no $\text{input noise}$ but introduces additive Gaussian noise to the output signal, the statistical properties of which are precisely specified. Finally, the resolution of the digital conversion process is determined by distinct bit values for both the input (DAC) and the output (ADC). 

In addition, noise, bound, and update management techniques are used \cite{gokmen2017cnn}.
A learnable scaling factor is applied after each analog layer and updated using SGD. For each gradient update step, if more than $\text{BL}=32$ pulses are desired, only $\text{BL}$ pulses are fired.
\begin{table}[h]
    \centering
    \caption{Hardware imperfection setting}
    \label{table:imperfection}
    \begin{tabular}{l l}
        \toprule
        \textbf{configuration} & \textbf{value} \\
        \midrule
        input bound & $1.0$ \\
        input noise & None \\
        input resolution (DAC) & $7$ bits \\
        output bound & $12.0$ \\
        output noise & additive Gaussian noise $\mathcal{N}(0, 0.06^2)$ \\
        output resolution (ADC) & $9$ bits \\
        \midrule
        Update granularity $\Delta w_{\min}$ & $1\times 10^{-3}$ \\
        Bit length $\text{BL}$ & $32$ \\
        \bottomrule
    \end{tabular}
\end{table}

\textbf{Simulation hardware. }
We conduct our experiments on one NVIDIA RTX 3090 GPU, which has 24GB of memory and a maximum power of 350W. 
The simulations take from 30 minutes to 5 hours, depending on model sizes and datasets.

\textbf{Statistical Significance. }
The simulation data reported in all tables is repeated three times. The randomness originates from the data shuffling, random initialization, and random noise in the analog hardware.
The mean and standard deviation are calculated using {\em statistics} library.

\subsection{Power and Exponential Response Functions}
We consider two types of response functions in our simulations: power and exponential response functions with dynamic ranges 
$[-\tau, \tau]$,
The \emph{power response} is a power function, given by
\begin{align}
  q_{+}(w) = \lp 1 - \frac{w}{\tau}\rp^{\gamma_{\texttt{res}}}
    , 
    \quad\quad
  q_{-}(w) = \lp 1 + \frac{w}{\tau}\rp^{\gamma_{\texttt{res}}}
\end{align}
which can be changed by adjusting the dynamic radius $\tau$ and shape parameter $\gamma_{\texttt{res}}$.
We also consider the \emph{exponential response}, whose response is an exponential function, defined by
\begin{align}
  q_{+}(w) = \frac{\exp\lp \gamma_{\texttt{res}}(1 - {w}/{\tau})\rp-1}{\exp\lp \gamma_{\texttt{res}}\rp-1}
    , 
    \quad\quad
  q_{-}(w) = \frac{\exp\lp \gamma_{\texttt{res}}(1 + {w}/{\tau})\rp-1}{\exp\lp \gamma_{\texttt{res}}\rp-1}.
\end{align}
It could be checked that the boundary of their dynamic ranges are $\tau^{\max}=\tau$ and $\tau^{\min}=-\tau$, while the symmetric point is 0, as required by Corollary \ref{theorem:TT-convergence-scvx-final}.
Figure \ref{fig:response-factor-pow-exp} illustrates how the response functions change with different $\gamma_{\texttt{res}}$. 

\begin{figure*}[!h]
    \includegraphics[width=1.\linewidth]{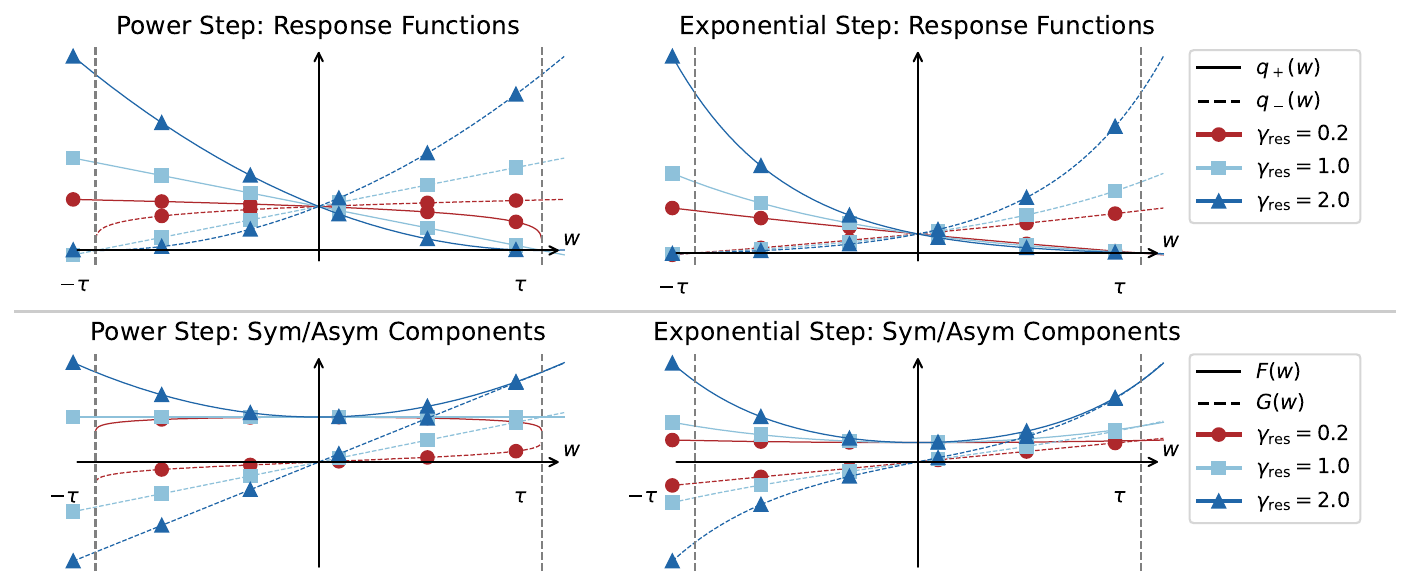}
    \vspace{-0.2cm}
    \caption{Examples of response functions. The dependence of the response function on the weight $w$ can grow at various rates, including but not limited to power (Left) or exponential rate (Right). $\tau$ is the radius of the dynamic range, and $\gamma_{\texttt{res}}$ is a parameter that needs to be determined by physical measurements. 
    }
    \label{fig:response-factor-pow-exp}
\end{figure*} 

\subsection{Least squares problem}
\label{section:simulation-toy}
In Figure \ref{fig:TT-fails} (see Section \ref{section:main-results}), we consider the least squares problem 
on a synthetic dataset and a ground truth $W^*\in\reals^{D}$.
The problem can be formulated by
\begin{align}
    \min_{W\in\reals^{D}} f(W) := \frac{1}{2}\|AW-b\|^2
    = \frac{1}{2}\|A(W-W^*)\|^2.
\end{align}
The elements of $W^*$ are sampled from a Gaussian distribution with mean $0$ and variance $\sigma^2_{W^*}$.
Consider a matrix $A\in\reals^{D_{\text{out}}\times D}$ of size $D=50$ and $D_{\text{out}}=100$ whose elements are sampled from a Gaussian distribution with variance $\sigma^2_A$.
The label $b\in\reals^{D_{\text{out}}}$ is generated by $b=AW^*$ where $W^*$ are sampled from a standard Gaussian distribution with $\sigma^2_{W^*}$. 
The response granularity $\Delta w_{\min}$=1e-4 while $\tau=3.5$.
The maximum bit length is 8. 
The variance are set as $\sigma^2_A=1.00^2$, $\sigma^2_{W^*}=0.5^2$.

\subsection{Classification problem}
We conduct training simulations of image classification tasks on a series of real datasets.

\textbf{3-FC @ MNIST. }
Following the setting in \cite{gokmen2016acceleration}, we train a model with 3 fully connected layers. The hidden sizes are 256 and 128. The activation functions are Sigmoid.
The learning rates are $\alpha=0.1$ for \DigitalSGD, $\alpha=0.05, \beta=0.01$ for \texttt{Analog\;SGD} and \texttt{Tiki-Taka}. The batch size is 10 for all algorithms.
In Figure \ref{figure:FCN-CNN-MNIST}, the power response functions with $\gamma_{\text{res}}=0.5$ are used, and various $\tau$ are used as indicated in the legend.

\textbf{CNN @ MNIST. }
We train a convolutional neural network, which contains 2 convolutional layers, 2 max-pooling layers, and 2 fully connected layers. The activation functions are Tanh.
The first two convolutional layers use 5$\times$5 kernels with 16 and 32 kernels, respectively. 
Each convolutional layer is followed by a subsampling layer implemented by the max pooling function over non-overlapping pooling windows of size 2 $\times $ 2. 
The output of the second pooling layer, consisting of 512 neuron activations, feeds into a fully connected layer consisting of 128 tanh neurons, which is then connected to a 10-way softmax output layer. 
The learning rates are set as $\alpha=0.1$ for \DigitalSGD, $\alpha=0.05, \beta=0.01$ for \texttt{Analog\;SGD} are \ResidualLearning/\texttt{Tiki-Taka}. The batch size is 8 for all algorithms.
In Figure \ref{figure:FCN-CNN-MNIST}, the power response functions with $\gamma_{\text{res}}=0.5$ are used, and various $\tau$ are used as indicated in the legend.

\textbf{ResNet/MobileNet @ CIFAR10/CIFAR100.}
We train different models from the ResNet family, including ResNet18, 34, and 50. The base model is pre-trained on the ImageNet dataset.
The last fully connected layer is replaced by an analog layer. 
The learning rates are set as $\alpha=0.075$ for \DigitalSGD, $\alpha=0.075, \beta=0.01$ for \texttt{Analog\;SGD}, \ResidualLearning/\texttt{Tiki-Taka}, \texttt{Tiki-Taka\;v2}, and \ResidualLearning\;\texttt{v2}. 
\TT~adopts $\gamma=0.4$ unless stated otherwise.
The batch size is 128 for all algorithms.

\subsection{Additional performance on real datasets}
We train different models from the MobileNet family, including MobileNet2, MobileNetV3L, MobileNetV3S. The base model is pre-trained on ImageNet dataset.
The last fully connected layer is replaced by an analog layer. 
The learning rates are set as $\alpha=0.075$ for \DigitalSGD, $\alpha=0.075, \beta=0.01$ for \texttt{Analog\;SGD} or \texttt{Tiki-Taka}. 
\TT~adopts $\gamma=0.4$ unless stated otherwise.
The batch size is 128 for all algorithms. 
Power response function with $\gamma_{\texttt{res}}=4.0$ and $\tau=0.05$ is used in the simulations.

\textbf{ResNet @ CIFAR10/CIFAR100.} We fine-tune three models from the ResNet family with different scales on CIFAR10/CIFAR100 datasets. The power response functions with $\gamma_{\text{res}}=3.0$ and $\tau=0.1$, and the exponential response functions with $\gamma_{\text{res}}=4.0$ and $\tau=0.1$ are used, whose results are shown in Table \ref{table:CIFAR-fine-tune-pow} and \ref{table:CIFAR-fine-tune-exp}, respectively.
The results show that the \TT~outperforms \AnalogSGD~by about $1.0\%$ in most of the cases in ResNet34/50, and the gap even reaches about $10.0\%$ for ResNet18 training on the CIFAR100 dataset. 

\begin{table*}[h!]
    \centering
    \begin{tabular}{c|c|c|c|c|c}
    \toprule
    &\multicolumn{5}{c}{\textbf{CIFAR10}} \\
    \cmidrule(lr){2-6}
        & \texttt{DSGD} & \texttt{ASGD} & \texttt{TT/RL} & \texttt{TTv2} & \texttt{RLv2} \\
    \midrule
        ResNet18 & 95.43\stdv{$\pm$0.13} & 84.47\stdv{$\pm$3.40} & 94.81\stdv{$\pm$0.09} & 95.31\stdv{$\pm$0.05} & 95.12\stdv{$\pm$0.14} \\
        ResNet34 & 96.48\stdv{$\pm$0.02} & 95.43\stdv{$\pm$0.12} & 96.29\stdv{$\pm$0.12} & 96.60\stdv{$\pm$0.05} & 96.42\stdv{$\pm$0.13} \\
        ResNet50 & 96.57\stdv{$\pm$0.10} & 94.36\stdv{$\pm$1.16} & 96.34\stdv{$\pm$0.04} & 96.63\stdv{$\pm$0.09} & 96.56\stdv{$\pm$0.08} \\
    \midrule
    &\multicolumn{5}{c}{\textbf{CIFAR100}} \\
    \cmidrule(lr){2-6}
        & \texttt{DSGD} & \texttt{ASGD} & \texttt{TT/RL} & \texttt{TTv2} & \texttt{RLv2} \\
    \midrule
        ResNet18 & 81.12\stdv{$\pm$0.25} & 68.98\stdv{$\pm$1.01} & 76.17\stdv{$\pm$0.23} & 78.56\stdv{$\pm$0.29} & 79.83\stdv{$\pm$0.13} \\
        ResNet34 & 83.86\stdv{$\pm$0.12} & 78.98\stdv{$\pm$0.55} & 80.58\stdv{$\pm$0.11} & 81.81\stdv{$\pm$0.15} & 82.85\stdv{$\pm$0.19} \\
        ResNet50 & 83.98\stdv{$\pm$0.11} & 79.88\stdv{$\pm$1.26} & 80.80\stdv{$\pm$0.22} & 82.82\stdv{$\pm$0.33} & 83.90\stdv{$\pm$0.20} \\
    \midrule
    \bottomrule
    \end{tabular}
    \caption{Fine-tuning ResNet models with the \emph{exponential response} on CIFAR10/100 datasets. Test accuracy is reported. \texttt{DSGD}, \texttt{ASGD}, and \texttt{TT} represent \DigitalSGD, \AnalogSGD, \TT, respectively.}
    \label{table:CIFAR-fine-tune-exp}
\end{table*}

\textbf{MobileNet @ CIFAR10/CIFAR100.} We fine-tune three MobileNet models with different scales on CIFAR10/CIFAR100 datasets. The response function is set as the power response with the parameter $\gamma_{\text{res}}=4.0$ and $\tau=0.05$, whose results are shown in Table \ref{table:CIFAR-fine-tune-exp-mobilenet}. In the simulations, the accuracy of \AnalogSGD~drops significantly by about $10\%$ in most cases, while \TT~remains comparable to the \DigitalSGD~with only a slight drop.

\begin{table*}[b]
\small
    \centering
    \begin{tabular}{c|c|c|c|c|c}
        \toprule
        &\multicolumn{5}{c}{\textbf{CIFAR10}} \\
        \cmidrule(lr){2-6}
            & \texttt{DSGD} & \texttt{ASGD} & \texttt{TT/RL} & \texttt{TTv2} & \texttt{RLv2} \\
        \midrule
            MobileNetV2  & 95.28\stdv{$\pm$0.20} & 94.34\stdv{$\pm$0.27} & 95.05\stdv{$\pm$0.11} & 95.20\stdv{$\pm$0.14} & 95.26\stdv{$\pm$0.03} \\
            MobileNetV3S & 94.45\stdv{$\pm$0.10} & 80.66\stdv{$\pm$6.18} & 93.65\stdv{$\pm$0.24} & 93.54\stdv{$\pm$0.06} & 93.79\stdv{$\pm$0.00} \\
            MobileNetV3L & 95.95\stdv{$\pm$0.08} & 80.79\stdv{$\pm$2.97} & 95.39\stdv{$\pm$0.27} & 95.27\stdv{$\pm$0.09} & 95.33\stdv{$\pm$0.08} \\
        \midrule
        &\multicolumn{5}{c}{\textbf{CIFAR100}} \\
        \cmidrule(lr){2-6}
            & \texttt{DSGD} & \texttt{ASGD} & \texttt{TT/RL} & \texttt{TTv2} & \texttt{RLv2} \\
        \midrule
            MobileNetV2  & 80.60\stdv{$\pm$0.18} & 63.41\stdv{$\pm$1.20} & 73.33\stdv{$\pm$0.94} & 78.41\stdv{$\pm$0.15} & 79.60\stdv{$\pm$0.10} \\
            MobileNetV3S & 78.94\stdv{$\pm$0.05} & 51.79\stdv{$\pm$1.05} & 71.14\stdv{$\pm$0.93} & 74.51\stdv{$\pm$0.37} & 75.39\stdv{$\pm$0.00} \\
            MobileNetV3L & 82.16\stdv{$\pm$0.26} & 66.80\stdv{$\pm$1.40} & 78.81\stdv{$\pm$0.52} & 79.56\stdv{$\pm$0.10} & 80.18\stdv{$\pm$0.07} \\
        \midrule
        \bottomrule
    \end{tabular}
    \caption{Fine-tuning MobileNet models with \emph{power response} on CIFAR10/100 datasets. Test accuracy is reported. \texttt{DSGD}, \texttt{ASGD}, and \texttt{TT} represent \DigitalSGD, \AnalogSGD, \TT, respectively.}
    \label{table:CIFAR-fine-tune-exp-mobilenet}
\end{table*}

\subsection{Ablation study on cycle variation}
To verify the conclusion of Theorem \ref{theorem:pulse-update-error} that the error introduced by cycle variation is a higher-order term, we conduct a numerical simulation training on an image classification task on the MNIST dataset using \ac{FCN} or \ac{CNN} network. 
In the pulse update \eqref{analog-pulse-update}, the parameter $\sigma_c$ is varied from 10\% to 120\%, where the noise signal is already larger than the response function signal itself.
The results are shown in Table \ref{table:cycle-variation}. The results show that the test accuracy of both \AnalogSGD~and \TT~is not significantly affected by the cycle variation, which complies with the theoretical analysis.

\begin{table*}[h]
\small
\centering
\begin{tabular}{c|c|c|c|c|c|c}
    \toprule
     & \multicolumn{3}{c|}{FCN} & \multicolumn{3}{c}{CNN} \\ 
     \cmidrule{2-4}\cmidrule{5-7}
     & \texttt{DSGD} & \texttt{ASGD} & \texttt{TT} 
     & \texttt{DSGD} & \texttt{ASGD} & \texttt{TT} \\ 
    \midrule
    $\sigma_c=10\%$ 
    & \multirow{5}{*}{98.17\stdv{$\pm$0.05}} & 97.22\stdv{$\pm$0.21} & 97.66\stdv{$\pm$0.04} 
    & \multirow{5}{*}{99.09\stdv{$\pm$0.04}} & 92.68\stdv{$\pm$0.45} & 98.74\stdv{$\pm$0.07} \\
    $\sigma_c=30\%$  &  & 96.97\stdv{$\pm$0.12} & 97.07\stdv{$\pm$0.12} &  & 93.36\stdv{$\pm$0.55} & 98.89\stdv{$\pm$0.05} \\
    $\sigma_c=60\%$  &  & 96.33\stdv{$\pm$0.21} & 97.70\stdv{$\pm$0.09} &  & 93.07\stdv{$\pm$0.53} & 98.68\stdv{$\pm$0.09} \\
    $\sigma_c=90\%$  &  & 95.99\stdv{$\pm$0.15} & 97.44\stdv{$\pm$0.15} &  & 91.87\stdv{$\pm$0.48} & 98.92\stdv{$\pm$0.02} \\
    $\sigma_c=120\%$ &  & 96.19\stdv{$\pm$0.20} & 96.97\stdv{$\pm$0.20} &  & 91.57\stdv{$\pm$0.58} & 98.85\stdv{$\pm$0.04} \\ 
    \bottomrule
\end{tabular}
\caption{Test accuracy comparison under different cycle variation levels $\sigma_c$ on MNIST dataset. \texttt{DSGD}, \texttt{ASGD}, and \texttt{TT} represent \DigitalSGD, \AnalogSGD, \TT, respectively}
\label{table:cycle-variation}
\end{table*}

\subsection{Ablation study on various response functions}
We also train a \ac{FCN} model on the MNIST dataset under various response functions.
As shown in the figure, larger $\gamma_{\texttt{res}}$ leads to a steeper response function.
The results are shown in Table \ref{table:MNIST-pow-and-exp}.
The accuracy $<15.00$ in the table implies that \AnalogSGD~fails completely at all trials, which is close to random guess.
The results show that \AnalogSGD~works well only when the asymmetric is mild, i.e. $\gamma_{\texttt{res}}$ is small and $\tau$ is large, while \TT~outperforms \AnalogSGD~and achieves comparable accuracy with \DigitalSGD.

\begin{table*}[h]
\centering
\begin{tabular}{c|c|c|c|c|c|c}
\toprule
&  & \multirow{2}{*}{$\texttt{DSGD}$} 
& \multicolumn{2}{c|}{Power response} & \multicolumn{2}{c}{Exponential response} \\ 
\cmidrule{4-7}
&  &  & \texttt{ASGD} & \texttt{TT/RL} & \texttt{ASGD} & \texttt{TT/RL} \\ 
\midrule
\multirow{3}{*}{$\gamma_{\texttt{res}}=0.5$} 
  & $\tau=0.6$ & \multirow{9}{*}{98.17\stdv{$\pm$0.05}} & 96.01\stdv{$\pm$0.26} & 96.92\stdv{$\pm$0.19} & $<$15.00          & 97.27\stdv{$\pm$0.07} \\
  & $\tau=0.7$ &                                        & 97.40\stdv{$\pm$0.15} & 97.05\stdv{$\pm$0.05} & $<$15.00          & 97.39\stdv{$\pm$0.15} \\
  & $\tau=0.8$ &                                        & 97.38\stdv{$\pm$0.10} & 96.82\stdv{$\pm$0.17} & 94.00\stdv{$\pm$0.63} & 97.16\stdv{$\pm$0.16} \\ 
\cmidrule{1-2}\cmidrule{4-5}\cmidrule{6-7}
\multirow{3}{*}{$\gamma_{\texttt{res}}=1.0$} 
  & $\tau=0.6$ &                                        & $<$15.00          & 97.39\stdv{$\pm$0.05} & $<$15.00          & 97.46\stdv{$\pm$0.08} \\
  & $\tau=0.7$ &                                        & $<$15.00          & 97.33\stdv{$\pm$0.05} & $<$15.00          & 97.49\stdv{$\pm$0.04} \\
  & $\tau=0.8$ &                                        & $<$15.00          & 97.34\stdv{$\pm$0.09} & $<$15.00          & 97.25\stdv{$\pm$0.16} \\ 
\cmidrule{1-2}\cmidrule{4-5}\cmidrule{6-7}
\multirow{3}{*}{$\gamma_{\texttt{res}}=2.0$} 
  & $\tau=0.6$ &                                        & $<$15.00          & 96.93\stdv{$\pm$0.15} & $<$15.00          & 97.19\stdv{$\pm$0.16} \\
  & $\tau=0.7$ &                                        & $<$15.00          & 97.27\stdv{$\pm$0.02} & $<$15.00          & 97.72\stdv{$\pm$0.07} \\
  & $\tau=0.8$ &                                        & $<$15.00          & 97.18\stdv{$\pm$0.04} & $<$15.00          & 97.06\stdv{$\pm$0.10} \\ 
\bottomrule
\end{tabular}
\caption{Test accuracy comparison under different response function parameters $\tau$ and $\gamma_{\texttt{res}}$ for FCN training on MNIST dataset with power or exponential response functions. \texttt{DSGD}, \texttt{ASGD}, and \texttt{TT} represent \DigitalSGD, \AnalogSGD, \TT, respectively.}
\label{table:MNIST-pow-and-exp}
\end{table*}

\section{Broader Impact}
\label{sec:broader_impact}

This paper focuses on developing a theoretical analysis for gradient-based training algorithms on a class of generic AIMC hardware, which can be leveraged to boost both energy and computational efficiency of training. While such efficiency gains could, in principle, enable broader and potentially unintended uses of machine learning models, we do not identify any specific societal risks that need to be highlighted in this context. 

\end{document}